\newtheorem{theorem}{Theorem}[section]
\newtheorem{lemma}[theorem]{Lemma}
\definecolor{fxnote}{rgb}{0.8000,0.0000,0.0000}
\colorlet{fxnotebg}{yellow}
\DeclareRobustCommand\onedot{\futurelet\@let@token\@onedot}
\def\@onedot{\ifx\@let@token.\else.\null\fi\xspace}
\def\eg{\emph{e.g}\onedot} 
\def\ie{\emph{i.e}\onedot}
\def\etal{\emph{et al}\onedot}
\title{Self-Weighted Robust LDA for Multiclass Classification with Edge Classes \thanks{Corresponding authors: Xiaoqin Zhang and Zhihui Li.}}
\author{
  Caixia Yan \\
  School of Electronic and Information Engineering \\
  Xi'an Jiaotong University \\
  \texttt{yancaixia@stu.xjtu.edu.cn} \\
   \And
  Xiaojun Chang \\
  Faculty of Information Technology \\
  Monash University \\
  \texttt{xiaojun.chang@monash.edu} \\
  \AND
  Minnan Luo \\
  School of Electronic and Information Engineering \\
  Xi'an Jiaotong University \\
  \texttt{minnluo@mail.xjtu.edu.cn} \\
  \And
  Qinghua Zheng \\
  School of Electronic and Information Engineering \\
  Xi'an Jiaotong University \\
  \texttt{qhzheng@mail.xjtu.edu.cn} \\
  \And
  Xiaoqin Zhang* \\
  College of Computer Science and Artificial Intelligence \\
  Wenzhou University \\
  \texttt{zhangxiaoqinnan@gmail.com}
  \And
  Zhihui Li* \\
  Qilu University of Technology \\
  Shandong Academy of Sciences \\
  \texttt{zhihuilics@gmail.com} \\
  \And
  Feiping Nie \\
  Center for Optical Image Analysis and Learning \\
  Northwestern Polytechnical University \\
  \texttt{feipingnie@gmail.com}
}
\begin{document}
\maketitle

\begin{abstract}
Linear discriminant analysis (LDA) is a popular technique to learn the most discriminative features for multi-class classification. A vast majority of existing LDA algorithms are prone to be dominated by the class with very large deviation from the others, \emph{i.e.}, edge class, which occurs frequently in multi-class classification. First, the existence of edge classes often makes the total mean biased in the calculation of between-class scatter matrix. Second, the exploitation of $\ell_2$-norm based between-class distance criterion magnifies the extremely large distance corresponding to edge class. In this regard, a novel self-weighted robust LDA with $\ell_{2,1}$-norm based pairwise between-class distance criterion, called SWRLDA, is proposed for multi-class classification especially with edge classes. SWRLDA can automatically avoid the optimal mean calculation and simultaneously learn adaptive weights for each class pair without setting any additional parameter. An efficient re-weighted algorithm is exploited to derive the global optimum of the challenging $\ell_{2,1}$-norm maximization problem. The proposed SWRLDA is easy to implement, and converges fast in practice. Extensive experiments demonstrate that SWRLDA performs favorably against other compared methods on both synthetic and real-world datasets, while presenting superior computational efficiency in comparison with other techniques.
\end{abstract}

\keywords{Robust linear discriminant analysis \and dimension reduction \and multi-class classification \and edge class}

\section{Introduction}
As one of the most fundamental problems in data mining, multi-class classification has attracted a surge of research interests \cite{yan2018top,zhao2018multiclass}.
Various models for multi-class classification have been proposed in literature, such as decision trees \cite{holmes2002multiclass}, $k$-Nearest Neighbor \cite{zhang2005extending}, Naive Bayes \cite{farid2014hybrid}, and Support Vector Machines \cite{chang2011libsvm}. However, the efficiency and effectiveness of these methods usually drop exponentially as the feature dimensionality increases due to the ``curse of dimensionality" problem \cite{zhao2017unsupervised,xie2017joint}. 
The original data in real-world applications usually possess very large dimension and contain redundant or noisy features which are useless and even harmful to the separation of multiple categories \cite{wen2018robust}.
Linear Discriminant Analysis (LDA) is thus developed to address this problem by selecting and extracting the most discriminative features for multi-class classification in a supervised way \cite{Ye2007Least,xiong2017daehr,yuan2017multilinear,ChangY17,WangCLSC16}. The basic idea of LDA is to learn an optimal projection matrix which minimizes the variability within each class and simultaneously maximizes the discrepancy between different classes in the embedding space. In such a way, the original high-dimensional data can be transformed into a low-dimensional subspace with high class separability.

Despite their efficacy, most existing LDA algorithms can be easily dominated by the classes with very large deviation from the others (\ie, edge classes), which are more likely to appear as the class number increases \cite{wang2018generalized,zhang2012confused,HanYZCL18,ZhuHCSS17,ChangNMYZ15}.
First, the between-class scatter matrix completely relies on the calculation of total data mean. The average of the whole dataset is generally regarded as the total mean, while edge classes can easily make this mean calculation with zero breakdown point biased \cite{yahaya2016robust}. Second, the exploitation of $\ell_2$-norm based between-class distance criterion is known to be dominated by class pairs with extreme values induced by edge classes. 
As a result, the discrimination between edge and non-edge class can easily dominate the whole classification, while the remaining multiple non-edge classes undergo a large overlap, termed as class separation problem \cite{zhang2012confused,ChengCLHGZ17,zhang2020top}. 
 As the illustration in \Cref{introduction} indicates, the left four classes are located in close proximity to each other, while the ``Black footed Albatross'' class is quite far from them, which can be regarded as an edge class. An ideal projection should distinguish the five classes exactly by making a good trade-off between edge class and non-edge class as shown in \Cref{introduction}.
 However, a worse projection direction is usually learned by existing LDA methods due to the domination of edge class, such that the four non-edge classes overlap with each other seriously, leading to an overall low and suboptimal classification performance. 

Several earlier LDA variants have been proposed to address this issue by taking the estimation of optimal mean into consideration, including M-estimator \cite{martin2006robust}, S-estimators \cite{he2000high}, Minimum Covariance Determinant (MCD) \cite{hubert2004fast} and Minimum Volume Ellipsoid (MVE) \cite{chork1992integrating}. However, these methods usually suffer from intractable computational complexity and still cannot guarantee a better solution to the class separation problem. Other researchers have approached the problem by maximizing the pairwise distance between every two class means. As a result, the original $c$-class fisher criterion is decomposed into $\frac{1}{2}c(c-1)$ pairwise between-class distances, corresponding to the worst case between-class separation criterion \cite{zhang2010worst}. 
However, the class pairs with extreme large distance can still dominant the whole classification due to the exploitation of $\ell_{2}$-norm distance criterion. To focus more on the closer ones, many weighting methods have been proposed to learn different weights for each class pair. Loog \etal \cite{loog2001multiclass} proposed an approximate pairwise accuracy criterion such that the importance of each class pair depends on the approximation of Bayes error rate. This method can be easily solved by eigenvalue decomposition as traditional LDA; however, the learned approximate pairwise weights may not be the optimal ones because it is calculated in the original high-dimensional space. Tao \etal \cite{tao2009geometric} assumed that all the classes are sampled from homoscedastic Gaussians with identical covariance matrices and developed three new criteria to maximize the geometric mean of Kullback-Leibler (KL) divergences between different pairs of classes. Bian \etal \cite{bian2008harmonic} proposed to maximize the harmonic mean of all pairs of symmetric KL divergences under the homoscedastic Gaussian assumption. However, gradient method is adopted to solve the proposed challenging problems in \cite{tao2009geometric,bian2008harmonic,LuoCNYHZ18,LuoNCYHZ18,ChengGCSHZ18}, which converge very slowly in many cases. Bian \etal \cite{bian2011max} also presented a max-min distance analysis to guarantee the separation of all class pairs by maximizing the minimum between-class distance. However, this method needs to solve a challenging non-smooth min-max problem with orthonormal constraints, which can only obtain an approximate solution by using a sequential convex relaxation algorithm.

\begin{figure}[t] 	
	\centering 		
	\includegraphics[width=0.6\linewidth]{}
	\caption{Bird images from five categories. The solid and dashed line represent the projection direction and the class boundary projection respectively.}	
	\label{introduction}	
\end{figure}
Taking all the discussed challenges into consideration, we propose a novel formulation of self-weighted robust LDA for multi-class classification with edge classes, termed as SWRLDA. The optimal mean estimation is automatically avoided by maximizing the sum of distances between every two class centers, rather than that between each class center and the total mean.
Considering the fact that $\ell_{2}$-norm is prone to be dominated by large between-class distances corresponding to edge class, $\ell_{2,1}$-norm is exploited to further reduce the effect of edge classes and simultaneously enhance the robustness. 
An efficient re-weighted algorithm is adopted to optimize all the projection directions simultaneously without increasing the time complexity in contrast to the state-of-the-art LDA methods \cite{zhao2018new,wen2018robust,LuoNCYHZ16,iosifidis2013optimal,NieZYCLS17,ChangHSLYH18}. By optimizing the objective function of SWRLDA, the self-adaptive weights of each class pair can be learned implicitly in the embedding space. In this way, SWRLDA is capable of investigating the contribution of each class pair to classification automatically, instead of tuning additional parameters. Besides, the optimal weight corresponding to each class pair is learned based on their distances in the desired subspace such that it is more robust when the original data contain large noise. Extensive experiments on both synthetic and real-world datasets demonstrate the superior classification performance and high computational efficiency of the proposed method.

\section{Related Work}
\subsection{Linear Discriminant Analysis}
In previous literature, various extensions of naive LDA have been developed for further enhancement. For example, orthogonal LDA (OLDA) \cite{ye2006null}, uncorrelated LDA (ULDA) \cite{ye2006feature} were proposed to address the small sample size problem of naive LDA. To capture the geometric structure of data, discriminant locality preserving projection ($\ell_{2,1}$-DLPP) \cite{liu2018l_}, discriminative locality alignment (DLA) \cite{zhang2008discriminative}, and manifold partition discriminant analysis (MPDA) \cite{zhou2016manifold} were proposed. To classify high-dimensional data without feature selection, Peng \etal \cite{peng2016supervised,peng2019discriminative} introduced LDA based discriminant ridge models to classification.
However, all those methods maximize the sum of distances between each class mean and the total data mean \cite{rao1948utilization,ChangSWLL14,cai2008srda,LiNCNZY18,ChangYYX16,XueNWCSY17}, instead of explicitly maximizing the distance between every two class pairs. In such a way, it is inevitable that several classes are totally overlapped in the learned subspace, especially in multi-class classification with edge classes. To achieve the separability of every binary classes, FLDA is proposed to maximize the pairwise distance between every two class means, but not just the average distances \cite{fukunaga2013introduction}.
Specifically, the between-class scatter matrix of FLDA is usually rewritten as $\mathbf{S}_b=\sum_{i=1}^{c-1}\sum_{j=i+1}^{c}\frac{n_i n_j}{n^2}(\mathbf{\overline{x}}_i-\mathbf{\overline{x}}_j)(\mathbf{\overline{x}}_i-\mathbf{\overline{x}}_j)^\top$, which is measured by distances between each class pair instead of that between each class center and the total mean. Based on this reformulation, the $c$-class fisher criterion can be decomposed to $\frac{1}{2}c(c-1)$ two-class fisher criterion. FLDA is typically solved by a two-step algorithm, which first learns a whitened space to obtain the optimal distance metric, and then conducts dimension reduction in the learned whitened space. However, FLDA still cannot fully address the class separation problem. First, there is no guarantee that FLDA can prevent the incompatibility and information loss between the two independent stages, making the final result unreliable. Second, the quadratic distances between all the class pairs are maximized with equal weights, which is easily dominated by the classes located remotely from the others. 
\subsection{Weighted Pairwise Distance Criterion}
To attach more importance to the closer class pairs, many weighting schemes have been proposed to learn different weights for different class pairs. For example, Loog \etal \cite{loog2001multiclass} developed a weighting function based on the Mahanalobis distance between the $i$th and $j$th class in the original space, which can be efficiently solved by eigen-decomposition as in traditional LDA. However, the weights are simply calculated according to the distances in the original space, but not calculated according to the distances in the optimal subspace. Therefore, the calculated weights might not the optimal weights, especially when the data distribution in the optimal subspace changes largely from the original space or when the data contain large noise.
GMSS \cite{tao2009geometric} and HMSS \cite{bian2008harmonic} are also weighting methods to reduce the class separation problem. GMSS \cite{tao2009geometric} was proposed to maximize the weighted geometric mean of between-class distances under the homoscedastic Gaussian assumption, \ie, $\max_{\mathbf{W}^\top\mathbf{S}_w\mathbf{W}=\mathbf{I}}\sum_{i\neq j}p_ip_j \log\Delta_{ij}$.
Bian and Tao \etal \cite{bian2008harmonic} presented HMSS to further reduce the class separation problem by maximizing the weighted harmonic mean, \ie, $\max_{\mathbf{W}^\top\mathbf{S}_w\mathbf{W}=\mathbf{I}}-\sum_{i\neq j}p_ip_j {\Delta_{ij}}^{-1}$, where ${\Delta_{ij}}$ is the distance between the $i$th and $j$th class in the transformed subspace. However, since both HMSS and GMSS method don't have closed-form solution, gradient optimization is adopted to solve them, which converges very slow in some cases. Recently, many authors have proposed to seek such a transformation matrix $\mathbf{W}$ that the class pairs with smallest projected distance would be as far as possible by optimizing the following problem, \ie,  
$\max_{\mathbf{W}^\top\mathbf{S}_w\mathbf{W}=\mathbf{I}}\min_{i\neq j}p_ip_j {\Delta_{ij}}$ \cite{bian2011max,xu2010dimensionality,yu2011distance,zhang2010worst,GongTCY19}.
To guarantee the separation of all the class pairs, Abou-Moustafa \etal \cite{abou2010pareto} further proposed to maximize all the pairwise distances simultaneously. Although these methods
have reported improved performance compared with FLDA, they are all based on some complex iterative optimization procedures to solve the models, which makes them not scalable for large-scale dimension reduction problems.

\section{The Proposed Methodology} 
In this section, we will elaborate the formulation of the proposed self-weighted robust LDA model for multi-class classification. The notations used in this paper are introduced as follows. Following the standard notation, we denote vector and matrix by bold lowercase letters $(\eg, \mathbf{a})$ and bold uppercase letters $(\eg, \mathbf{A})$ respectively. The $\ell_{2}$-norm of vector $\mathbf{a}\in\mathbb{R}^n$ is written as ${\|\mathbf{a}\|}_2^2=\sum_{i=1}^{n}{a_i}^2$ and the $\ell_{2,1}$-norm of matrix $\mathbf{A}\in\mathbb{R}^{m\times n}$ is defined as ${\|\mathbf{A}\|}_{2,1}=\sum_{i=1}^{m}\sqrt{\sum_{j=1}^{n}{\mathbf{A}_{i,j}}^2}$, where $a_i$ refers to the $i$-th element of vector $\mathbf{a}$ and $\mathbf{A}_{i,j}$ denotes the element corresponding to the $i$th row and $j$th column of matrix $\mathbf{A}$.
\subsection{Problem Formulation}
Given training data $\mathbf{X}=[\mathbf{x}_1,\mathbf{x}_2,\dots,\mathbf{x}_n]\in\mathbb{R}^{d\times n}$ belonging to $c$ classes for dimension reduction, where $n$ refers to the number of data points and each data point is represented by a $d$-dimensional feature vector. The goal of LDA is to learn a linear projection matrix $\mathbf{W}\in\mathbb{R}^{d\times m}(m\ll d)$, which can transform original $d$-dimensional data to a $m$-dimensional embedding subspace with higher class separability. This is achieved by maximizing the between-class difference and meanwhile minimizing the within-class variability in the projected space, which can be formulated as the maximization of Fisher's criterion:
\begin{align}
\label{fisher}
\max_{\mathbf{W}}\mathbf{Tr}(\frac{(\mathbf{W}^\top\mathbf{S}_b\mathbf{W})}{(\mathbf{W}^\top\mathbf{S}_w\mathbf{W})}).
\end{align}
Specifically, $\mathbf{S}_w\in\mathbb{R}^{d\times d}$ and $\mathbf{S}_b\in\mathbb{R}^{d\times d}$ refer to the within-class and between-class scatter matrix respectively, which are defined as
\begin{align}
\label{sw}
&\mathbf{S}_w=\sum_{i=1}^{c}\sum_{j=1}^{n_i}(\mathbf{x}_j-\mathbf{\overline{x}}_i)(\mathbf{x}_j-\mathbf{\overline{x}}_i)^\top,\\
\label{sb}
&\mathbf{S}_b=\sum_{i=1}^c\frac{n_i}{n}(\mathbf{\overline{x}}_i-\mathbf{\overline{x}})(\mathbf{\overline{x}}_i-\mathbf{\overline{x}})^\top,
\end{align}
where $n_i$ refers to the number of data points belonging to the $i$th class. Moreover, $\mathbf{\overline{x}}=\frac{1}{n}\sum_{i=1}^{c}n_i\mathbf{\overline{x}}_i$ is the total mean of $\mathbf{X}$ and $\mathbf{\overline{x}}_i=\frac{1}{n_i}\sum_{j=1}^{n_i} \mathbf{x}_j$ is the mean of the $i$th class.
To guarantee the uniqueness of optimal solution, optimization problem (\ref{fisher}) can be reformulated by replacing the denominator with an equality constraint as: 
\begin{align}
\label{fisher2}
\max_{\mathbf{W}} \sum_{i=1}^{c}\frac{n_i}{n}\|\mathbf{W}^\top(\mathbf{\overline{x}}_i-\mathbf{\overline{x}})\|_2^2,\ \ \  s.t.\ \ \ \mathbf{W}^\top\mathbf{S}_w\mathbf{W}=\mathbf{I},
\end{align}
where $\mathbf{I}\in\mathbb{R}^{m\times m}$ is an identity matrix. In Eq. (\ref{fisher2}), the between-class distance is measured by subtracting the total mean from each class center, while the average of data is directly regarded as the optimal mean. In this way, the between-class scatter matrix calculation can be  easily dominated by edge classes, which decreases the robustness of LDA model. To this end, we equivalently reformulate the between-class distance criterion in traditional LDA to avoid the optimal mean calculation by introducing the following lemma and theorem.  

\begin{lemma}
	\label{lemma1}	
	If $\mathbf{p}\in\mathbb{R}^{c}\geq0$, $\mathbf{p}^\top\mathbf{1}=1$, $\mathbf{\overline{u}}=\sum_{j=1}^{c}p_j\mathbf{u}_j\in\mathbb{R}^{d}$, then the following equation holds:
	\begin{align}
	\label{lemma}
	\sum_{i=1}^{c}p_i(\mathbf{u}_i-\mathbf{\overline{u}})^\top(\mathbf{u}_i-\mathbf{\overline{u}})=\sum_{i,j=1}^{c}\frac{p_ip_j}{2}(\mathbf{u}_i-\mathbf{u}_j)^\top(\mathbf{u}_i-\mathbf{u}_j),
	\end{align}
	where $p_i$ and $p_j$ refer to the $i$th and $j$th element of $\mathbf{p}$.
\end{lemma}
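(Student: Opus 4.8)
The plan is to prove this by a direct expansion of both sides, showing that each reduces to the same closed form, namely the weighted second moment minus the squared norm of the weighted mean. This is the well-known ``variance equals half the mean pairwise squared distance'' identity, generalized to nonnegative weights summing to one, so no deep idea is needed; the work is purely in carefully handling the double sums and invoking the normalization $\mathbf{p}^\top\mathbf{1}=1$ at the right moments.

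First I would expand the right-hand side. Writing $(\mathbf{u}_i-\mathbf{u}_j)^\top(\mathbf{u}_i-\mathbf{u}_j)=\mathbf{u}_i^\top\mathbf{u}_i-2\mathbf{u}_i^\top\mathbf{u}_j+\mathbf{u}_j^\top\mathbf{u}_j$ and summing against $\tfrac{1}{2}p_ip_j$, I would use $\sum_{j=1}^c p_j=1$ to collapse $\sum_{i,j}p_ip_j\,\mathbf{u}_i^\top\mathbf{u}_i=\sum_i p_i\,\mathbf{u}_i^\top\mathbf{u}_i$ (and symmetrically for the $\mathbf{u}_j^\top\mathbf{u}_j$ term), while the cross term satisfies $\sum_{i,j}p_ip_j\,\mathbf{u}_i^\top\mathbf{u}_j=\big(\sum_i p_i\mathbf{u}_i\big)^\top\big(\sum_j p_j\mathbf{u}_j\big)=\mathbf{\overline{u}}^\top\mathbf{\overline{u}}$ by definition of $\mathbf{\overline{u}}$. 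Collecting terms gives that the right-hand side equals $\sum_{i=1}^c p_i\,\mathbf{u}_i^\top\mathbf{u}_i-\mathbf{\overline{u}}^\top\mathbf{\overline{u}}$.

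Next I would expand the left-hand side in the same way: $\sum_i p_i(\mathbf{u}_i-\mathbf{\overline{u}})^\top(\mathbf{u}_i-\mathbf{\overline{u}})=\sum_i p_i\mathbf{u}_i^\top\mathbf{u}_i-2\,\mathbf{\overline{u}}^\top\sum_i p_i\mathbf{u}_i+\big(\sum_i p_i\big)\mathbf{\overline{u}}^\top\mathbf{\overline{u}}$. Substituting $\sum_i p_i\mathbf{u}_i=\mathbf{\overline{u}}$ and $\sum_i p_i=1$ makes the last two terms combine into $-\mathbf{\overline{u}}^\top\mathbf{\overline{u}}$, so the left-hand side equals $\sum_{i=1}^c p_i\,\mathbf{u}_i^\top\mathbf{u}_i-\mathbf{\overline{u}}^\top\mathbf{\overline{u}}$ as well. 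Since both sides equal the same expression, the identity follows.

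I do not anticipate any real obstacle here; if anything, the only thing to be slightly careful about is not double-counting when switching between the symmetric double sum $\sum_{i,j=1}^c$ used in the statement and the ordered sum $\sum_{i<j}$ that appears in the $\mathbf{S}_b$ reformulation later, and keeping track of the factor $\tfrac12$ accordingly. The nonnegativity of $\mathbf{p}$ is not actually used in the algebra; only $\mathbf{p}^\top\mathbf{1}=1$ and the definition of $\mathbf{\overline{u}}$ matter, which I would note in passing.
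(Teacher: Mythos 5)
Your proof is correct and follows essentially the same route as the paper: expand both sides, use $\sum_i p_i=1$ and the definition of $\mathbf{\overline{u}}$, and observe that both reduce to the common form $\sum_{i=1}^c p_i\,\mathbf{u}_i^\top\mathbf{u}_i-\mathbf{\overline{u}}^\top\mathbf{\overline{u}}$ (the paper writes this as $\sum_i p_i\mathbf{u}_i^\top\mathbf{u}_i-\sum_i p_i\mathbf{u}_i^\top\sum_j p_j\mathbf{u}_j$, which is the same quantity). Your side remark that nonnegativity of $\mathbf{p}$ is never needed is also accurate.
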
 
\begin{proof}
	Based on the fact that $\mathbf{\overline{u}}=\sum_{j=1}^{c}p_j\mathbf{u}_j$, the left side of Eq. (\ref{lemma}) can be derived as $\sum_{i=1}^{c}p_i(\mathbf{u}_i-\sum_{j=1}^cp_j\mathbf{u}_j)^\top(\mathbf{u}_i-\sum_{j=1}^cp_j\mathbf{u}_j)$, which can be further decomposed into the sum of three items, \ie, 
	\begin{align}
	\label{left}
	\sum_{i=1}^cp_i\mathbf{u}_i^\top\mathbf{u}_i-2\sum_{i=1}^cp_i\mathbf{u}_i^\top\sum_{j=1}^cp_j\mathbf{u}_j+\sum_{j=1}^{c}p_j\mathbf{u}_j^\top\sum_{j=1}^{c}p_j\mathbf{u}_j.
	\end{align}
	By combining the second and third terms of Eq. (\ref{left}), we can arrive at	
	\begin{align}
	\label{l3}
	\sum_{i=1}^{c}p_i\mathbf{u}_i^\top\mathbf{u}_i-\sum_{i=1}^cp_i\mathbf{u}_i^\top\sum_{j=1}^{c}p_j\mathbf{u}_j.
	\end{align}
	Subsequently, the right side of Eq. (\ref{lemma}) can be reformulated as
	\begin{align}
	\frac{1}{2}\sum_{i,j=1}^{c}(p_ip_j\mathbf{u}_i^\top\mathbf{u}_i+p_ip_j\mathbf{u}_j^\top\mathbf{u}_j-2p_ip_j\mathbf{u}_i^\top\mathbf{u}_j).
	\end{align}
	By combining the first two items, it reduces to Eq. (\ref{l3}).
	The proof is completed.
\end{proof}
\begin{theorem}
	\label{theorem2}
	The optimization problem (\ref{fisher2}) is equivalent to 
	\begin{align}
	\label{fisher4}
	\max_{\mathbf{W}} \sum_{i=1}^{c}\sum_{j=1}^{c}\frac{n_in_j}{2n^2}\|\mathbf{W}^\top(\mathbf{\overline{x}}_i-\mathbf{\overline{x}}_j)\|_2^2,\ \ \  s.t.\ \ \ \mathbf{W}^\top\mathbf{S}_w\mathbf{W}=\mathbf{I}.
	\end{align}
\end{theorem}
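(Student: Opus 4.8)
The plan is to apply Lemma~\ref{lemma1} directly, with a suitable identification of the abstract quantities $\mathbf{p}$, $\mathbf{u}_i$, and $\mathbf{\overline{u}}$. Concretely, I would set $p_i = n_i/n$ for each class $i$ and $\mathbf{u}_i = \mathbf{W}^\top\mathbf{\overline{x}}_i \in \mathbb{R}^{m}$. The first step is to verify the hypotheses of the lemma: each $n_i \geq 0$ gives $\mathbf{p}\geq 0$, and $\sum_{i=1}^{c} n_i = n$ gives $\mathbf{p}^\top\mathbf{1} = 1$. The induced mean is then $\mathbf{\overline{u}} = \sum_{j=1}^{c} p_j \mathbf{u}_j = \mathbf{W}^\top\bigl(\sum_{j=1}^{c} \tfrac{n_j}{n}\mathbf{\overline{x}}_j\bigr) = \mathbf{W}^\top\mathbf{\overline{x}}$, where the last equality uses the definition $\mathbf{\overline{x}} = \tfrac{1}{n}\sum_{i=1}^{c} n_i \mathbf{\overline{x}}_i$ appearing below Eq.~(\ref{sb}). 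Note this step also needs that $\mathbf{W}$ can be pulled inside the convex combination defining $\mathbf{\overline{u}}$, which is immediate by linearity.

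Next I would substitute this identification into both sides of Eq.~(\ref{lemma}). The left-hand side becomes $\sum_{i=1}^{c}\tfrac{n_i}{n}\bigl(\mathbf{W}^\top(\mathbf{\overline{x}}_i-\mathbf{\overline{x}})\bigr)^\top\bigl(\mathbf{W}^\top(\mathbf{\overline{x}}_i-\mathbf{\overline{x}})\bigr) = \sum_{i=1}^{c}\tfrac{n_i}{n}\|\mathbf{W}^\top(\mathbf{\overline{x}}_i-\mathbf{\overline{x}})\|_2^2$, which is exactly the objective of problem (\ref{fisher2}). The right-hand side becomes $\sum_{i,j=1}^{c}\tfrac{n_in_j}{2n^2}\|\mathbf{W}^\top(\mathbf{\overline{x}}_i-\mathbf{\overline{x}}_j)\|_2^2$, which is precisely the objective of problem (\ref{fisher4}). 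Hence, by Lemma~\ref{lemma1}, the two objective functions agree pointwise for every matrix $\mathbf{W}\in\mathbb{R}^{d\times m}$.

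To conclude, I would observe that both optimization problems share the identical feasible set $\{\mathbf{W} : \mathbf{W}^\top\mathbf{S}_w\mathbf{W} = \mathbf{I}\}$, so since their objectives coincide on this common domain, they attain the same optimal value and have the same set of maximizers — which is the asserted equivalence. The main point requiring care is simply the bookkeeping in applying Lemma~\ref{lemma1}: checking that the weights $n_i/n$ are nonnegative and sum to one, and correctly recognizing $\mathbf{W}^\top\mathbf{\overline{x}}$ as the weighted mean $\mathbf{\overline{u}}$. Beyond this routine verification there is no genuine obstacle, since the algebraic substance has already been discharged in the proof of Lemma~\ref{lemma1}.
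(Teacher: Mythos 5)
Your proposal is correct and follows essentially the same route as the paper: both apply Lemma~\ref{lemma1} with the identification $p_i=n_i/n$, $\mathbf{u}_i=\mathbf{W}^\top\mathbf{\overline{x}}_i$, $\mathbf{\overline{u}}=\mathbf{W}^\top\mathbf{\overline{x}}$, so the two objectives coincide pointwise on the common constraint set $\mathbf{W}^\top\mathbf{S}_w\mathbf{W}=\mathbf{I}$. Your explicit verification of the lemma's hypotheses (nonnegativity, summing to one, and linearity pulling $\mathbf{W}^\top$ inside the mean) is slightly more detailed than the paper's, but the substance is identical.
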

\begin{proof}
	Let $\mathbf{p}$, $\mathbf{u}_i$ and $\mathbf{\overline{u}}$ in \Cref{lemma1} denote the vector  $[\frac{n_1}{n},\frac{n_2}{n},\dots,\frac{n_c}{n}]\in\mathbb{R}^c$, $\mathbf{W}^\top\mathbf{\overline{x}}_i\in\mathbb{R}^m(\forall i)$ and 
	$\mathbf{W}^\top\mathbf{\overline{x}}\in\mathbb{R}^m$ respectively. Based on this denotation, the optimization problem (\ref{fisher2}) is equivalent to the left side of Eq. (\ref{lemma}), while the optimization problem (\ref{fisher4}) is equivalent to the right side of Eq. (\ref{lemma}).
	It's evident that the optimization problem (\ref{fisher2}) and (\ref{fisher4}) equal to each other according to \Cref{lemma1}. The proof is completed.
\end{proof}
Different from traditional LDA formulation based on $c$-class fisher criterion in (\ref{fisher2}), the optimal mean calculation can be automatically avoided in (\ref{fisher4}), where the between-class variability is measured by the sum of distances between every two class centers, instead of that between each class center and the total mean. 
However, with the exploitation of $\ell_2$-norm based distance criterion, the learned projection directions are still easily dominated by the class pairs with very large deviation. For this issue, we further reformulate problem (\ref{fisher4}) via $\ell_{2,1}$-norm maximization as
\begin{align}
\label{fisher5}
\max_{\mathbf{W}} \sum_{i=1}^{c}\sum_{j=1}^{c}\frac{n_in_j}{2n^2}\|\mathbf{W}^\top(\mathbf{\overline{x}}_i-\mathbf{\overline{x}}_j)\|_2,\ \ \  s.t.\ \ \ \mathbf{W}^\top\mathbf{S}_w\mathbf{W}=\mathbf{I},
\end{align} 
where $\sum_{i=1}^{c}\sum_{j=1}^{c}\|\mathbf{W}^\top(\mathbf{\overline{x}}_i-\mathbf{\overline{x}}_j)\|_2$ is essentially the sum of multiple matrix $\ell_{2,1}$-norms. Intuitively, the distance criterion $\|\mathbf{W}^\top(\mathbf{\overline{x}}_i-\mathbf{\overline{x}}_j)\|_2$ is not quadratic and thus edge classes would have less importance to it than the squared distance criterion in (\ref{fisher4}).
Whenever edge class occurs in a classification problem, the projection direction of traditional LDA would be completely destroyed because its between-class distance criterion depends entirely on the biased mean calculation and at the same time overemphasizes the effect of class pairs with large deviation. 
In terms of SWRLDA, only the distance calculation corresponding to the edge class will be affected with the  exploitation of ``avoiding optimal mean'' strategy and this effect is further reduced via $\ell_{2,1}$-norm based distance criterion. Note that no extra weight
factor for each class pair is explicitly included in this objective function. However, by solving problem (\ref{fisher5}) with a re-weighted optimization algorithm, we will demonstrate that this formulation of self-weighted robust LDA can indeed adaptively learn an optimal weight for each class pair without additional parameters.

\subsection{Optimization Procedure}
Considering the complexity of directly solving the proposed non-smooth problem, we adopt an efficient iterative re-weighted algorithm to reformulate the proposed optimization problem (\ref{fisher5}) as 
\begin{align}
\label{opt4}
\max_{\mathbf{W}} \sum_{i=1}^{c}\sum_{j=1}^{c}\frac{n_in_j}{2n^2}\mathbf{s}_{ij}^\top\mathbf{W}^\top(\mathbf{\overline{x}}_i-\mathbf{\overline{x}}_j),\ \ \  s.t.\ \ \ \mathbf{W}^\top\mathbf{S}_w\mathbf{W}=\mathbf{I},
\end{align} 
where $\mathbf{s}_{ij}$ is set to be stationary, \ie, 
\begin{align}
\label{opt5}
\mathbf{s}_{ij}=
\left\{
\begin{array}{cl}
\frac{\mathbf{W}^\top(\mathbf{\overline{x}}_i-\mathbf{\overline{x}}_j)}{\|\mathbf{W}^\top(\mathbf{\overline{x}}_i-\mathbf{\overline{x}}_j)\|_2}, & \hbox{if $\|\mathbf{W}^\top(\mathbf{\overline{x}}_i-\mathbf{\overline{x}}_j)\|_2\neq0$;} \\
\mathbf{0}, & \hbox{if $\|\mathbf{W}^\top(\mathbf{\overline{x}}_i-\mathbf{\overline{x}}_j)\|_2=0$.}
\end{array}
\right.
\end{align} 
Note that an unknown variable $\mathbf{s}_{ij}$ depending on $\mathbf{W}$ is introduced into the optimization problem. Thus, alternating optimization scheme is exploited to update the two variables iteratively. In each iteration, $\mathbf{s}_{ij}$ is updated with the current solution of $\mathbf{W}$, and then $\mathbf{W}$ is recalculated with the updated $\mathbf{s}_{ij}$. This iterative procedure is repeated until the objective function converges to a certain value. Note that $\frac{1}{\|\mathbf{W}^\top(\mathbf{\overline{x}}_i-\mathbf{\overline{x}}_j)\|_2}$ can be naturally treated as the weight corresponding to the distance between the $i$th and $j$th class pair, \ie, $\|\mathbf{W}^\top(\mathbf{\overline{x}}_i-\mathbf{\overline{x}}_j)\|_2^2$, which constitutes the symmetric weighted matrix. It's noteworthy that the smaller distance corresponds to a larger weight, which indicates that more attention will be paid to the class pairs with high similarities in the learned subspace. Besides, the weight is self-adaptive and can be derived automatically without additional parameters.
For notation simplicity, we introduce a matrix  $\mathbf{M}=\sum_{i,j=1}^{c}\frac{n_in_j}{2n^2}(\mathbf{\overline{x}}_i-\mathbf{\overline{x}}_j)\mathbf{s}_{ij}^\top$ to transform the problem (\ref{opt4}) as:
\begin{align}
\label{opt6}
\max_\mathbf{W} \mathbf{Tr}(\mathbf{W}^\top\mathbf{M}),\ \ \ s.t.\ \ \ {\mathbf{W}^\top\mathbf{S}_w\mathbf{W}=\mathbf{I}}.
\end{align} 
To determine the closed-form solution of optimal projection matrix $\mathbf{W}$ in problem (\ref{opt6}), we introduce the following theorem.
\begin{theorem}
	\label{theorem1}	
	The SVD of matrix $\mathbf{A}=\mathbf{S}_w^{-\frac{1}{2}}\mathbf{M}\in\mathbb{R}^{d\times m}$ is $\mathbf{U}[\mathbf{\Lambda};\mathbf{0}]\mathbf{V}^\top$, where $\mathbf{U}\in\mathbb{R}^{d\times d}$ and $\mathbf{V}\in\mathbb{R}^{m\times m}$ are both orthonormal matrices and $\mathbf{\Lambda}=\mathbf{diag}(\lambda_{11},\lambda_{22},\dots,\lambda_{mm})\in\mathbb{R}^{m\times m}$ is a diagonal matrix with $\lambda_{kk}\geq0(\forall k)$. Then the optimal solution of the optimization problem $\max_{\mathbf{W}^\top\mathbf{S}_w\mathbf{W}=\mathbf{I}}\mathbf{Tr}(\mathbf{W}^\top\mathbf{M})$ can be derived as $\mathbf{S}_w^{-\frac{1}{2}}\mathbf{U}[\mathbf{I};\mathbf{0}]\mathbf{V}^\top$.
\end{theorem}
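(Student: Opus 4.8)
The plan is to reduce this constrained trace maximization to a standard orthogonal Procrustes-type problem by whitening with respect to $\mathbf{S}_w$ (which I assume positive definite, so that $\mathbf{S}_w^{-\frac12}$ in the statement makes sense). First I would substitute $\mathbf{Z}=\mathbf{S}_w^{\frac12}\mathbf{W}$, equivalently $\mathbf{W}=\mathbf{S}_w^{-\frac12}\mathbf{Z}$. The constraint $\mathbf{W}^\top\mathbf{S}_w\mathbf{W}=\mathbf{I}$ then becomes $\mathbf{Z}^\top\mathbf{Z}=\mathbf{I}$, i.e. $\mathbf{Z}\in\mathbb{R}^{d\times m}$ has orthonormal columns, and the objective rewrites as $\mathbf{Tr}(\mathbf{W}^\top\mathbf{M})=\mathbf{Tr}(\mathbf{Z}^\top\mathbf{S}_w^{-\frac12}\mathbf{M})=\mathbf{Tr}(\mathbf{Z}^\top\mathbf{A})$. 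So the problem is exactly $\max_{\mathbf{Z}^\top\mathbf{Z}=\mathbf{I}}\mathbf{Tr}(\mathbf{Z}^\top\mathbf{A})$, and it suffices to show $\mathbf{Z}^\star=\mathbf{U}[\mathbf{I};\mathbf{0}]\mathbf{V}^\top$ is a maximizer, whence $\mathbf{W}^\star=\mathbf{S}_w^{-\frac12}\mathbf{U}[\mathbf{I};\mathbf{0}]\mathbf{V}^\top$.

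Next I would insert the SVD $\mathbf{A}=\mathbf{U}[\mathbf{\Lambda};\mathbf{0}]\mathbf{V}^\top$ and use cyclicity of the trace: $\mathbf{Tr}(\mathbf{Z}^\top\mathbf{A})=\mathbf{Tr}\big((\mathbf{U}^\top\mathbf{Z}\mathbf{V})^\top[\mathbf{\Lambda};\mathbf{0}]\big)$. Writing $\mathbf{B}:=\mathbf{U}^\top\mathbf{Z}\mathbf{V}\in\mathbb{R}^{d\times m}$, orthonormality of $\mathbf{U}$ and $\mathbf{V}$ together with $\mathbf{Z}^\top\mathbf{Z}=\mathbf{I}$ gives $\mathbf{B}^\top\mathbf{B}=\mathbf{V}^\top\mathbf{Z}^\top\mathbf{U}\mathbf{U}^\top\mathbf{Z}\mathbf{V}=\mathbf{I}$, so $\mathbf{B}$ again has orthonormal columns. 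Expanding the trace against the block-diagonal $d\times m$ matrix $[\mathbf{\Lambda};\mathbf{0}]$, only the top $m\times m$ block of $\mathbf{B}$ survives and one obtains $\mathbf{Tr}(\mathbf{Z}^\top\mathbf{A})=\sum_{k=1}^m\lambda_{kk}B_{kk}$.

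The key estimate is then immediate: each column of $\mathbf{B}$ has unit $\ell_2$-norm, hence $|B_{kk}|\leq1$ for every $k$, and since $\lambda_{kk}\geq0$ this yields $\sum_{k=1}^m\lambda_{kk}B_{kk}\leq\sum_{k=1}^m\lambda_{kk}$. This upper bound is attained by the feasible choice $\mathbf{B}=[\mathbf{I};\mathbf{0}]$, for which $B_{kk}=1$ for all $k$; undoing the substitutions gives $\mathbf{Z}=\mathbf{U}\mathbf{B}\mathbf{V}^\top=\mathbf{U}[\mathbf{I};\mathbf{0}]\mathbf{V}^\top$ and therefore the claimed $\mathbf{W}$ maximizes $\mathbf{Tr}(\mathbf{W}^\top\mathbf{M})$ subject to $\mathbf{W}^\top\mathbf{S}_w\mathbf{W}=\mathbf{I}$.

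The only mildly delicate part is bookkeeping with the rectangular block structure: that $[\mathbf{\Lambda};\mathbf{0}]$ and $\mathbf{B}$ are both $d\times m$, so $\mathbf{B}^\top[\mathbf{\Lambda};\mathbf{0}]$ is $m\times m$ and its diagonal is precisely $(\lambda_{kk}B_{kk})_{k=1}^m$, plus the elementary fact that the diagonal entries of a matrix with orthonormal columns are bounded by $1$ in absolute value. Everything else is routine linear algebra, and I should note that the theorem only asserts this matrix is \emph{a} solution (uniqueness is not claimed, and indeed fails when some $\lambda_{kk}=0$), so exhibiting this $\mathbf{W}$ together with the matching upper bound is enough.
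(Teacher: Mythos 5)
Your proposal is correct and follows essentially the same route as the paper's proof: whitening via $\mathbf{Z}=\mathbf{S}_w^{\frac{1}{2}}\mathbf{W}$, inserting the SVD of $\mathbf{A}$, reducing the objective to $\sum_k \lambda_{kk}B_{kk}$ with $B_{kk}\leq 1$, and attaining equality at the identity block. Your bookkeeping of the rectangular blocks is in fact slightly cleaner than the paper's (which works with the $m\times d$ matrix $\mathbf{V}^\top\mathbf{W}^\top\mathbf{S}_w^{\frac{1}{2}}\mathbf{U}$), but the argument is the same.
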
 
\begin{proof}
	Since the matrix $\mathbf{A}$ equals to $\mathbf{S}_w^{-\frac{1}{2}}\mathbf{M}$, the objective function in (\ref{opt6}) can be rewritten as $\mathbf{Tr}(\mathbf{W}^\top\mathbf{S}_w^{\frac{1}{2}}\mathbf{A})$.
	Based on the SVD result of $\mathbf{A}$, we can further transform it into
	\begin{align*}
	\begin{split}
	\mathbf{Tr}([\mathbf{\Lambda};\mathbf{0}]\mathbf{V}^\top\mathbf{W}^\top\mathbf{S}_w^{\frac{1}{2}}\mathbf{U})
	=\mathbf{Tr}([\mathbf{\Lambda};\mathbf{0}]\mathbf{Z})=\sum_{i=1}^m \lambda_{ii}z_{ii},
	\end{split}
	\end{align*}
	where $\mathbf{Z}=\mathbf{V}^\top\mathbf{W}^\top\mathbf{S}_w^{\frac{1}{2}}\mathbf{U}$, $\lambda_{ii}$ and $z_{ii}$ are the $(i,i)$-th element of matrix $\mathbf{\Lambda}$ and $\mathbf{Z}$ respectively.
	Note that $\mathbf{Z}$ is an orthonormal matrix, \ie, $\mathbf{Z}^\top\mathbf{Z}=\mathbf{I}$, thus $z_{ii}\leq1$ holds for each $i$. Since $\lambda_{ii}$ is the singular value of $\mathbf{A}$, $\lambda_{ii}\geq0$ satisfies for each $i$, and thus we can derive the following inequality:
	\begin{align}
	\label{inequality}
	\mathbf{Tr}(\mathbf{W}^\top\mathbf{M})=\sum_{i=1}^m \lambda_{ii}z_{ii}\leq\sum_{i=1}^m \lambda_{ii}.
	\end{align}
	When $z_{ii}=1(\forall i)$, \ie, $\mathbf{Z}=\mathbf{I}$, the equality in (\ref{inequality}) holds and at the same time $\mathbf{Tr}(\mathbf{W}^\top\mathbf{M})$ reaches its maximum. Recall that $\mathbf{Z}=\mathbf{V}^\top\mathbf{W}^\top\mathbf{S}_w^{\frac{1}{2}}\mathbf{U}$, the optimal solution of problem $\max_{\mathbf{W}^\top\mathbf{S}_w\mathbf{W}=\mathbf{I}}\mathbf{Tr}(\mathbf{W}^\top\mathbf{M})$ is $\mathbf{S}_w^{-\frac{1}{2}}\mathbf{U}[\mathbf{I};\mathbf{0}]\mathbf{V}^\top$. The proof is completed.	
\end{proof}
The key steps of SWRLDA are summarized in \Cref{alg:wmgc}. We will theoretically analyze its computational complexity and convergence in the following parts.
\begin{algorithm}[t]
	\caption{SWRLDA with re-weighted optimization  \label{alg:wmgc}}
	\begin{algorithmic}[1]
		\Require
		Training data $\mathbf{X}=[\mathbf{x}_1,\mathbf{x}_2,\dots,\mathbf{x}_n]\in\mathbb{R}^{d\times n}$ belonging to $c$ classes.
		\State Initialize projection matrix $\mathbf{W}_{(t)}$; $t=0$.
		\State Compute the within class scatter matrix $\mathbf{S}_w$ according to Eq. (\ref{sw}) and then denote $\mathbf{S}_w^{'}=\mathbf{S}_w^{-\frac{1}{2}}$;
		\While{not converge}
		\State Update ${\mathbf{s}_{ij}}_{(t)}$($\forall i,j$) according to Eq. (\ref{opt5});
		\State Compute the matrix $\mathbf{M}_{(t)}=\sum_{i,j=1}^{c}\frac{n_in_j}{2n^2}(\mathbf{\overline{x}}_i-\mathbf{\overline{x}}_j){\mathbf{s}_{ij}}_{(t)}^\top$ with the updated ${\mathbf{s}_{ij}}_{(t)}$;
		\State Calculate matrix $\mathbf{A}=\mathbf{S}_w^{'}\mathbf{M}_{(t)}$ with the updated $\mathbf{M}_{(t)}$ and then obtain the SVD result of matrix $\mathbf{A}$ as $\mathbf{U}[\mathbf{\Lambda};\mathbf{0}]\mathbf{V}^\top$;
		\State Update projection matrix $\mathbf{W}_{(t)}=\mathbf{S}_w^{'}\mathbf{U}[\mathbf{I};\mathbf{0}]\mathbf{V}^\top$;
		\State $t=t+1$;
		\EndWhile
		\Ensure Optimal projection matrix $\mathbf{W}\in\mathbb{R}^{d\times m}$.
	\end{algorithmic}
\end{algorithm}


\subsection{Computational Complexity Reduction Analysis}
Despite the fact that SWRLDA traverses the distance between all the class pairs, its computational complexity is $O(cmdt)$ which doesn't increase in contrast to the state-of-the-art LDA methods \cite{zhao2018new,wen2018robust,iosifidis2013optimal}.

The multiplication of matrix $\mathbf{W}^\top$ and $\mathbf{\overline{X}}$, \ie, $\mathbf{W}^\top\mathbf{\overline{X}}$, is computed as $\mathbf{P}=[\mathbf{p}_1,\mathbf{p}_2,\dots,\mathbf{p}_n]\in\mathbb{R}^{m\times c}$ with computational complexity $O(cdm)$, where $\mathbf{\overline{X}}=[\mathbf{\overline{x}}_1,\mathbf{\overline{x}}_2,\dots,\mathbf{\overline{x}}_c]\in\mathbb{R}^{d\times c}$ collects all the class mean vectors. According to Eq. (\ref{opt5}), $\mathbf{s}_{ij}(i,j=1,2,\dots,c)$ can be updated with the entire time complexity $O(mc^2)$, which is further exploited to calculate the matrix $\mathbf{M}$. It seems that the computational cost of $\mathbf{M}$ is $O(c^2dm)$. Actually, it can be accelerated by considering the computation of $\mathbf{s}_{ij}$ and $\mathbf{M}$ in a comprehensive way. The computation of $\mathbf{M}=\sum_{i,j=1}^{c}\frac{n_in_j}{2n^2}(\mathbf{\overline{x}}_i-\mathbf{\overline{x}}_j)\mathbf{s}_{ij}^\top$ can be reformulated as $\frac{1}{2n^2}\sum_{i=1}^{c}n_i\mathbf{\overline{x}}_i\sum_{j=1}^{c}n_j\mathbf{s}_{ij}^\top-\frac{1}{2n^2}\sum_{j=1}^{c}n_j\mathbf{\overline{x}}_j\sum_{i=1}^{c}n_i\mathbf{s}_{ij}^\top$.
Under the condition that $\mathbf{s}_{i.}^\top=\sum_{j=1}^{c}n_j\mathbf{s}_{ij}^\top$ and $\mathbf{s}_{.j}^\top=\sum_{i=1}^{c}n_i\mathbf{s}_{ij}^\top$, it reduces to $\frac{1}{2n^2}\sum_{i=1}^{c}n_i\mathbf{\overline{x}}_i(\mathbf{s}_{i.}^\top-\mathbf{s}_{.i}^\top)$.
Thus, when $\mathbf{s}_{i.}$ and $\mathbf{s}_{.i}$ are given, the matrix $\mathbf{M}$ can be computed with time complexity $O(cdm)$. Based on Eq. (\ref{opt5}), we can derive $\mathbf{s}_{i.}=\sum_{j=1}^{c}n_j\mathbf{s}_{ij}=\sum_{j=1}^{c}n_j\frac{\mathbf{p}_i-\mathbf{p}_j}{\|\mathbf{p}_i-\mathbf{p}_j\|_2}$ and $\mathbf{s}_{.j}=\sum_{i=1}^{c}n_i\mathbf{s}_{ij}=\sum_{i=1}^{c}n_i\frac{\mathbf{p}_i-\mathbf{p}_j}{\|\mathbf{p}_i-\mathbf{p}_j\|_2}$, where $\|\mathbf{p}_i-\mathbf{p}_j\|_2\neq0$. Therefore, $\mathbf{s}_{i.}$ and $\mathbf{s}_{.i}(\forall i)$ can be calculated with time complexity $O(c^2m)$. Next, the computational cost of the following steps mainly lies in the SVD of matrix $\mathbf{A}\in\mathbb{R}^{d\times m}=\mathbf{U\Sigma V}$. Note that the left singular vectors of $\mathbf{A}$, \ie, the column vectors of $\mathbf{U}$, are the eigenvectors of $\mathbf{AA}^\top$ and the right singular vectors of $\mathbf{A}$, \ie, the column vectors of $\mathbf{V}$,  are the eigenvectors of $\mathbf{A}^\top\mathbf{A}$ \cite{stewart2001eigensystem}. If $\mathbf{U}$ or $\mathbf{V}$ is given, the other one can be recovered by the equation $\mathbf{AV=U\Sigma}$ and $\mathbf{U}^\top\mathbf{A=\Sigma V}^\top$. Due to the fact that $d\geq m$, we only need to compute the eigenvectors of $\mathbf{A}^\top\mathbf{A}$ as $\mathbf{V}$, and then recover $\mathbf{U}$ from $\mathbf{V}$, whose total computational complexity is $O(dm^2+m^3)$. 

In conclusion, the whole time complexity of the proposed method is $O((cd+c^2+md+m^2)mt)$, where $t$ is the iteration number. Since the feature dimensionality $d$ is usually much larger than the number of classes $c$ and the reduced dimensionality $m$, the computational complexity of \Cref{alg:wmgc} can be simplified as $O(cmdt)$. 

\subsection{Convergence Analysis}
In this section, we will analyze the convergence of the proposed SWRLDA by introducing the following lemma and theorem.
\begin{lemma}
	\label{lemma2}
	The objective function in optimization problem (\ref{fisher5}) is upper bounded.
\end{lemma}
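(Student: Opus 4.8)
The plan is to sandwich the non-smooth objective of (\ref{fisher5}) between $0$ and a constant multiple of $\sqrt{\mathbf{Tr}(\mathbf{W}^\top\mathbf{S}_b\mathbf{W})}$, and then to invoke the classical fact that the between-class trace is bounded over the Fisher constraint set $\{\mathbf{W}:\mathbf{W}^\top\mathbf{S}_w\mathbf{W}=\mathbf{I}\}$. The lower bound $0$ is immediate since every summand is a Euclidean norm, so all the work is in the upper bound, and the crux of it is a single Cauchy--Schwarz step that linearizes the dependence on $\mathbf{W}$ back into the quadratic Fisher quantity.

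First I would apply Cauchy--Schwarz to the nonnegative coefficients $a_{ij}=\frac{n_in_j}{2n^2}$, writing $a_{ij}\|\mathbf{W}^\top(\mathbf{\overline{x}}_i-\mathbf{\overline{x}}_j)\|_2=\sqrt{a_{ij}}\cdot\bigl(\sqrt{a_{ij}}\,\|\mathbf{W}^\top(\mathbf{\overline{x}}_i-\mathbf{\overline{x}}_j)\|_2\bigr)$, to obtain
\[
\sum_{i,j=1}^c a_{ij}\|\mathbf{W}^\top(\mathbf{\overline{x}}_i-\mathbf{\overline{x}}_j)\|_2\le\Bigl(\sum_{i,j=1}^c a_{ij}\Bigr)^{1/2}\Bigl(\sum_{i,j=1}^c a_{ij}\|\mathbf{W}^\top(\mathbf{\overline{x}}_i-\mathbf{\overline{x}}_j)\|_2^2\Bigr)^{1/2}.
\]
Here $\sum_{i,j}a_{ij}=\frac{1}{2n^2}\bigl(\sum_i n_i\bigr)\bigl(\sum_j n_j\bigr)=\tfrac12$, and by \Cref{lemma1} with the substitution $\mathbf{p}=[\tfrac{n_1}{n},\dots,\tfrac{n_c}{n}]$, $\mathbf{u}_i=\mathbf{W}^\top\mathbf{\overline{x}}_i$, $\mathbf{\overline{u}}=\mathbf{W}^\top\mathbf{\overline{x}}$ (exactly as in the proof of \Cref{theorem2}) the second factor equals $\sum_i\frac{n_i}{n}\|\mathbf{W}^\top(\mathbf{\overline{x}}_i-\mathbf{\overline{x}})\|_2^2=\mathbf{Tr}(\mathbf{W}^\top\mathbf{S}_b\mathbf{W})$. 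Hence the objective of (\ref{fisher5}) is at most $\sqrt{\tfrac12\,\mathbf{Tr}(\mathbf{W}^\top\mathbf{S}_b\mathbf{W})}$.

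Next I would bound $\mathbf{Tr}(\mathbf{W}^\top\mathbf{S}_b\mathbf{W})$ over the feasible set. Substituting $\widetilde{\mathbf{W}}=\mathbf{S}_w^{1/2}\mathbf{W}$ turns the constraint into $\widetilde{\mathbf{W}}^\top\widetilde{\mathbf{W}}=\mathbf{I}$, i.e. $\widetilde{\mathbf{W}}$ has orthonormal columns, and then $\mathbf{Tr}(\mathbf{W}^\top\mathbf{S}_b\mathbf{W})=\mathbf{Tr}(\widetilde{\mathbf{W}}^\top\mathbf{S}_w^{-1/2}\mathbf{S}_b\mathbf{S}_w^{-1/2}\widetilde{\mathbf{W}})\le\sum_{k=1}^m\lambda_k$, where $\lambda_1\ge\cdots\ge\lambda_m$ are the $m$ largest eigenvalues of the symmetric positive semidefinite matrix $\mathbf{S}_w^{-1/2}\mathbf{S}_b\mathbf{S}_w^{-1/2}$ (Ky Fan's maximum principle) — a finite constant determined entirely by the training data. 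Combining the two estimates gives the uniform bound $\sqrt{\tfrac12\sum_{k=1}^m\lambda_k}$ on the objective of (\ref{fisher5}), which is exactly the assertion of \Cref{lemma2}.

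The one point that needs care — and the part I expect to be the main obstacle — is the invertibility of $\mathbf{S}_w$: the reduction above (and the very meaning of $\mathbf{W}^\top\mathbf{S}_w\mathbf{W}=\mathbf{I}$ together with $\mathbf{S}_w^{\pm1/2}$) presumes $\mathbf{S}_w\succ\mathbf{0}$, which is the standing assumption here and can always be enforced by a PCA preprocessing step. Under that assumption one can even bypass the passage through $\mathbf{S}_b$: from $\mathbf{W}^\top\mathbf{S}_w\mathbf{W}=\mathbf{I}$ one gets $\|\mathbf{W}\|_2\le\lambda_{\min}(\mathbf{S}_w)^{-1/2}$, hence $\|\mathbf{W}^\top(\mathbf{\overline{x}}_i-\mathbf{\overline{x}}_j)\|_2\le\lambda_{\min}(\mathbf{S}_w)^{-1/2}\|\mathbf{\overline{x}}_i-\mathbf{\overline{x}}_j\|_2$, and the finite weighted sum of these quantities is already an admissible bound. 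If only $\mathbf{S}_w\succeq\mathbf{0}$ is known, a component of $\mathbf{W}$ in the null space of $\mathbf{S}_w$ leaves the constraint intact while possibly enlarging the objective, so one would additionally need each $\mathbf{\overline{x}}_i-\mathbf{\overline{x}}_j$ to lie in the range of $\mathbf{S}_w$; settling boundedness in that degenerate regime is the only delicate step.
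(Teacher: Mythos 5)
Your proof is correct, and it takes a genuinely different route from the paper's. The paper argues term by term: it fixes $i$, whitens via $\mathbf{B}=\mathbf{S}_w^{\frac{1}{2}}\mathbf{W}$ (so $\mathbf{B}^\top\mathbf{B}=\mathbf{I}$) and $\mathbf{d}_{ij}=\mathbf{S}_w^{-\frac{1}{2}}(\mathbf{\overline{x}}_i-\mathbf{\overline{x}}_j)$, and then bounds each $\|\mathbf{B}^\top\mathbf{d}_{ij}\|_2$ by elementary norm inequalities ($\ell_2\le\ell_1$, then Cauchy--Schwarz column by column, using $\|\mathbf{b}_k\|_2=1$) by a data-dependent constant times $\|\mathbf{d}_{ij}\|_2$; summing gives the bound. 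You instead apply Cauchy--Schwarz \emph{across the class pairs} to dominate the $\ell_{2,1}$-type objective by the square root of the quadratic criterion, identify that quadratic quantity with $\mathbf{Tr}(\mathbf{W}^\top\mathbf{S}_b\mathbf{W})$ by reusing \Cref{lemma1} exactly as in \Cref{theorem2}, and then bound the trace over the constraint set by Ky Fan's principle applied to $\mathbf{S}_w^{-\frac{1}{2}}\mathbf{S}_b\mathbf{S}_w^{-\frac{1}{2}}$. What your route buys is an explicit and tighter constant, $\sqrt{\tfrac12\sum_{k=1}^m\lambda_k}$, and a clean link between the new $\ell_{2,1}$ criterion and the classical Fisher quantity; what the paper's route buys is brevity and independence from \Cref{lemma1}. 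Your alternative bound via $\|\mathbf{W}\|_2\le\lambda_{\min}(\mathbf{S}_w)^{-1/2}$ is essentially the paper's term-by-term argument in sharper form. Your caveat about $\mathbf{S}_w$ being only positive semidefinite is fair but is not a gap relative to the paper, which uses $\mathbf{S}_w^{-\frac{1}{2}}$ throughout (both in this proof and in \Cref{theorem1}) and hence makes the same implicit nonsingularity assumption.
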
	
\begin{proof}
	For a given $i$, the subproblem of (\ref{fisher5}) can reduce to $\sum_{j=1}^{c}\|\mathbf{W}^\top(\mathbf{\overline{x}}_i-\mathbf{\overline{x}}_j)\|_2$ by not considering the constants in $\frac{n_in_j}{2n^2}$. It can be further derived as $\sum_{j=1}^{c}\|\mathbf{B}^\top\mathbf{d}_{ij}\|_2$, where $\mathbf{B}=\mathbf{S}_w^{\frac{1}{2}}\mathbf{W}$ with $\mathbf{B}^\top\mathbf{B}=\mathbf{I}$ and $\mathbf{d}_{ij}=\mathbf{S}_w^{-\frac{1}{2}}(\mathbf{\overline{x}}_i-\mathbf{\overline{x}}_j)$. According to Cauchy-Schwarz inequality, we have the following derivations:
	\begin{align*}
	\begin{split}
	\sum_{j=1}^{c}\|\mathbf{B}^\top\mathbf{d}_{ij}\|_2&\leq\sum_{j=1}^{c}\|\mathbf{B}^\top\mathbf{d}_{ij}\|_1
	=\sum_{k=1}^{m}\sum_{j=1}^{c}\|\mathbf{b}_k^\top\mathbf{d}_{ij}\|_1\\&\leq\sum_{k=1}^{m}\sum_{j=1}^{c}\|\mathbf{b}_k^\top\|_2\|\mathbf{d}_{ij}\|_2=\sum_{j=1}^{c}\lambda\|\mathbf{d}_{ij}\|_2,
	\end{split}
	\end{align*}
	where $\mathbf{b}_k$ is the $k$th column of matrix $\mathbf{B}$. For a given dataset, $\sum_{j=1}^{c}\lambda\|\mathbf{d}_{ij}\|_2$ is a constant, which indicates the objective function in optimization problem (\ref{fisher5}) has an upper bound. The proof is completed.
\end{proof}
\begin{theorem}
	\label{theorem4} 
	{\hspace{0em}}
	The objective in optimization problem (\ref{fisher5}) will monotonically increase and converge to its maximum with \Cref{alg:wmgc}.
\end{theorem}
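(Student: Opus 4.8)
The plan is to recognize Algorithm~\ref{alg:wmgc} as a minorize--maximize (MM) scheme built on the variational identity $\|\mathbf{v}\|_2=\max_{\|\mathbf{s}\|_2\le 1}\mathbf{s}^\top\mathbf{v}$, whose maximizer is $\mathbf{s}=\mathbf{v}/\|\mathbf{v}\|_2$ when $\mathbf{v}\neq\mathbf{0}$ and $\mathbf{s}=\mathbf{0}$ otherwise — exactly rule~(\ref{opt5}). Applying this to each $\mathbf{v}_{ij}=\mathbf{W}^\top(\mathbf{\overline{x}}_i-\mathbf{\overline{x}}_j)$ shows that the objective of (\ref{fisher5}), which I denote $f(\mathbf{W})$, satisfies $f(\mathbf{W})=\max_{\{\mathbf{s}_{ij}:\|\mathbf{s}_{ij}\|_2\le 1\}}g(\mathbf{W},\{\mathbf{s}_{ij}\})$ where $g(\mathbf{W},\{\mathbf{s}_{ij}\})=\sum_{i,j=1}^{c}\frac{n_in_j}{2n^2}\mathbf{s}_{ij}^\top\mathbf{W}^\top(\mathbf{\overline{x}}_i-\mathbf{\overline{x}}_j)$ is precisely the linearized objective in (\ref{opt4}). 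Hence $g$ is a global minorizer of $f$ over the feasible manifold $\mathbf{W}^\top\mathbf{S}_w\mathbf{W}=\mathbf{I}$, and it becomes tight at the current iterate once the weights are refreshed by (\ref{opt5}).

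Next I would chain the standard three inequalities. Write $\mathbf{W}_{(t)}$ for the iterate at round $t$ and $\mathbf{s}_{ij}^{(t)}$ for the weights it produces via (\ref{opt5}). First, tightness of the surrogate gives $g(\mathbf{W}_{(t)},\{\mathbf{s}_{ij}^{(t)}\})=f(\mathbf{W}_{(t)})$, since $\mathbf{s}_{ij}^{(t)}$ is exactly the argmax in the variational formula. Second, the $\mathbf{W}$-update is \emph{exact}: by Theorem~\ref{theorem1}, $\mathbf{W}_{(t+1)}=\mathbf{S}_w^{-\frac{1}{2}}\mathbf{U}[\mathbf{I};\mathbf{0}]\mathbf{V}^\top$ is the global maximizer of $\mathbf{Tr}(\mathbf{W}^\top\mathbf{M}_{(t)})$ subject to $\mathbf{W}^\top\mathbf{S}_w\mathbf{W}=\mathbf{I}$, i.e.\ of $g(\cdot,\{\mathbf{s}_{ij}^{(t)}\})$, so $g(\mathbf{W}_{(t+1)},\{\mathbf{s}_{ij}^{(t)}\})\ge g(\mathbf{W}_{(t)},\{\mathbf{s}_{ij}^{(t)}\})$. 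Third, minorization gives $f(\mathbf{W}_{(t+1)})=\max_{\{\mathbf{s}_{ij}\}}g(\mathbf{W}_{(t+1)},\{\mathbf{s}_{ij}\})\ge g(\mathbf{W}_{(t+1)},\{\mathbf{s}_{ij}^{(t)}\})$. Concatenating the three yields $f(\mathbf{W}_{(t+1)})\ge f(\mathbf{W}_{(t)})$, so the objective of (\ref{fisher5}) is monotonically non-decreasing along the iterations.

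I would then invoke Lemma~\ref{lemma2}: $f$ is bounded above on the feasible set. A bounded monotone sequence converges, so $\{f(\mathbf{W}_{(t)})\}$ converges to a limit value, establishing the ``monotonically increase and converge'' part. For the claim that this limit is the maximum of (\ref{fisher5}), I would reason at a fixed point: if $\mathbf{W}_{(t+1)}=\mathbf{W}_{(t)}=:\mathbf{W}^\star$, then $\mathbf{W}^\star$ both generates the weights $\mathbf{s}_{ij}^\star$ through (\ref{opt5}) and is the \emph{global} maximizer of $g(\cdot,\{\mathbf{s}_{ij}^\star\})$ on the manifold; together with the tightness of the surrogate this forces $\mathbf{W}^\star$ to be a stationary point of $f$, and, using that the SVD-based subproblem solution of Theorem~\ref{theorem1} is exact rather than approximate, one identifies it with the optimum of (\ref{fisher5}).

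The main obstacle is this last step. The monotone-increase-plus-bounded argument is routine and only guarantees convergence of the objective values (and, along subsequences, of the iterates to a fixed point of the re-weighting map); it does not by itself certify that the limit equals the \emph{global} maximum of a non-smooth, non-concave maximization over a nonconvex constraint manifold. To make ``converges to its maximum'' fully rigorous, I would either phrase the conclusion in terms of a stationary point / KKT point of (\ref{fisher5}), or exploit additional structure ensuring that every fixed point of the re-weighting iteration is globally optimal; pinning down exactly what ``maximum'' means here, and justifying it, is the delicate part of the argument.
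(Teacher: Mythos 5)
Your proposal is correct and takes essentially the same route as the paper: the paper's chain of inequalities \eqref{eq1}, \eqref{eq5}, \eqref{eq6} is precisely your MM argument, namely tightness of the linear surrogate at the previous iterate (the Cauchy--Schwarz characterization of the $\ell_2$-norm underlying \eqref{opt5}), exactness of the $\mathbf{W}$-update from \Cref{theorem1}, and boundedness from \Cref{lemma2}. Your reservation about ``converges to its maximum'' is well placed but does not mark a divergence from the paper: the paper's own proof likewise only establishes convergence of the monotone, bounded objective sequence and asserts rather than proves that the limit is the global maximum of \eqref{fisher5}, with global optimality supported only empirically in the experiments.
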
 
\begin{proof} 
	For simplicity, let $\mathbf{v}_{ij}$ denote the vector $\mathbf{\overline{x}}_i-\mathbf{\overline{x}}_j$.
	With the fixed ${\mathbf{s}_{ij}}_{(t)}$ in the $t$th iteration, the optimal $\mathbf{W}_{(t)}$ can be obtained when the objective function in (\ref{opt4}) reaches its maximum, such that 
	\begin{align}
	\label{eq1}
	\sum_{i,j}\frac{n_in_j}{2n^2}{\mathbf{s}_{ij}}_{(t)}^\top\mathbf{W}_{(t)}^\top\mathbf{v}_{ij}\geq \sum_{i,j}\frac{n_in_j}{2n^2}{\mathbf{s}_{ij}}_{(t)}^\top\mathbf{W}_{(t-1)}^\top\mathbf{v}_{ij}.
	\end{align}
	According to Cauchy-Schwarz inequality, it's easily to get $	\|\mathbf{W}_{(t)}^\top\mathbf{v}_{ij}\|_2\|\mathbf{W}_{(t-1)}^\top\mathbf{v}_{ij}\|_2\geq \langle \mathbf{W}_{(t)}^\top\mathbf{v}_{ij},\mathbf{W}_{(t-1)}^\top\mathbf{v}_{ij} \rangle$.
	Based on this inequality and Eq. (\ref{opt5}), we have $\|\mathbf{W}_{(t)}^\top\mathbf{v}_{ij}\|_2- {\mathbf{s}_{ij}}_{(t)}^\top\mathbf{W}_{(t)}^\top\mathbf{v}_{ij}\geq 0$, and Eq. (\ref{opt5}) can be reformulated as $\|\mathbf{W}_{(t-1)}^\top\mathbf{v}_{ij}\|_2-{\mathbf{s}_{ij}}_{(t)}^\top\mathbf{W}_{(t-1)}^\top\mathbf{v}_{ij}=0$.
	Combining the above two equations, we can arrive at $\|\mathbf{W}_{(t)}^\top\mathbf{v}_{ij}\|_2- {\mathbf{s}_{ij}}_{(t)}^\top\mathbf{W}_{(t)}^\top\mathbf{v}_{ij}
	\geq \|\mathbf{W}_{(t-1)}^\top\mathbf{v}_{ij}\|_2-{\mathbf{s}_{ij}}_{(t)}^\top\mathbf{W}_{(t-1)}^\top\mathbf{v}_{ij}$.
	Since the above inequality holds for each $i$ and $j$, we can derive that
	\begin{align}
	\label{eq5}
	\begin{split}
	&\sum_{i,j}\frac{n_in_j}{2n^2}(\|\mathbf{W}_{(t)}^\top\mathbf{v}_{ij}\|_2-{\mathbf{s}_{ij}}_{(t)}^\top\mathbf{W}_{(t)}^\top\mathbf{v}_{ij})\\
	\geq& \sum_{i,j}\frac{n_in_j}{2n^2}(\|\mathbf{W}_{(t-1)}^\top\mathbf{v}_{ij}\|_2-{\mathbf{s}_{ij}}_{(t)}^\top\mathbf{W}_{(t-1)}^\top\mathbf{v}_{ij}).
	\end{split}
	\end{align}
	Summing the two inequalities in (\ref{eq1}) and (\ref{eq5}) on both sides, we obtain
	\begin{align}
	\label{eq6}
	\begin{split}
	\sum_{i,j}\frac{n_in_j}{2n^2}\|\mathbf{W}_{(t)}^\top\mathbf{v}_{ij}\|_2
	\geq \sum_{i,j}\frac{n_in_j}{2n^2}\|\mathbf{W}_{(t-1)}^\top\mathbf{v}_{ij}\|_2.
	\end{split}
	\end{align}
	Thus, the objective value in problem (\ref{fisher5}) monotonically increases with \Cref{alg:wmgc} and is bounded above by a supremum based on \Cref{lemma2}, then it will converge to its maximum. The proof is completed.
\end{proof} 

\section{Experiments}
In this section, extensive experiments on both synthetic and real-world datasets are conducted to evaluate the effectiveness of the proposed method. The proposed SWRLDA is developed in the Matlab environment. All experiments are performed on the Windows-10 operating system (Intel Core i5-6200U CPU @ 2.40 GHz, 16 GB RAM).
In addition, seven state-of-the-art algorithms are selected for comparison, which are briefly introduced as follows:
\begin{enumerate}[$\bullet$,leftmargin=*,topsep=0pt,noitemsep]
	\item $c$-class fisher criterion based LDA models rely on the total mean calculation, whose between-class distance is calculated by subtracting the total mean from each class center. Specifically, \textbf{LDA} \cite{cai2008srda} corresponds to the traditional LDA model and \textbf{RSLDA} \cite{wen2018robust} is a robust version of LDA by introducing the $\ell_{2,1}$-norm minimization of projection matrix.
	\item Weighted pairwise fisher criterion based methods decompose the original $c$-class fisher criterion into $\frac{1}{2}c(c-1)$ two-class fisher criterion, and then attach certain weight to each class pair to characterize its importance. \textbf{aPAC} \cite{loog2001multiclass} assigned different weights according to their Mahalanobis distance in the original space. \textbf{GMSS} \cite{tao2009geometric} and \textbf{HMSS} \cite{bian2008harmonic} learned weights by maximizing the weighted geometric mean and weighted harmonic mean of KL divergences of class pairs respectively.
	\textbf{STRDA} \cite{li2017beyond} was proposed to maximize the weighted harmonic mean of pairwise trace ratios.
	\textbf{MMDA} \cite{bian2011max} maximized the minimal pairwise distance by introducing a local SDP relaxation.
	
\end{enumerate}
\subsection{Experiments on Synthetic Datasets}
\subsubsection{Experimental Setup}
\begin{figure}[t] 	
	\centering 	
	\subfigure[\emph{syn1} without edge class]{\label{2a}		
		\includegraphics[width=0.35\linewidth]{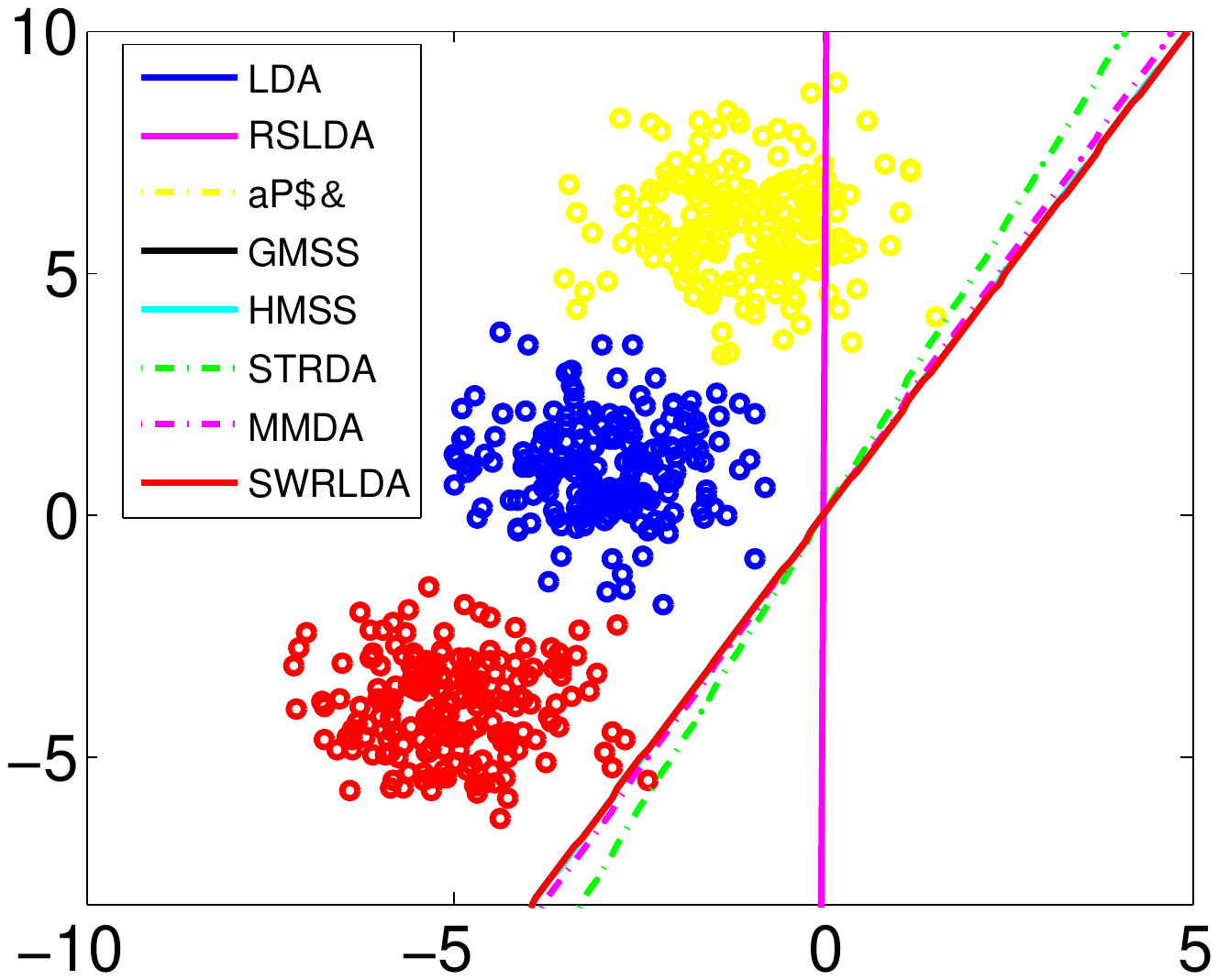}}
	\subfigure[\emph{syn2} with edge class]{\label{2b}				
		\includegraphics[width=0.35\linewidth]{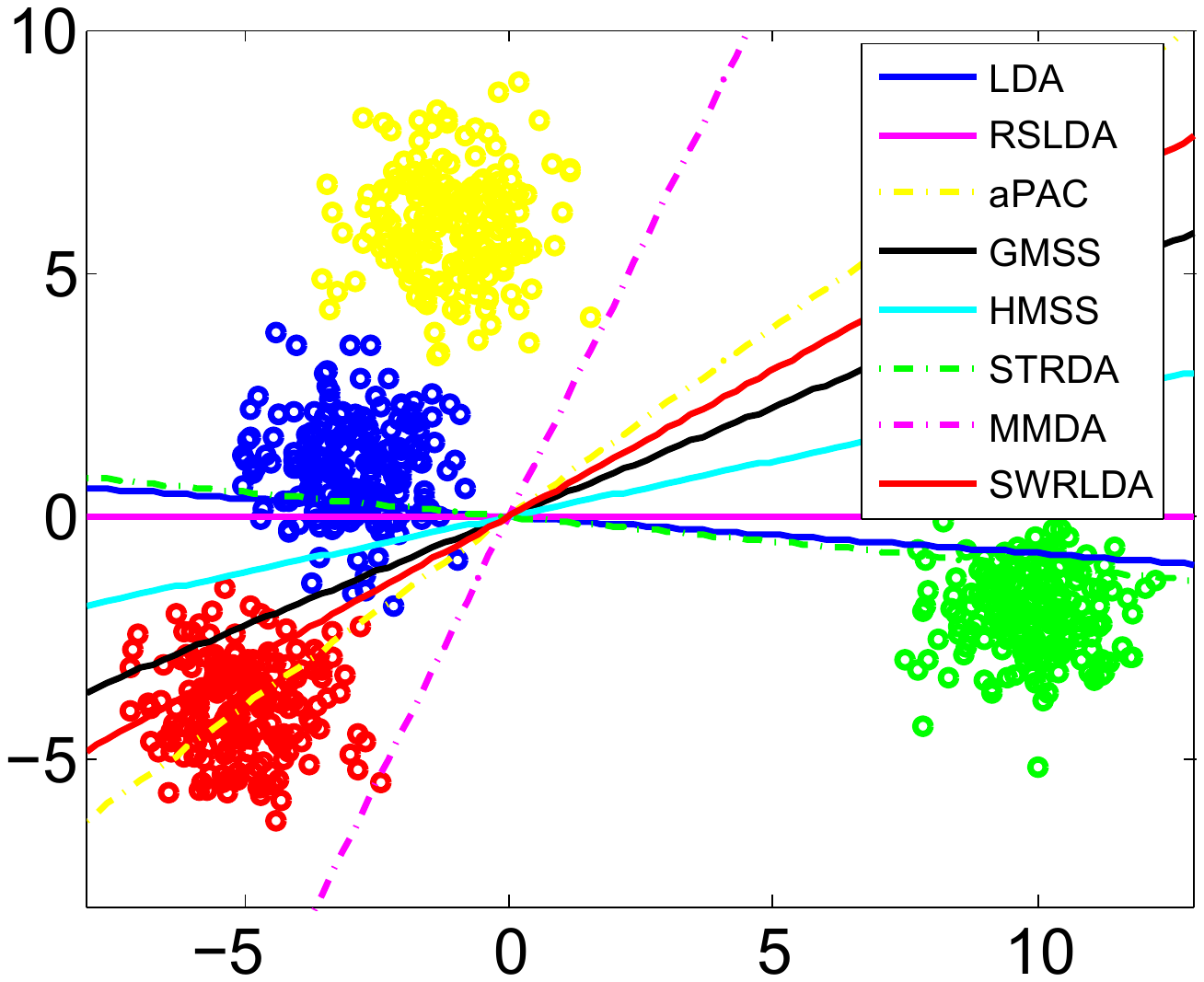}}	
	\caption{Projection directions learned by different methods on 2D synthetic data.}	
	\label{synthetic}	
\end{figure}
\begin{figure*}[t] 	
	\centering 	
	\subfigure[LDA]{\label{3a}		
		\includegraphics[width=0.23\linewidth]{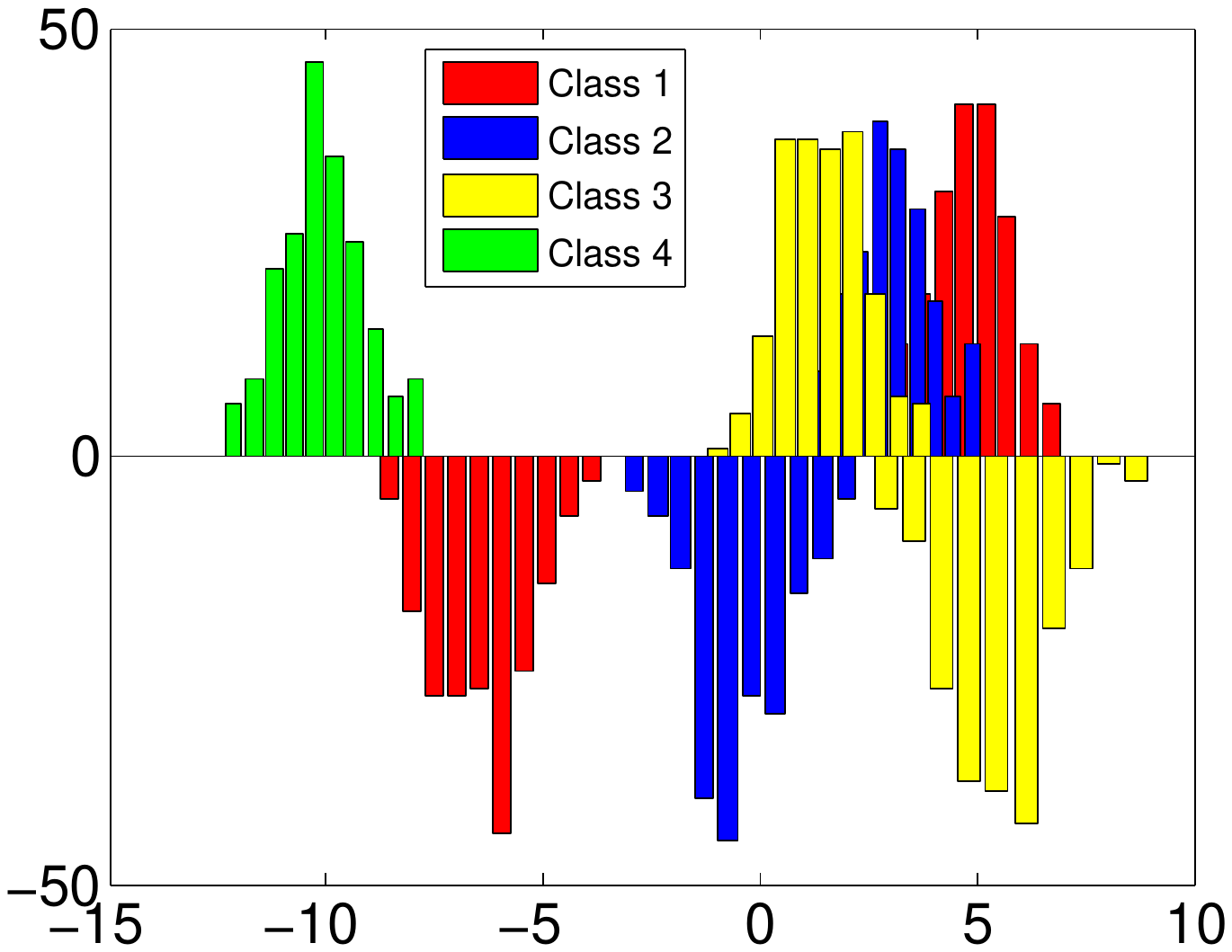}}
	\subfigure[RSLDA]{\label{3b}				
		\includegraphics[width=0.23\linewidth]{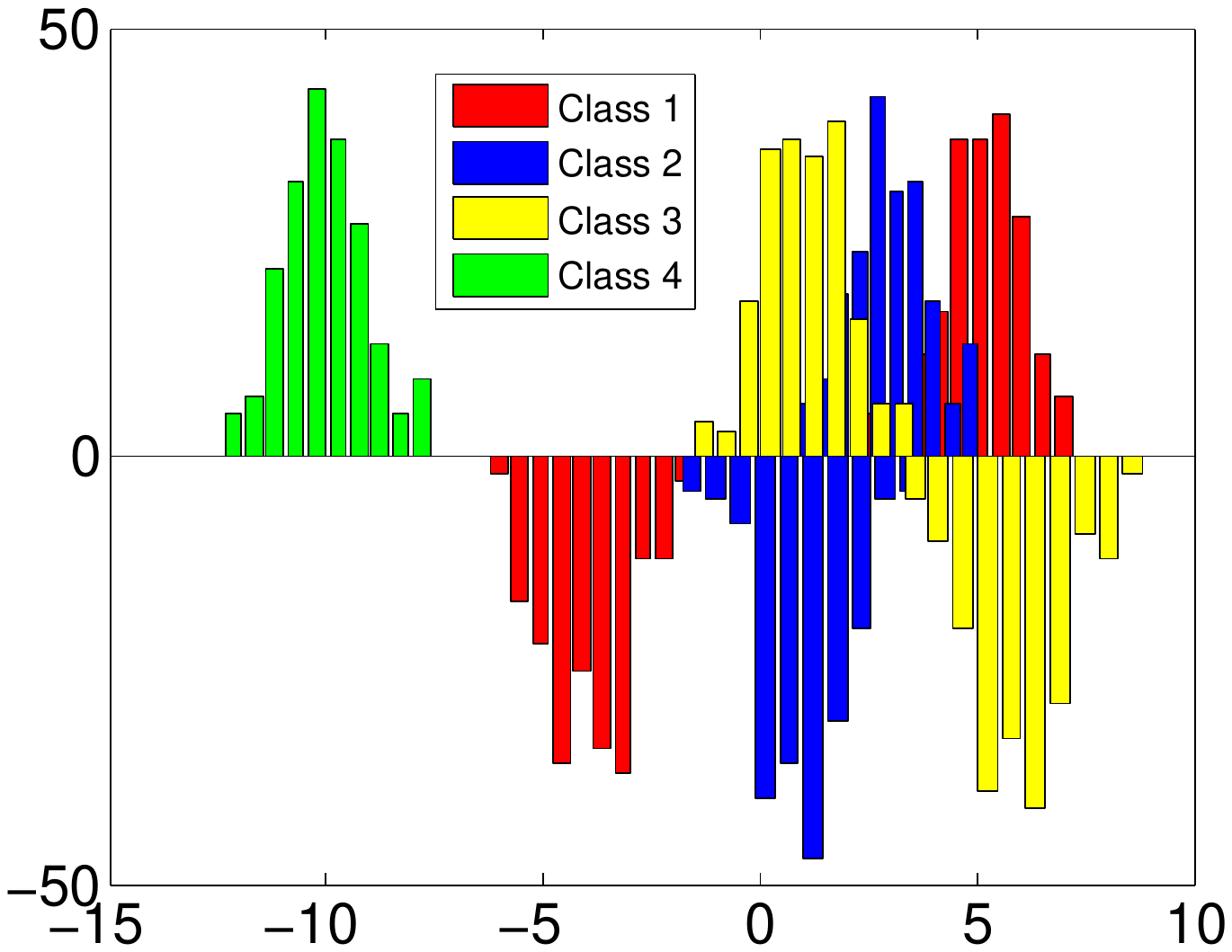}}	
	\subfigure[aPAC]{\label{3c}		
		\includegraphics[width=0.23\linewidth]{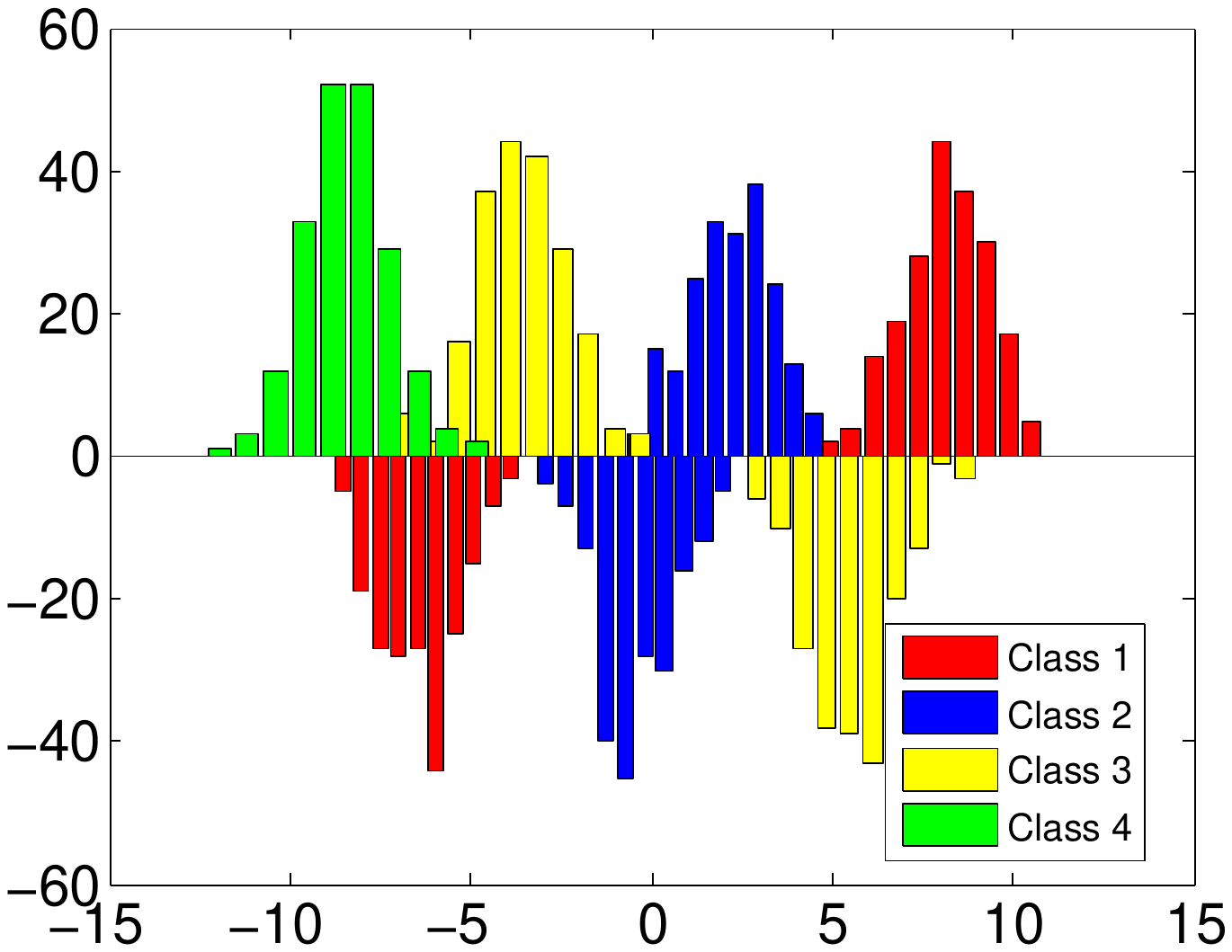}}		
	\subfigure[GMSS]{\label{3d}		
		\includegraphics[width=0.23\linewidth]{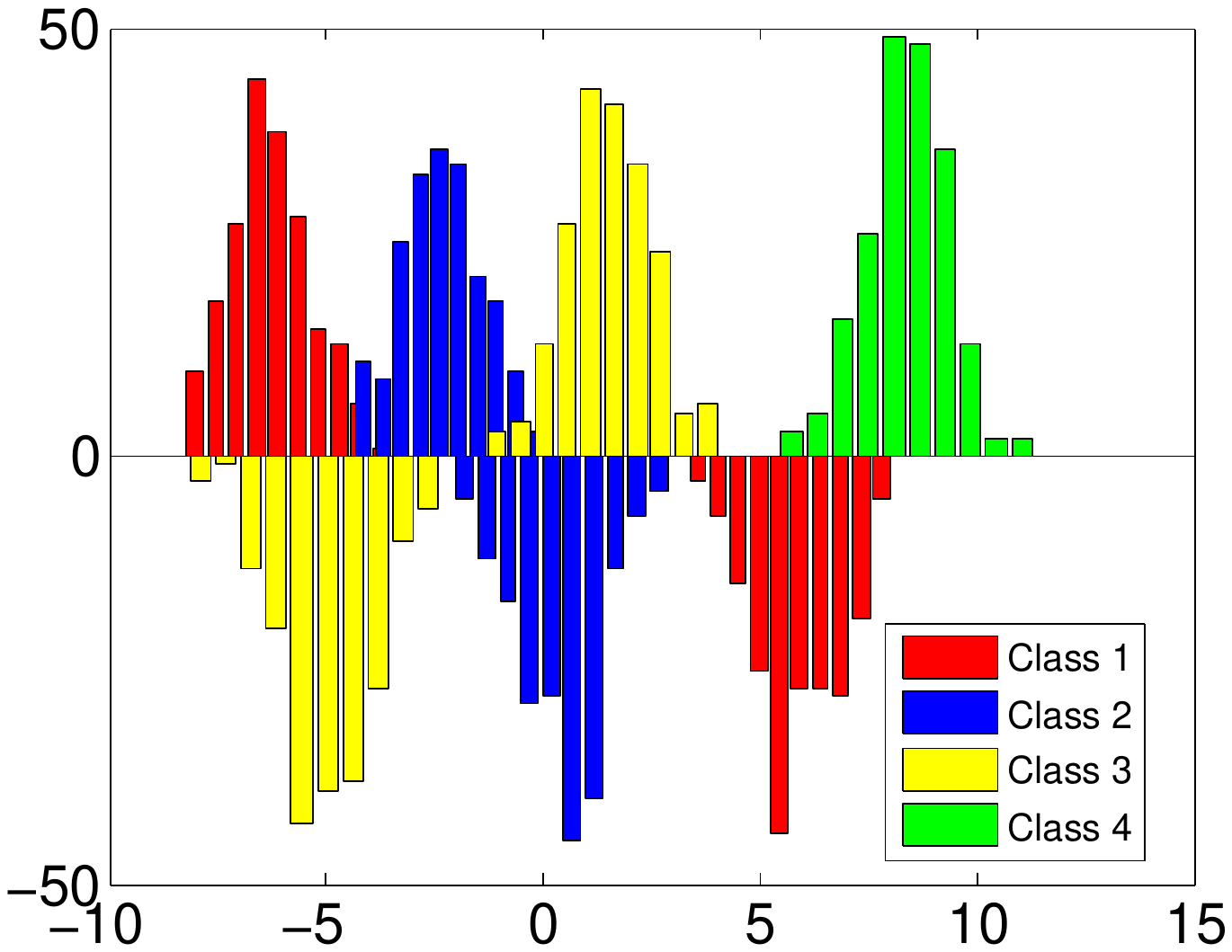}}
	\vfill	
	\subfigure[HMSS]{\label{3e}		
		\includegraphics[width=0.23\linewidth]{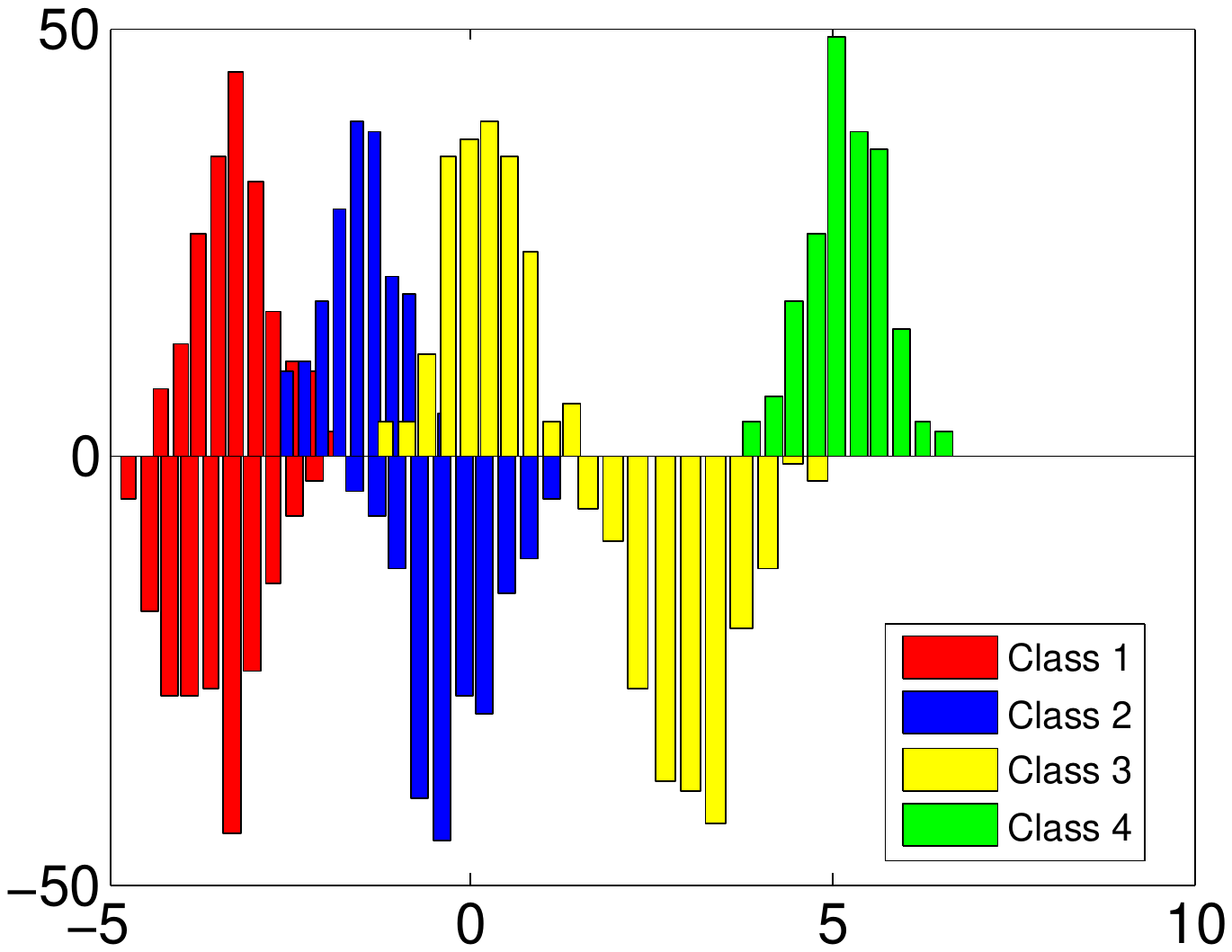}}
	\subfigure[STRDA]{\label{3f}				
		\includegraphics[width=0.23\linewidth]{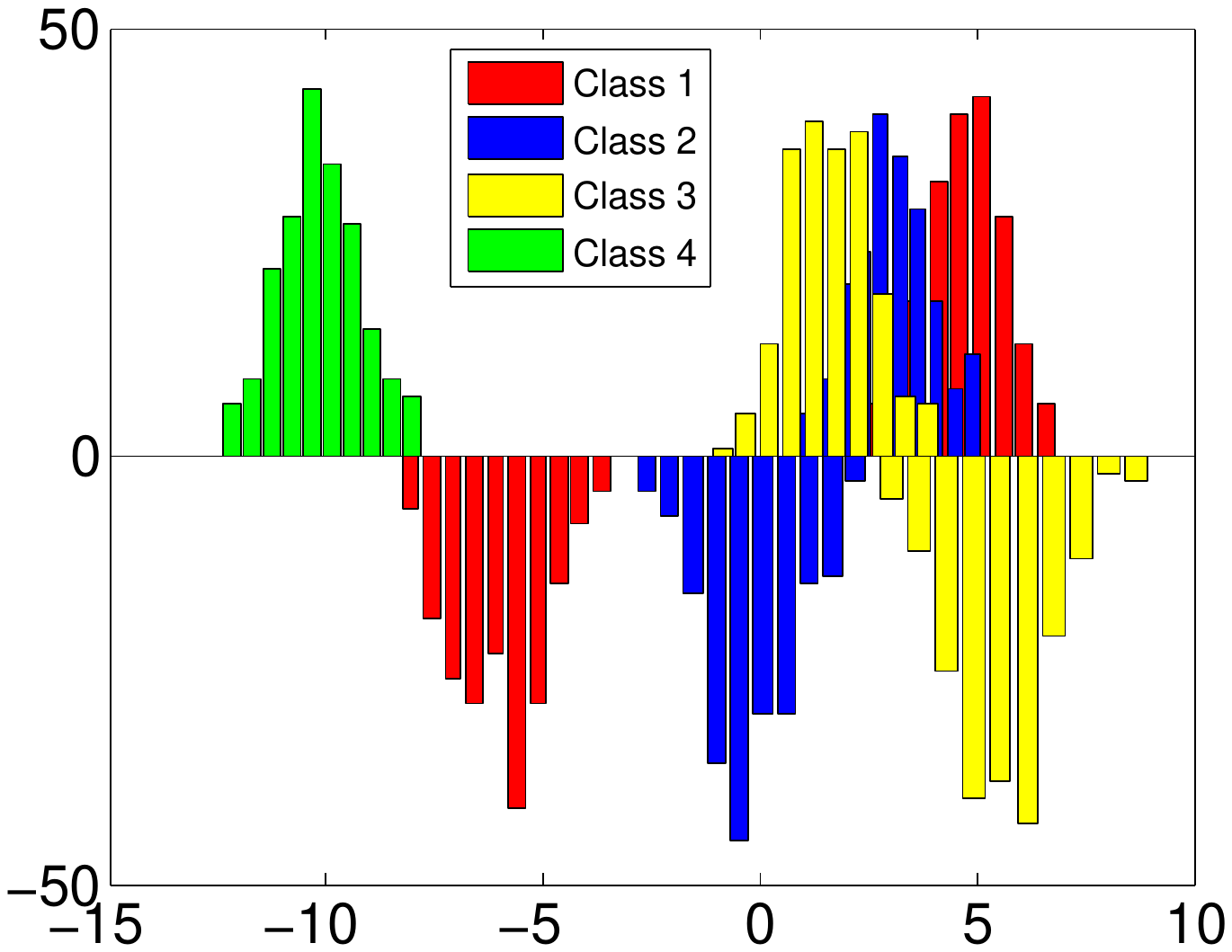}}	
	\subfigure[MMDA]{\label{3g}		
		\includegraphics[width=0.23\linewidth]{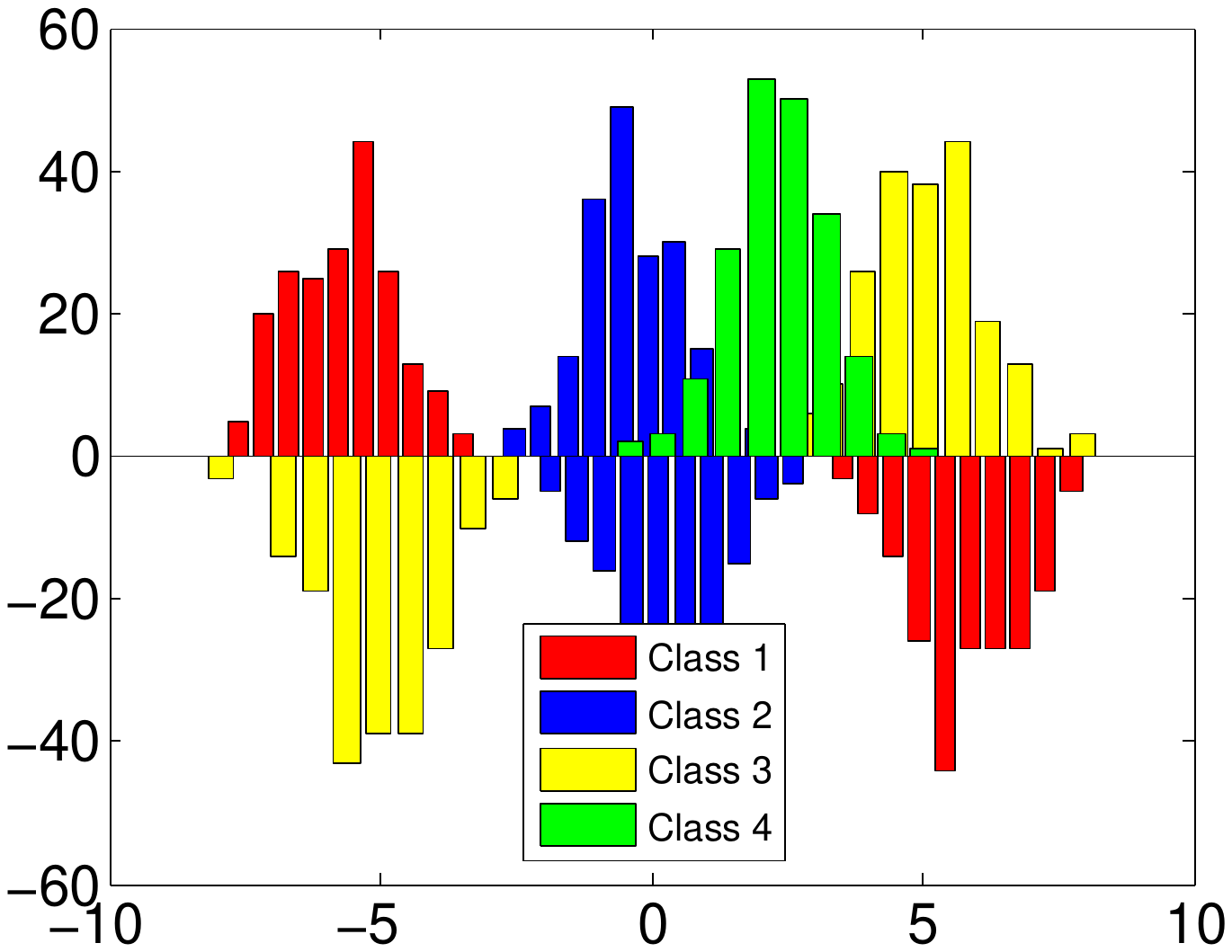}}		
	\subfigure[SWRLDA]{\label{3h}		
		\includegraphics[width=0.23\linewidth]{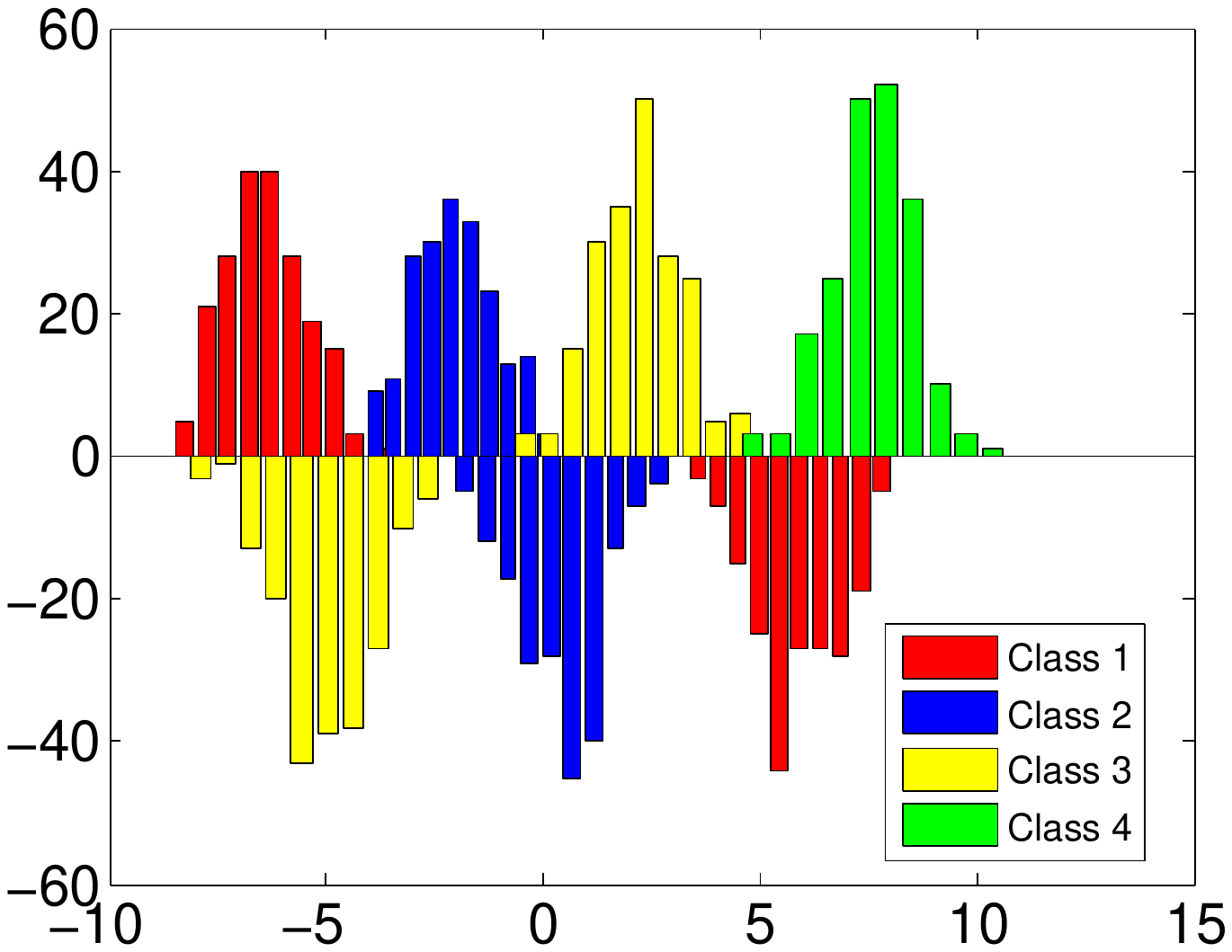}}
	\caption{The histograms of two synthetic datasets projected onto the optimal direction  learned by different methods. Specifically, the histogram below and above the zero ordinate corresponds to \emph{syn1} and \emph{syn2} respectively.}	
	\label{synthetic1}	
\end{figure*}
Considering that the projection directions may be different in the presence and absence of edge class, we design an experiment to evaluate each method on two $2$D synthetic datasets, denoted as \emph{syn1} and \emph{syn2} respectively. The first dataset \emph{syn1} contains three classes, \ie, class $1$, $2$ and $3$, which are located very closely to each other. The second dataset \emph{syn2} is constructed by adding an extra class far away from class $1$, $2$ and $3$, denoted as class $4$, which can be regarded as an edge class. Each class is composed of $200$ data points generated from Gaussian distribution with the same standard deviation [$1,0; 0,1$]. The data means corresponding to class $1$, $2$, $3$ and $4$ are [$-5,-4$], [$-3,1$], [$-1,6$] and [$10,-2$] respectively. The scatter plots of \emph{syn1} and \emph{syn2} are illustrated in \Cref{2a} and \Cref{2b}.

\subsubsection{Performance Comparison}
\Cref{synthetic} demonstrates the projection directions learned by different methods over \emph{syn1} and \emph{syn2}. Under each direction, the original data points can be projected into a new embedding space as shown in \Cref{3a} to \Cref{3h}, corresponding to different methods respectively. For \emph{syn1} without edge class, each comparative method can learn a satisfying projection direction to classify them correctly. However, the four classes of \emph{syn2} with edge class overlap with each other in varying degrees under the projection directions learned by most comparisons. According to \Cref{2b}, all the methods can be arranged by sorting their slopes in descending order as MMDA, aPAC, SWRLDA, GMSS, HMSS, RSLDA, LDA and STRDA. 
Based on their sensitivity to edge class, we roughly divide them into three categories, \ie, over-sensitive, middle and under-sensitive. From the projected histograms of \emph{syn2}, we can observe that all of LDA, RSLDA, HMSS, GMSS and STRDA fall into the over-sensitive category. Consequently, class $1$, $2$ and $3$ overlap with each other in varying degrees and class $4$ is far away from them under the directions learned by these methods. This phenomenon indicates that the existence of edge class can indeed dominate the projection direction learned by most LDA methods. The classification of the edge and non-edge class is over-emphasized while the distinction between multiple non-edge classes is neglected, which fails to obtain desirable classification results of the whole data. On the contrary, MMDA belongs to the under-sensitive category, since it only maximizes the class pair with minimum distance while completely ignoring the edge class. Although class $1$, $2$ and $3$ can be separated well, class $4$ suffers from a large overlap with others. Compared with these methods, aPAC and SWRLDA can much better results on the dataset with edge class. 
As illustrated in Figure 3(c) and 3(h), class 3 and 4 still suffer from a relatively large overlap on the projection direction learned by aPAC, while SWRLDA can find a projection that provides a better overall trade-off between all the classes. This phenomenon indicates the superiority of our method in classification and discrimination between classes, even though the training set contains edge class.

\subsection{Experiments on Real-World Datasets}
\begin{table}[t]\small
	\renewcommand{\arraystretch}{1.2}
	\tabcolsep= 6pt	
	\begin{center}
		\caption{Summary of the benchmark datasets used for classification in our experiments.}\label{table:dataset}
		\begin{tabular}{lcccc}
			\toprule[1pt]
			Dataset & Samples($n$) & Dimension($d$) & Classes($c$)& Type\\
			\hline
			COIL20 & 1440 & 1024 & 20 &Image\\
			COIL100& 7200 & 1024 & 100&Image\\
			Isolet & 7797 & 617 & 26 &Speech\\
			Wine& 178 & 13 & 3&UCI\\
			CMU PIE& 11554 & 1024 & 68&Image\\
			YaleB& 2414 & 1024 & 38&Image\\
			20Newsgroups& 18846 & 26214 & 20&Document\\
			\bottomrule[1pt]
		\end{tabular}	
	\end{center}
\end{table}
We adopt seven real-world datasets widely used in the previous research to evaluate the performance of the proposed SWRLDA, including COIL20 \cite{nene1996columbia}, COIL100 \cite{nayar1996columbia}, Isolet \cite{fanty1991spoken}, 20Newsgroups \cite{lang1995newsweeder}, Wine \cite{blake1998uci} \cite{debole2005analysis}, CMU PIE \cite{sim2002cmu} and YaleB \cite{georghiades2001few}. Among them, \textbf{COIL100} is an image library of $100$ objects viewed from varying angles and \textbf{COIL20} is a subset of COIL100 with $20$ classes. \textbf{Isolet} is a spoken letter database with $150$ subjects who spoke the name of each letter of the alphabet twice. 
 \textbf{20Newsgroups} is a large-scale dataset collected and originally used for document classification by Lang \cite{lang1995newsweeder} with $18,846$ documents, evenly distributed across $20$ classes.
The \textbf{CMU PIE} contains face images under $5$ poses(C$05$, C$07$, C$09$, C$27$, C$29$), $43$ illumination conditions, and $4$ expressions for each person.
The \textbf{YaleB} consists of $2414$ frontal-face images of $38$ individuals captured under various lighting conditions. \textbf{Wine} is a dataset from UCI machine learning repository \cite{blake1998uci}, which consists of 178 samples with 13 attributes from 3 classes.
The statistics for each dataset are summarized in \Cref{table:dataset}.

\subsubsection{Experimental Setup}
According to the standard $5$-fold cross validation, each dataset is randomly partitioned into $5$ sets with $4$ parts for training and the remaining part for testing in each round. During the training period, the optimal projection matrix can be obtained when the relative change of objective function is below $10^{-6}$. 
Once the optimal projection matrix is obtained, we use $k$-Nearest Neighbor ($k$-NN) method ($k = 1$ is used in this work) to classify the data points in the projected space. In $k$-NN, we use the most widely used Euclidean distance as the distance metric.
For a comprehensive comparison, the following five performance evaluation metrics are considered to evaluate the quality of different LDA methods for classification: (a) average classification accuracy with standard deviation; (b) minimum pairwise distance in the projected low-dimensional subspace; (c) two-dimensional data visualization; (d) average running time; (e) robustness of models.

\subsubsection{Classification Accuracy}
\begin{table*}[t]\small
	\renewcommand{\arraystretch}{1.2}
	\tabcolsep=10 pt	
	\begin{center}
		\caption{Performance comparison in terms of ACC $\pm$ std\% for different methods over six benchmark datasets.}\label{table:accuracy}
		\begin{tabular}{lcccccc}
			\toprule[1pt]
			Method & COIL20 & COIL100 & Isolet & 20Newsgroups & CMU PIE&YaleB\\
			\hline
			LDA&93.19$\pm$0.121&87.62$\pm$0.085 &93.91$\pm$0.066 &- &93.44$\pm$0.052&99.13$\pm$0.027  \\
			RSLDA&97.35$\pm$0.167&89.08$\pm$0.049 &85.62$\pm$0.025 &- &94.89$\pm$0.040&95.38$\pm$0.034\\
			aPAC &94.93$\pm$0.176&88.36$\pm$0.061 &93.63$\pm$0.085 &- &94.34$\pm$0.039&99.58$\pm$0.035\\
			GMSS &95.14$\pm$0.097&87.76$\pm$0.099 &93.76$\pm$0.032 &- &94.72$\pm$0.036&99.63$\pm$0.051\\
			HMSS &95.35$\pm$0.089&87.95$\pm$0.143&93.49$\pm$0.094 &- &94.72$\pm$0.030&99.12$\pm$0.020\\
			STRDA&90.69$\pm$0.108&87.45$\pm$0.065 &92.68$\pm$0.070 &- &90.37$\pm$0.078&99.46$\pm$0.028\\
			MMDA&95.69$\pm$0.061&88.36$\pm$0.101&94.07$\pm$0.029 &- &94.95$\pm$0.064&99.21$\pm$0.079\\
			\hline
			SWRLDA&\textbf{99.65}$\pm$0.042&\textbf{95.01}$\pm$0.045 &\textbf{96.64}$\pm$0.033 &\textbf{89.81}$\pm$0.058 &\textbf{97.02}$\pm$0.041&\textbf{99.75}$\pm$0.027\\	
			\bottomrule[1pt]				
		\end{tabular}	
	\end{center}
\end{table*}
\begin{figure*}[t] 	
	\centering 	
	\subfigure[COIL20]{\label{4a}		
		\includegraphics[width=0.32\linewidth]{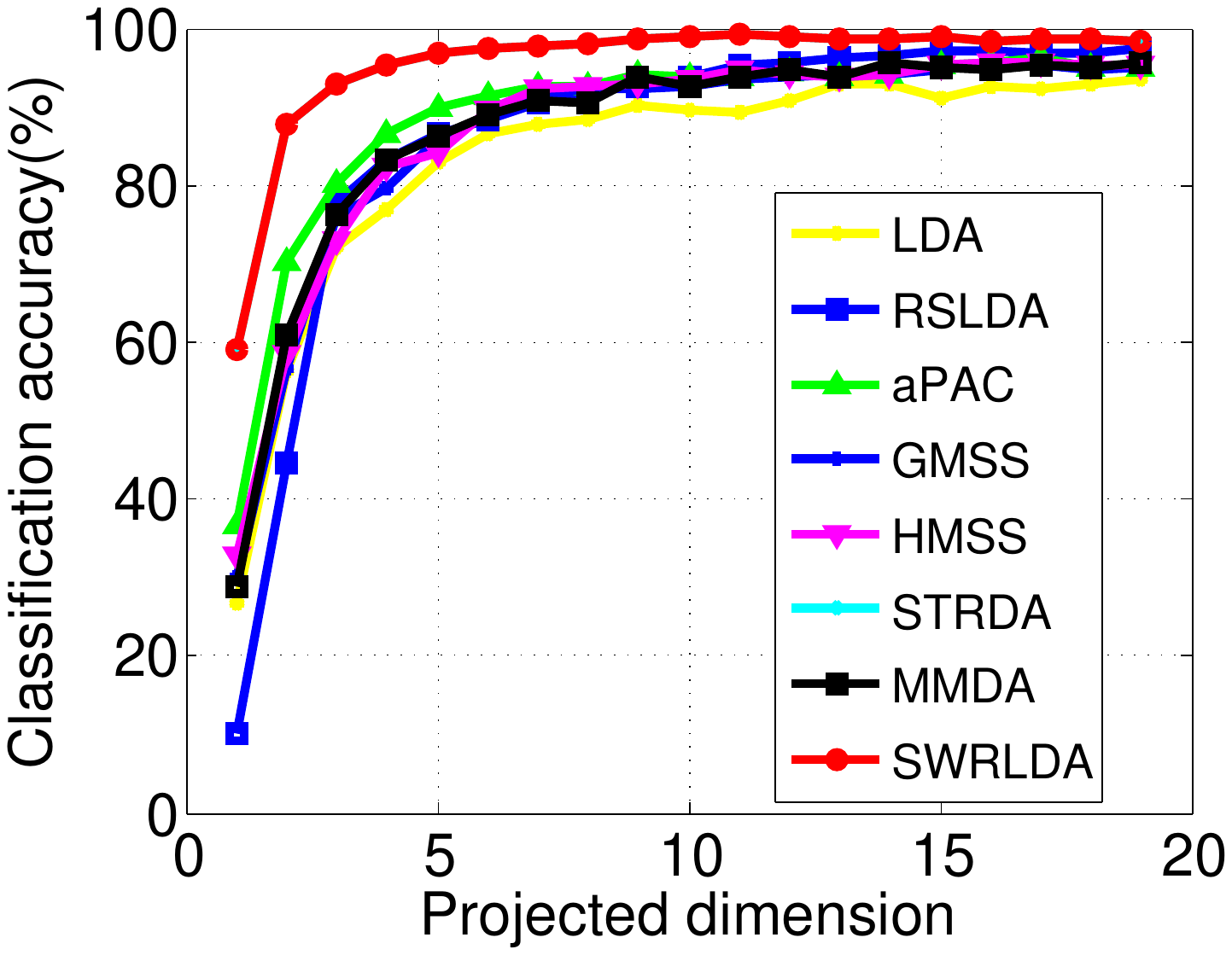}}
	\subfigure[COIL100]{\label{4b}				
		\includegraphics[width=0.32\linewidth]{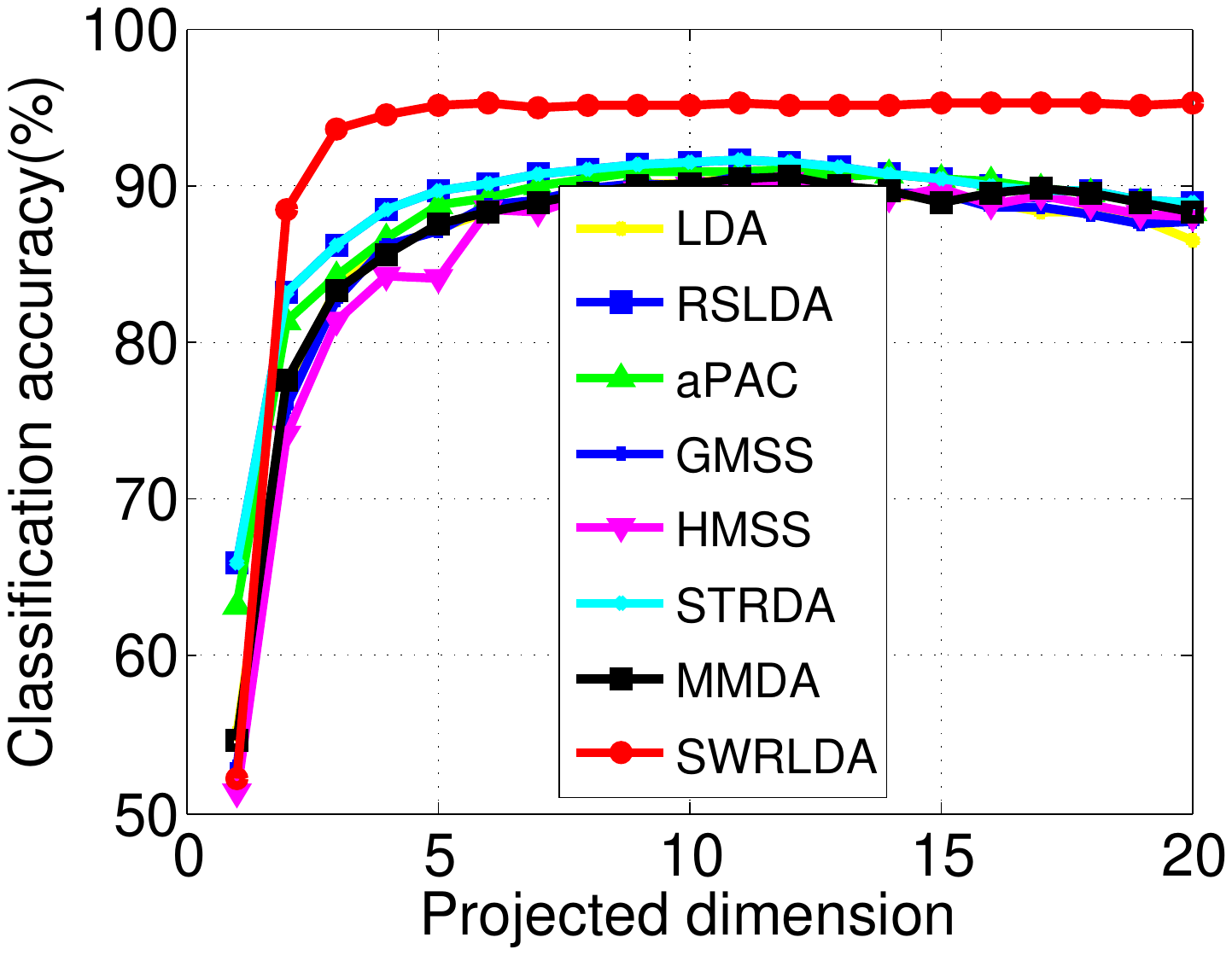}}
	\subfigure[Isolet]{\label{4c}		
		\includegraphics[width=0.32\linewidth]{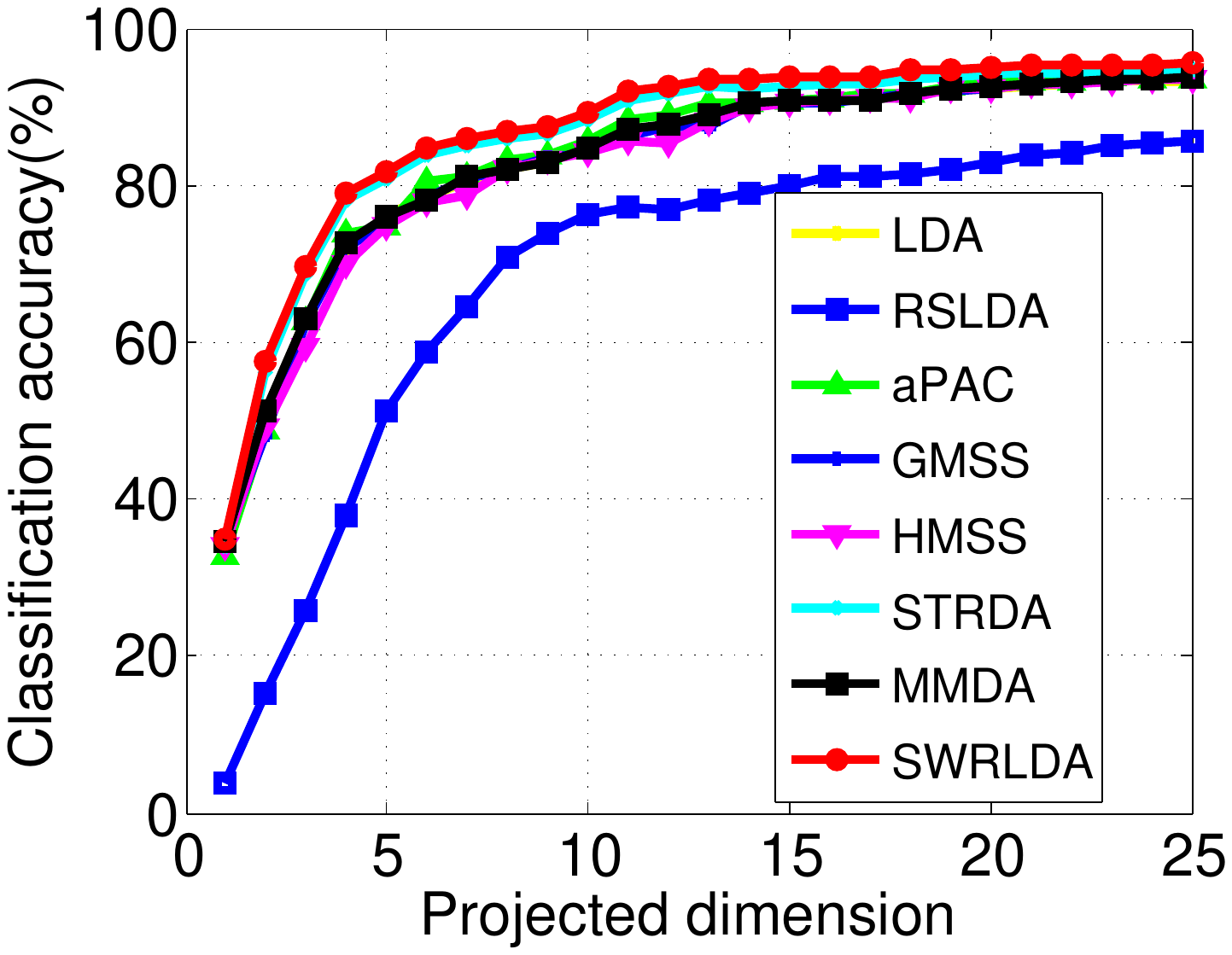}}	
	\caption{Classification accuracy with respect to different projected dimensions achieved by different methods.}	
	\label{dimension}	
\end{figure*}
The classification accuracy and standard deviation are the most widely used evaluation metrics for discriminant dimension reduction methods.
For fair comparison, the mean classification accuracy (ACC) and standard deviation (std) of $5$-fold cross validation with the same projection dimension $c-1$ corresponding to each method are reported in Table \ref{table:accuracy}. Notably, we only report the results on 20Newsgroups achieved by our method, because all the comparison methods fail to deal with this large-scale dataset with affordable cost ($>$24h).
\Cref{dimension} demonstrates the classification accuracy with respect to various reduced dimensions over three datasets.
From these results, we can make the following observations:
\begin{enumerate}[$\bullet$,leftmargin=*,topsep=0pt,noitemsep]
	\item The proposed SWRLDA achieves the highest classification accuracy among all the comparisons over all the datasets. Specifically, the classification accuracy improves $2.4\%$ on COIL20, $6.7\%$ on COIL100, $2.7\%$ on Isolet, $2.2\%$ on CMU PIE and $0.1\%$ on YaleB respectively. We can attribute this improvement to the combination of ``avoiding optimal mean'' strategy and self-weighted between-class distance criterion based on $\ell_{2,1}$-norm. 
	\item As shown in \Cref{dimension}, SWRLDA obtains the best performance in most dimensions over all the datasets. With the decrease of projection dimension $m$, the classification accuracy of comparative methods decreases significantly, while the proposed approach is more stable except when $m$ is too small (\eg, $m=1$). This suggests that SWRLDA is more robust to the selection of projected feature dimensions. 
\end{enumerate} 

\subsubsection{Average Minimum Pairwise Distance}
\begin{figure}[t] 	
	\centering 	
	\subfigure[COIL20]{\label{5a}		
		\includegraphics[width=0.36\linewidth]{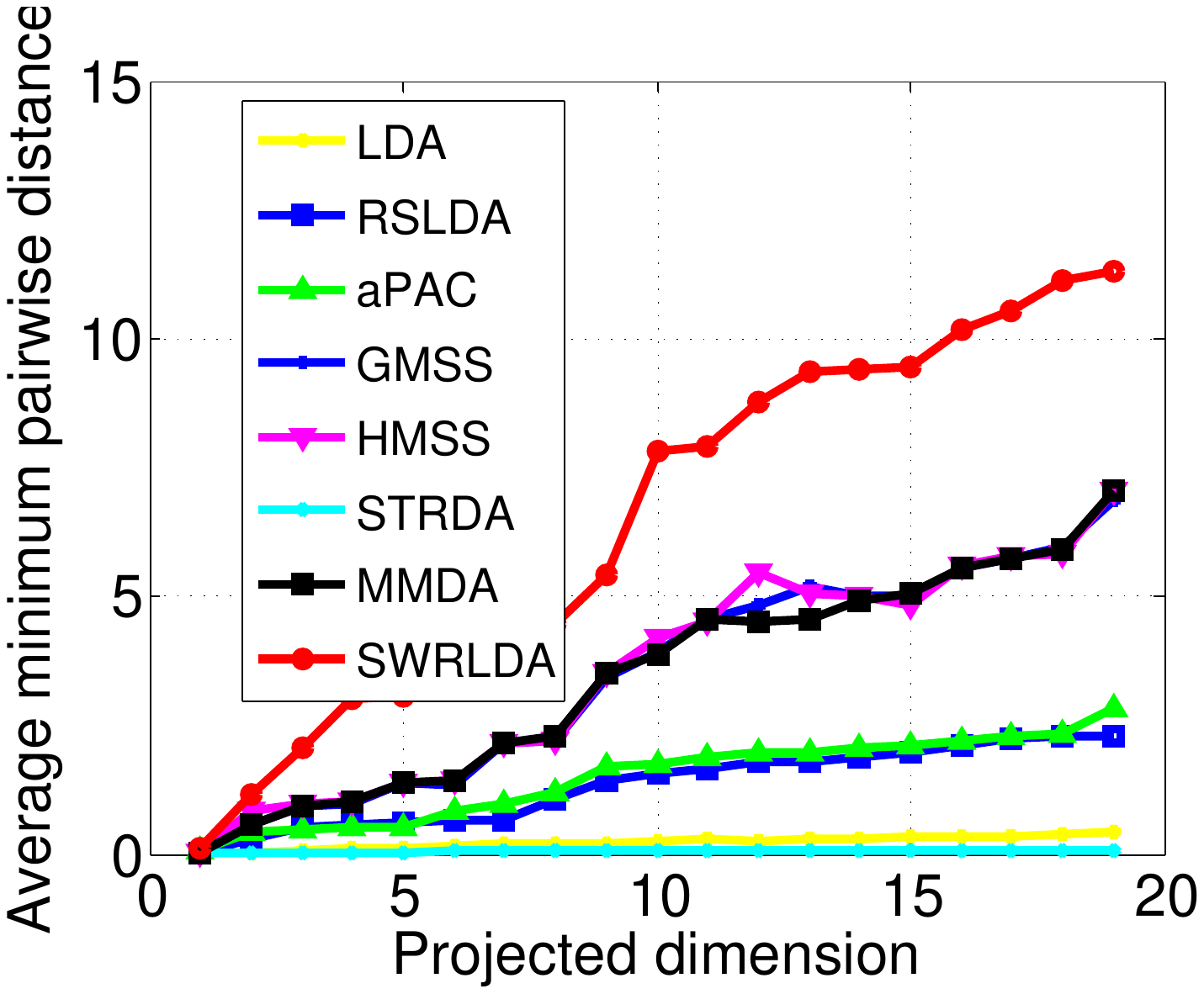}}
	\subfigure[Isolet]{\label{5b}				
		\includegraphics[width=0.35\linewidth]{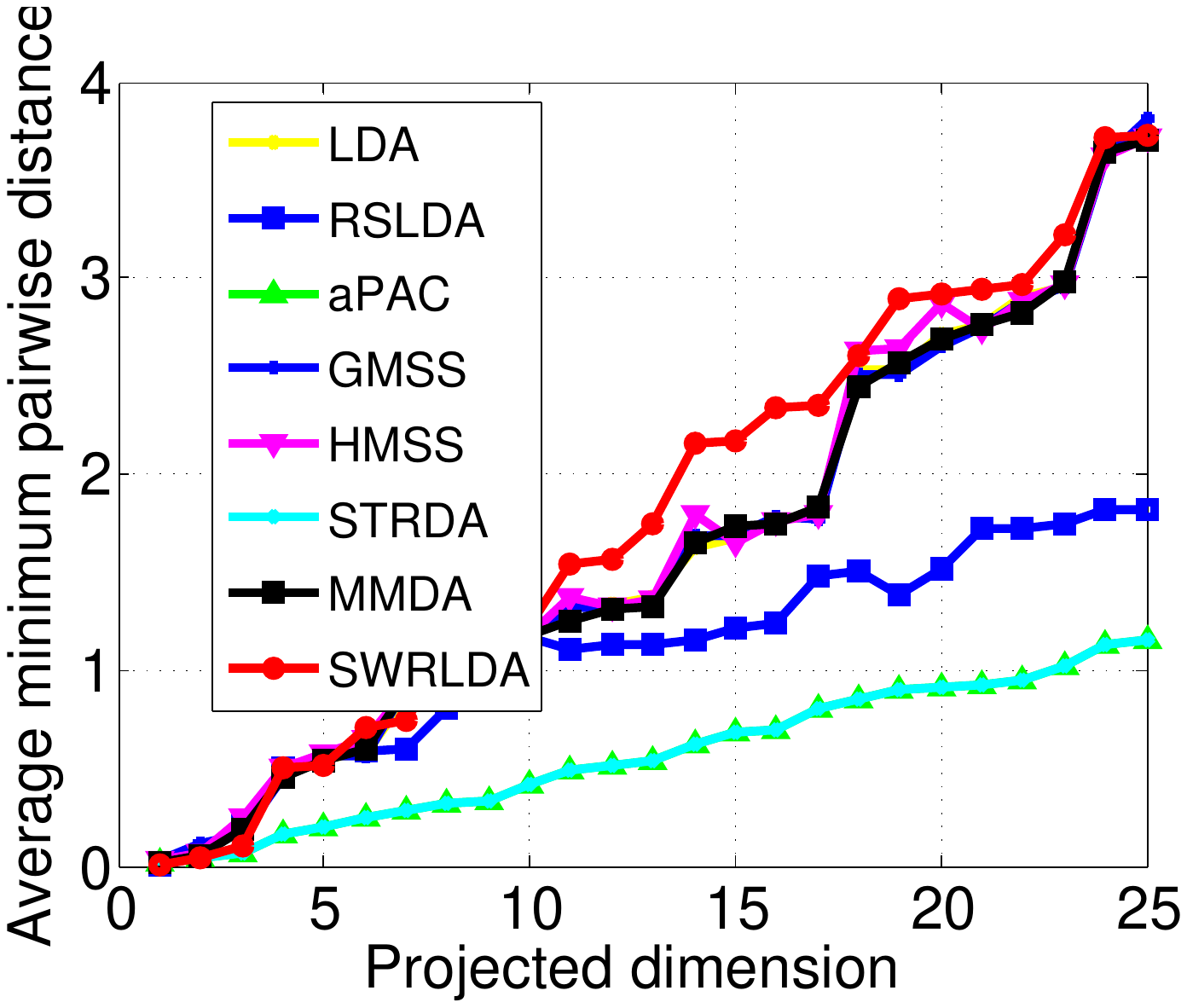}}	
	\caption{Average minimum pairwise distance with respect to different projected dimensions.}	
	\label{distance}	
\end{figure}
In general, a larger minimum pairwise distance indicates a better separation of all class pairs in the low-dimensional subspace. For each projected dimension, we calculate the minimum pairwise distance in the projected subspace by averaging $100$ independent runnings. 
The experimental results on COIL20 and Isolet dataset are demonstrated in \Cref{5a} and \Cref{5b}, respectively. As shown in \Cref{5a} and \Cref{5b}, the average minimum pairwise distance of each method increases continuously with the increasing of projected dimensions. Specifically, when projecting to $1$-dimensional subspace, almost all the compared algorithms obtain the same result. However, as the dimension increases, the proposed algorithm gains more significant performance improvements compared with other methods.
\subsubsection{Two-Dimensional Visualization of Wine Dataset}
For qualitative analysis, the two-dimensional scatter plot in the embedding subspace is exploited to visualize the separability of each model over Wine dataset. Specifically, a two-dimensional subspace is first learned with all the samples of Wine dataset, and then all of them are projected onto the two-dimensional subspace. From the graphical representations shown in \Cref{wine}, we can see that LDA, RSLDA and aPAC achieve the worst separation results, since all the three classes overlap with each other seriously.
Compared with them, the samples are almost separable in the subspace generated by GMSS, HMSS, STRDA, MMDA and SWRLDA. While there are still slight overlaps among the results of these methods, the proposed SWRLDA is able to learn a separation of no overlapped samples with the largest minimum between-class distance. This indicates that SWRLDA can still make a good trade-off among all the classes by learning appropriate weights even though the training data do not contain edge class.
\begin{figure*}[t] 	
	\centering 	
	\subfigure[LDA]{\label{6a}		
		\includegraphics[width=0.233\linewidth]{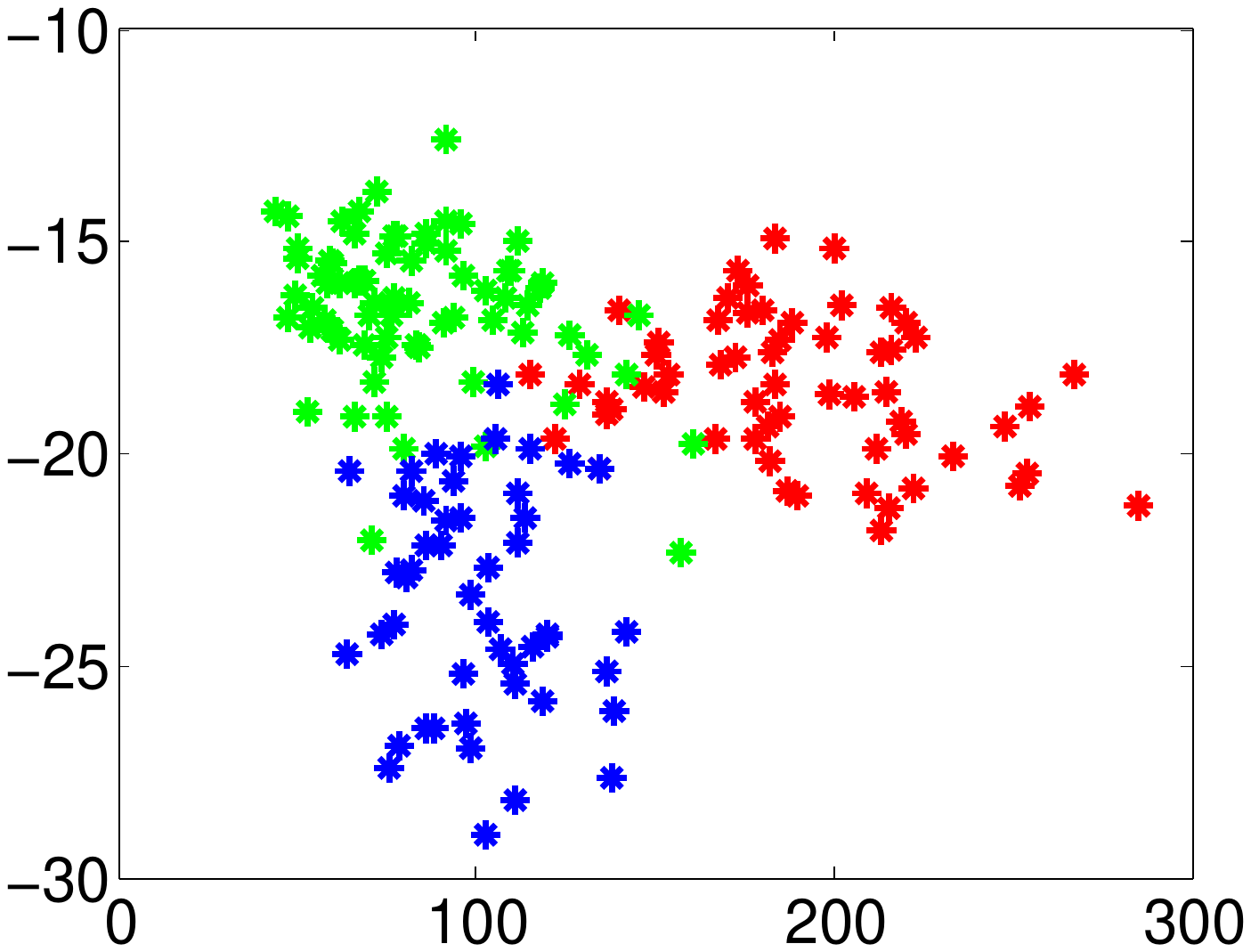}}
	\subfigure[RSLDA]{\label{6b}				
		\includegraphics[width=0.23\linewidth]{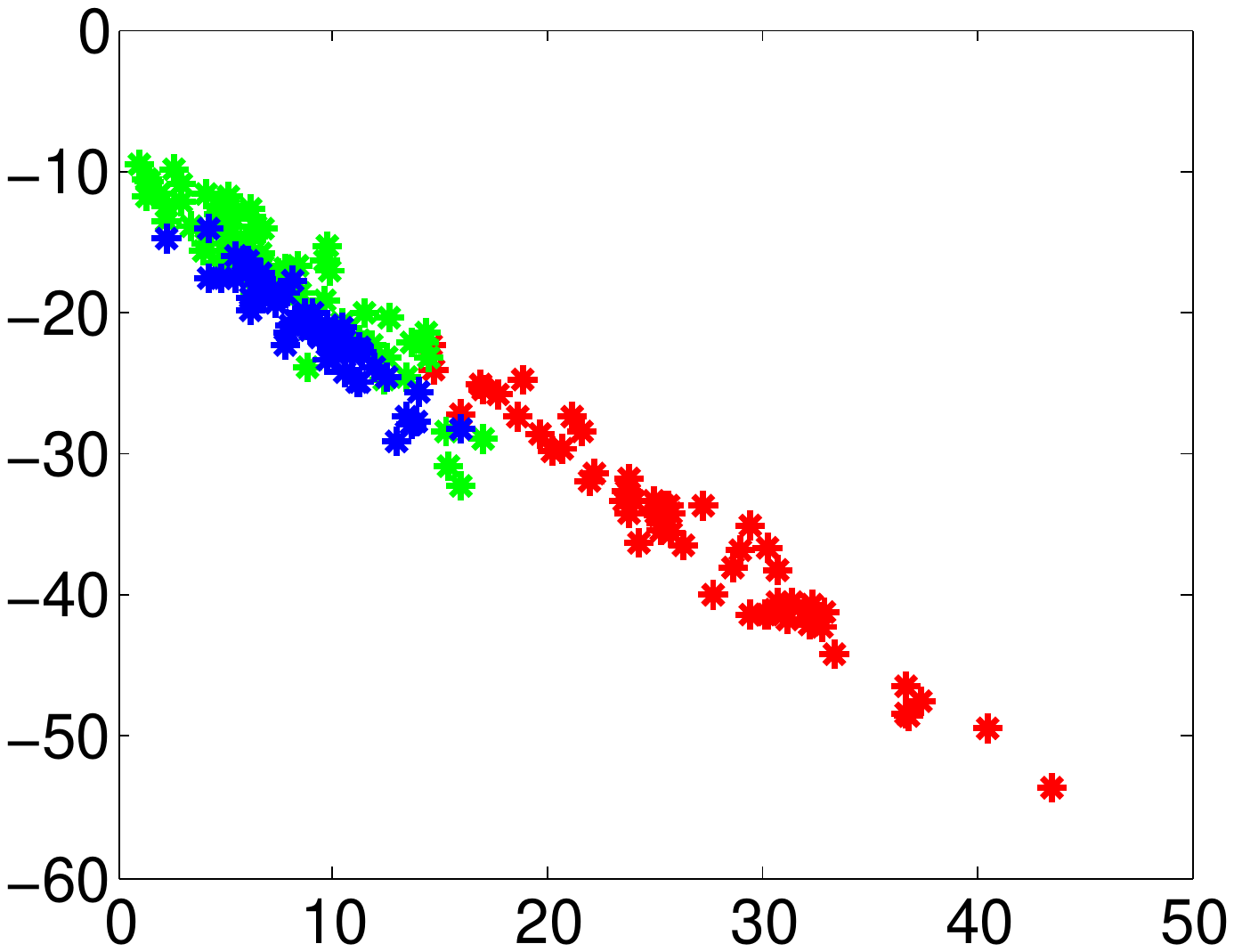}}	
	\subfigure[aPAC]{\label{6c}		
		\includegraphics[width=0.227\linewidth]{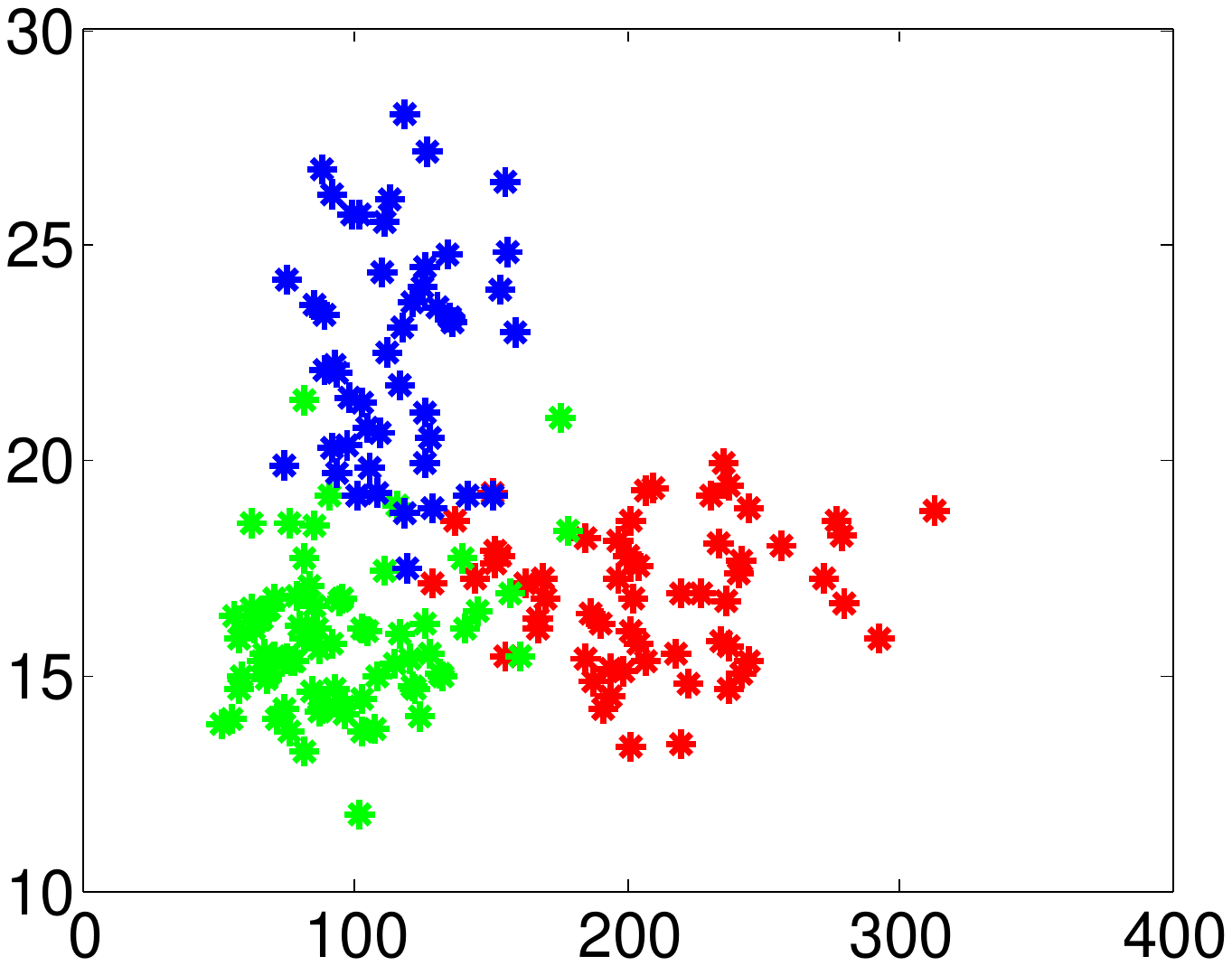}}		
	\subfigure[GMSS]{\label{6d}		
		\includegraphics[width=0.225\linewidth]{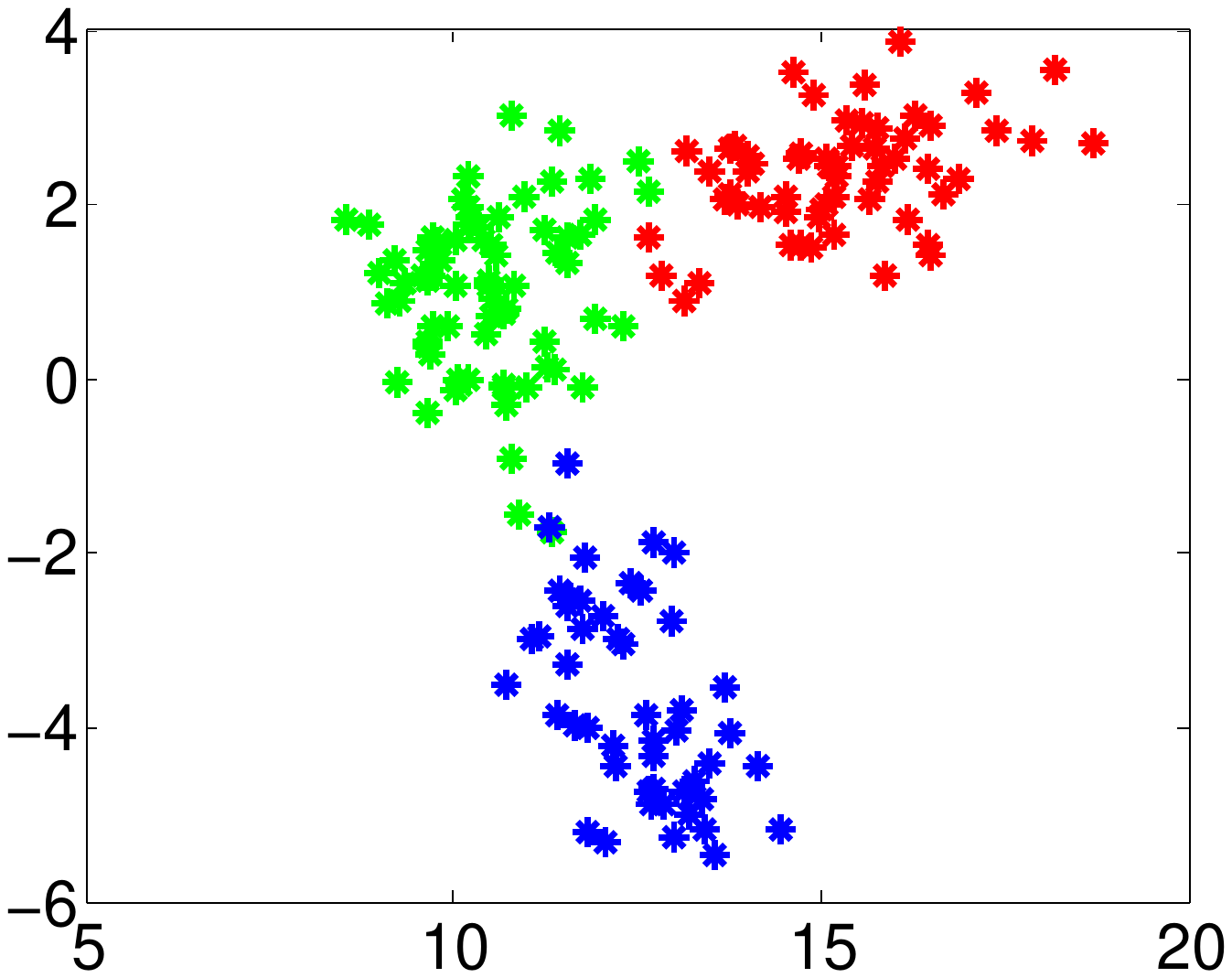}}
	\vfill	
	\subfigure[HMSS]{\label{6e}		
		\includegraphics[width=0.227\linewidth]{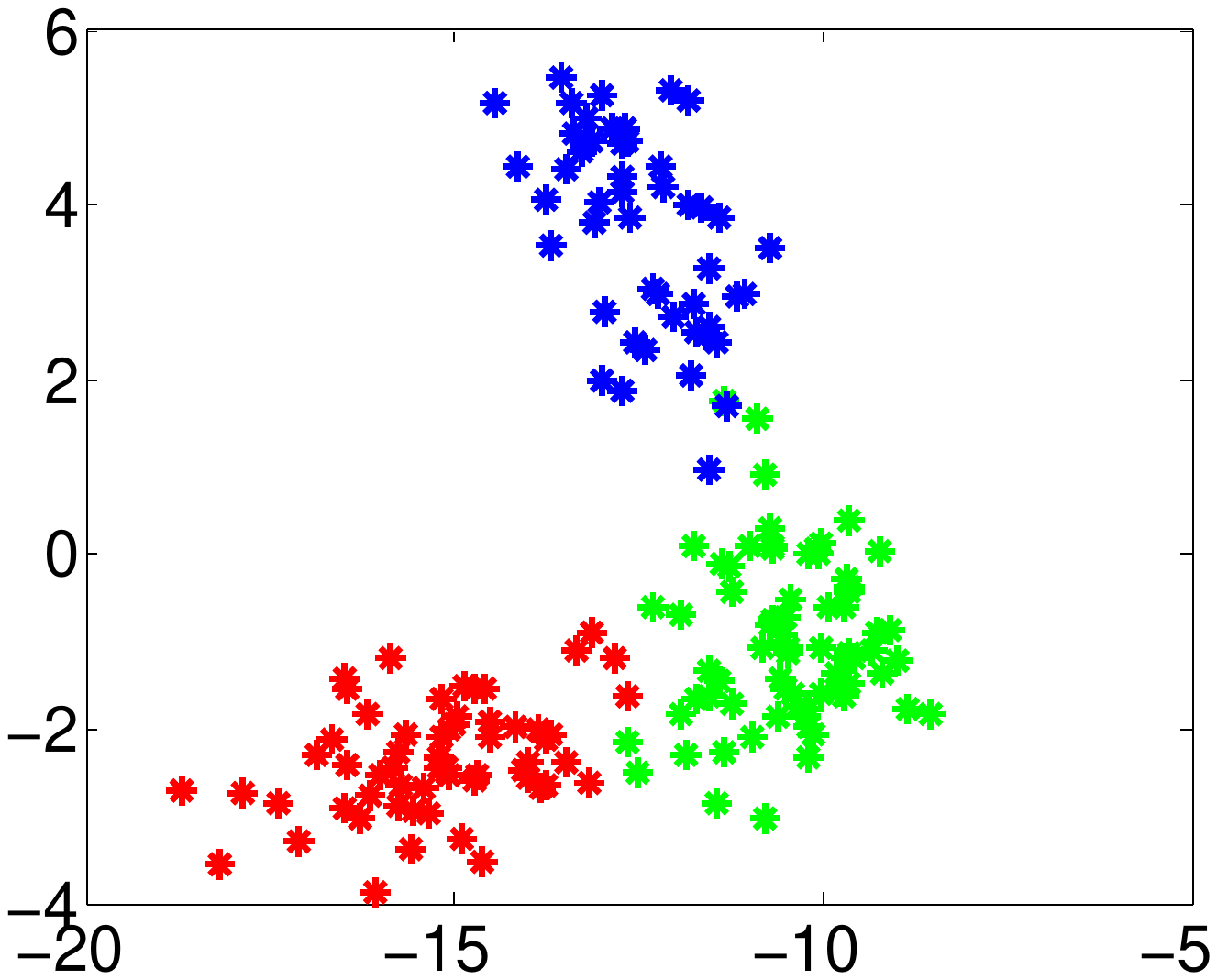}}
	\subfigure[STRDA]{\label{6f}				
		\includegraphics[width=0.23\linewidth]{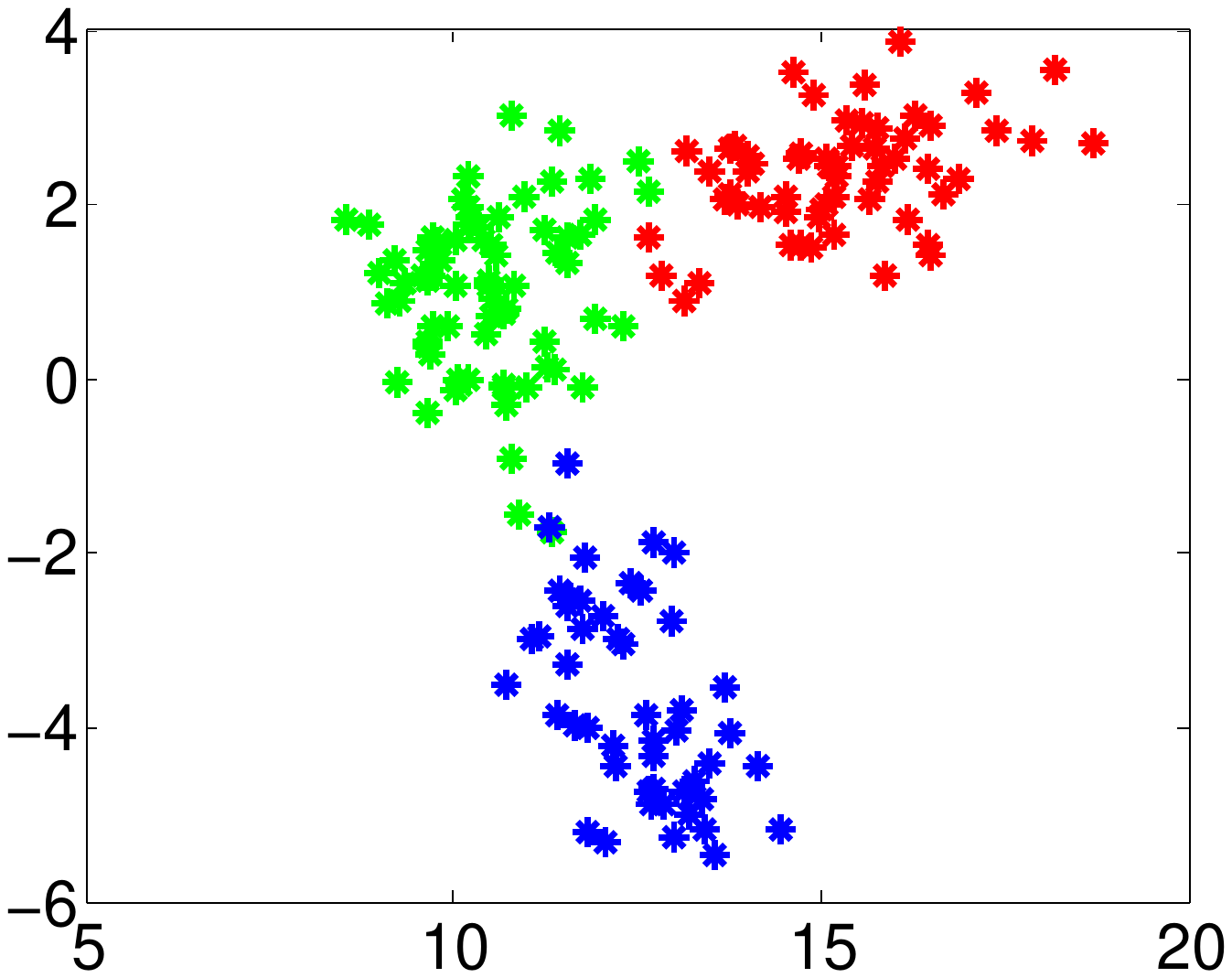}}	
	\subfigure[MMDA]{\label{6g}		
		\includegraphics[width=0.23\linewidth]{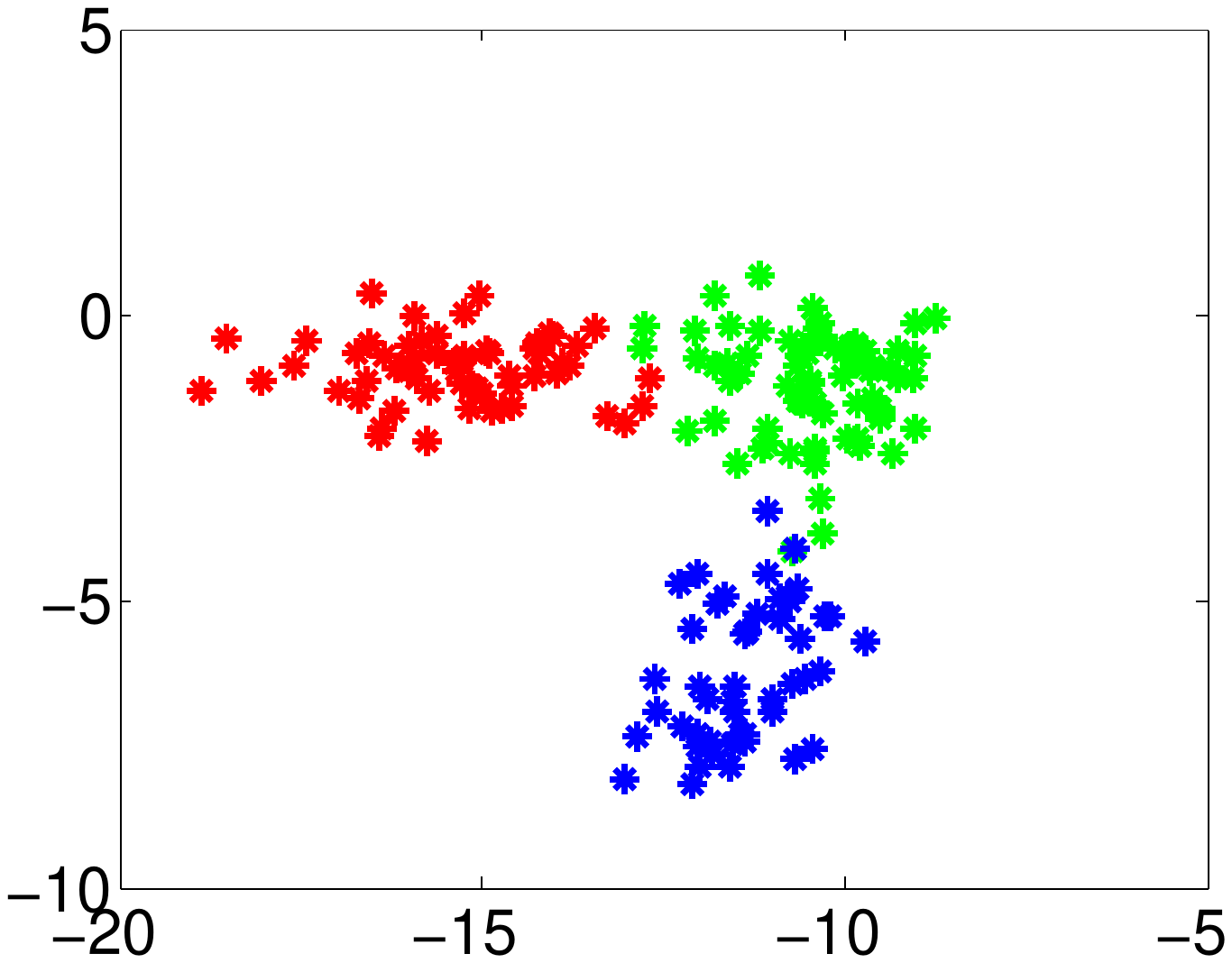}}		
	\subfigure[SWRLDA]{\label{6h}		
		\includegraphics[width=0.225\linewidth]{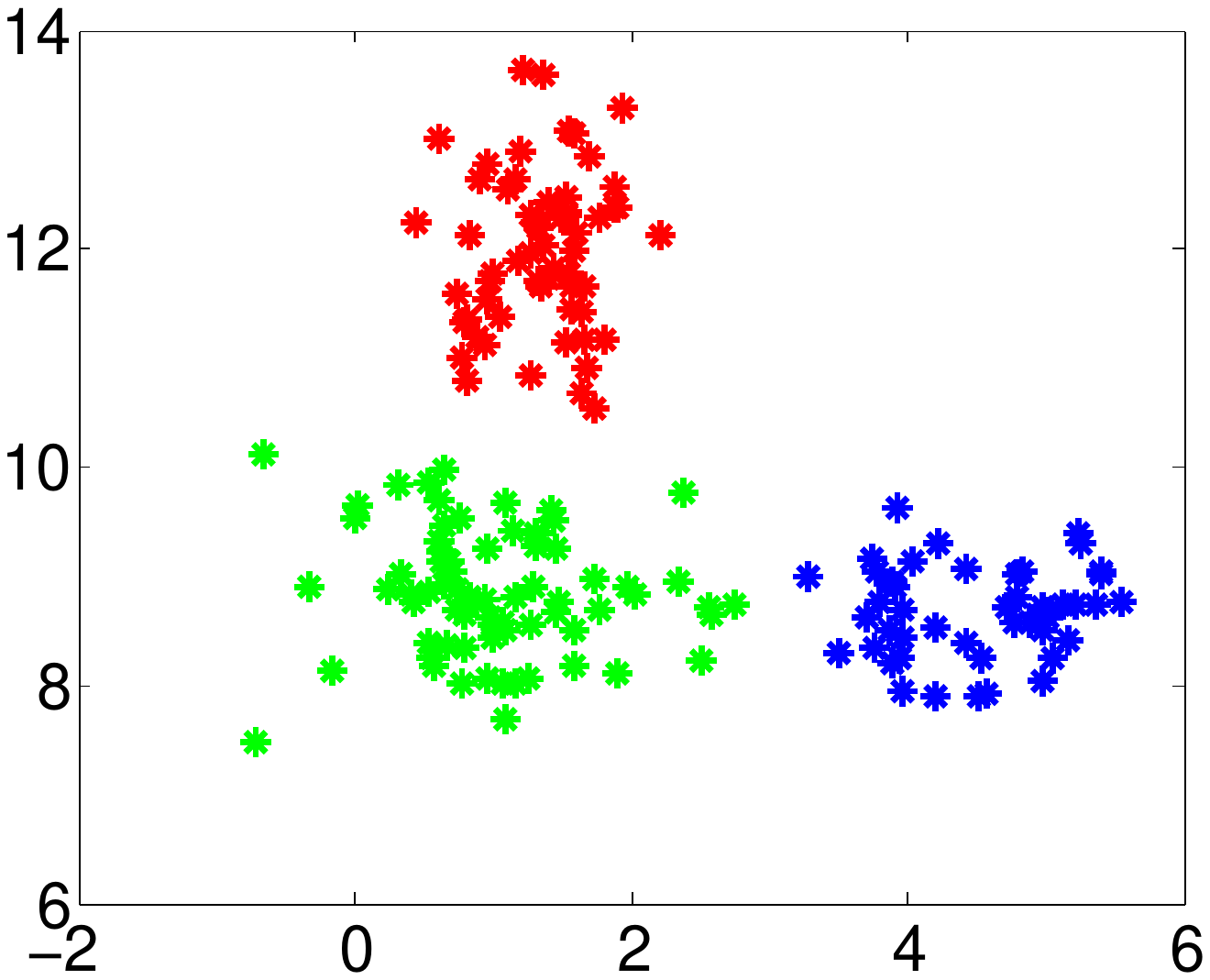}}
	\caption{Visualization of Wine dataset projected onto a two-dimensional subspace learned by different methods.}	
	\label{wine}	
\end{figure*}
\subsubsection{Average Running Time}
\begin{table}[t]\small
	\renewcommand{\arraystretch}{1.2}
	\tabcolsep=5 pt	
	\begin{center}
		\caption{Average running times with respect to different methods over six benchmark datasets.}\label{time}
		\begin{tabular}{llccccc}
			\toprule[1pt]
			Method & COIL20 & COIL100 & Isolet&20Newsgroups & PIE&YaleB\\
			\hline			
			LDA&18.95s&57.13s& 12.70s&$>$24h&12.18s& 5.94s \\
			RSLDA&10.61s&47.95s&22.43s&$>$24h&27.39s&6.81s\\
			aPAC &14.94s&54.11s&2.286s&$>$24h&7.95s&5.49s\\
			GMSS &20.31s&57.90s&11.97s&$>$24h&13.14s&6.38s\\
			HMSS &21.24s&60.30s&12.48s&$>$24h&13.25s&6.37s\\
			STRDA&29.47s&143.22s&30.94s&$>$24h&126.38s&106.26s\\
			MMDA&25.17s&90.94s&18.29s&$>$24h&71.26s&40.01s\\
			\hline
			SWRLDA&0.63s&2.11s&0.36s&51.25s&0.65s&0.37s\\	
			\bottomrule[1pt]				
		\end{tabular}	
	\end{center}
\end{table}
All the comparison methods need to solve a time-consuming eigen-decomposition problem in each iteration with time complexity $O(d^3)$. By contrast, the proposed SWRLDA naturally avoids the eigen-decomposition and only needs to conduct SVD with time complexity $O(dm^2+m^3)$. Due to the fact that the original feature dimensionality $d$ is much larger than the reduced dimensionality $m$, the proposed SWRLDA is guaranteed to be much more efficient than the comparison methods.
For fair comparison, the average running times of each method over each dataset with $c-1$ reduced dimension are demonstrated in \Cref{time}. From \Cref{time}, we can make the following two observations:
\begin{itemize}
\item For 20Newsgroups with a very large number of samples and features, all the comparison methods become infeasible due to the intractable computation complexity ($>24$ h). This is because they need to produce and store large matrices that cannot be fit into the memory when both sample number and feature dimension are large.
\item The proposed SWRLDA only takes several seconds to finish the run over the large-scale 20Newsgroups dataset, which presents its high efficiency for handling large-scale dimensionality reduction problems. For the other smaller datasets, the calculation time of SWRLDA is also noticeably shorter (15 times) than other algorithms.

\end{itemize}
\subsubsection{The Evaluation of Robustness}
\begin{figure*}[t] 	
	\centering 	
	\subfigure[]{\label{7a}		
		\includegraphics[width=0.23\linewidth]{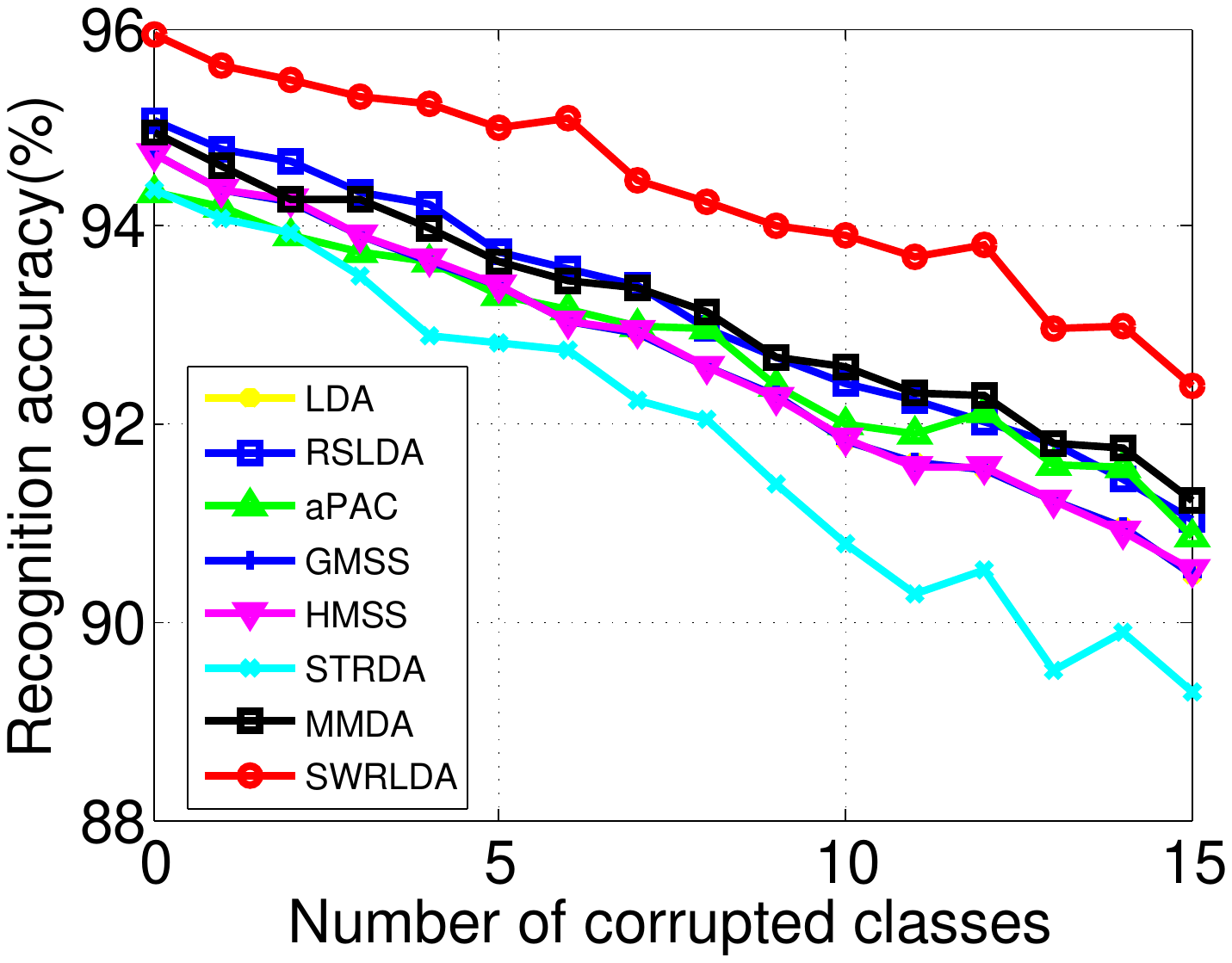}}
	\subfigure[]{\label{7b}				
		\includegraphics[width=0.23\linewidth]{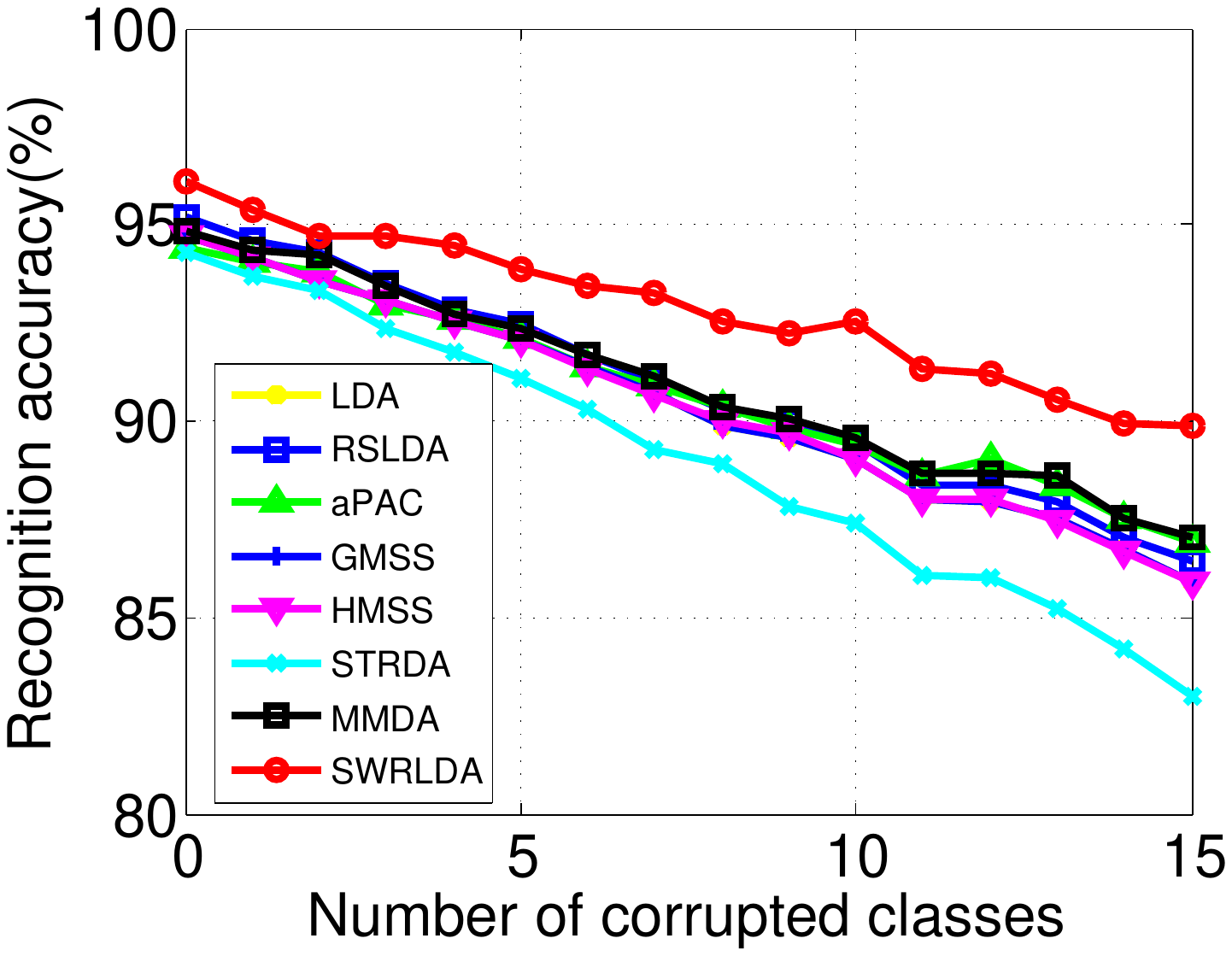}}
	\subfigure[]{\label{7c}		
		\includegraphics[width=0.23\linewidth]{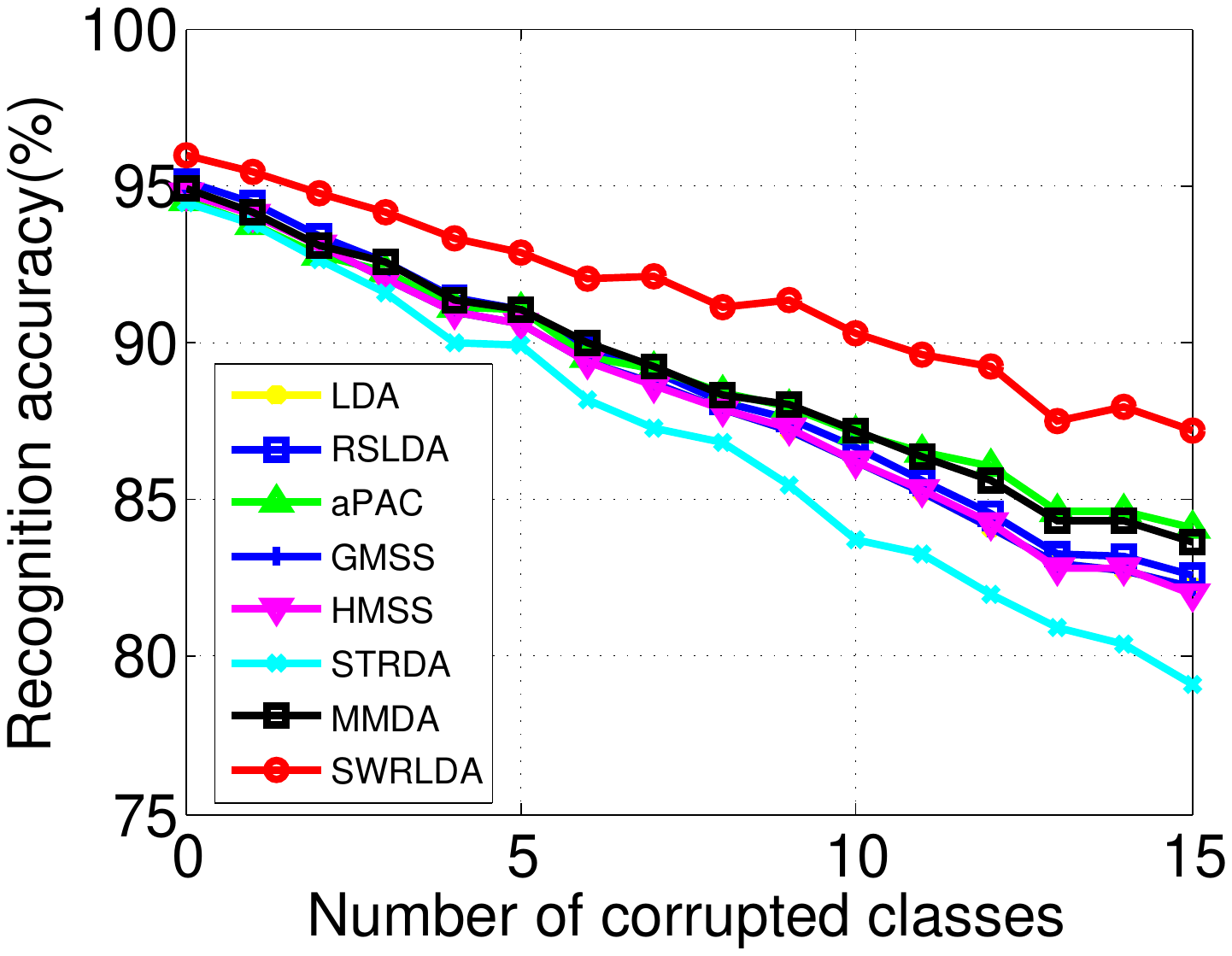}}		
	\subfigure[]{\label{7d}		
		\includegraphics[width=0.23\linewidth]{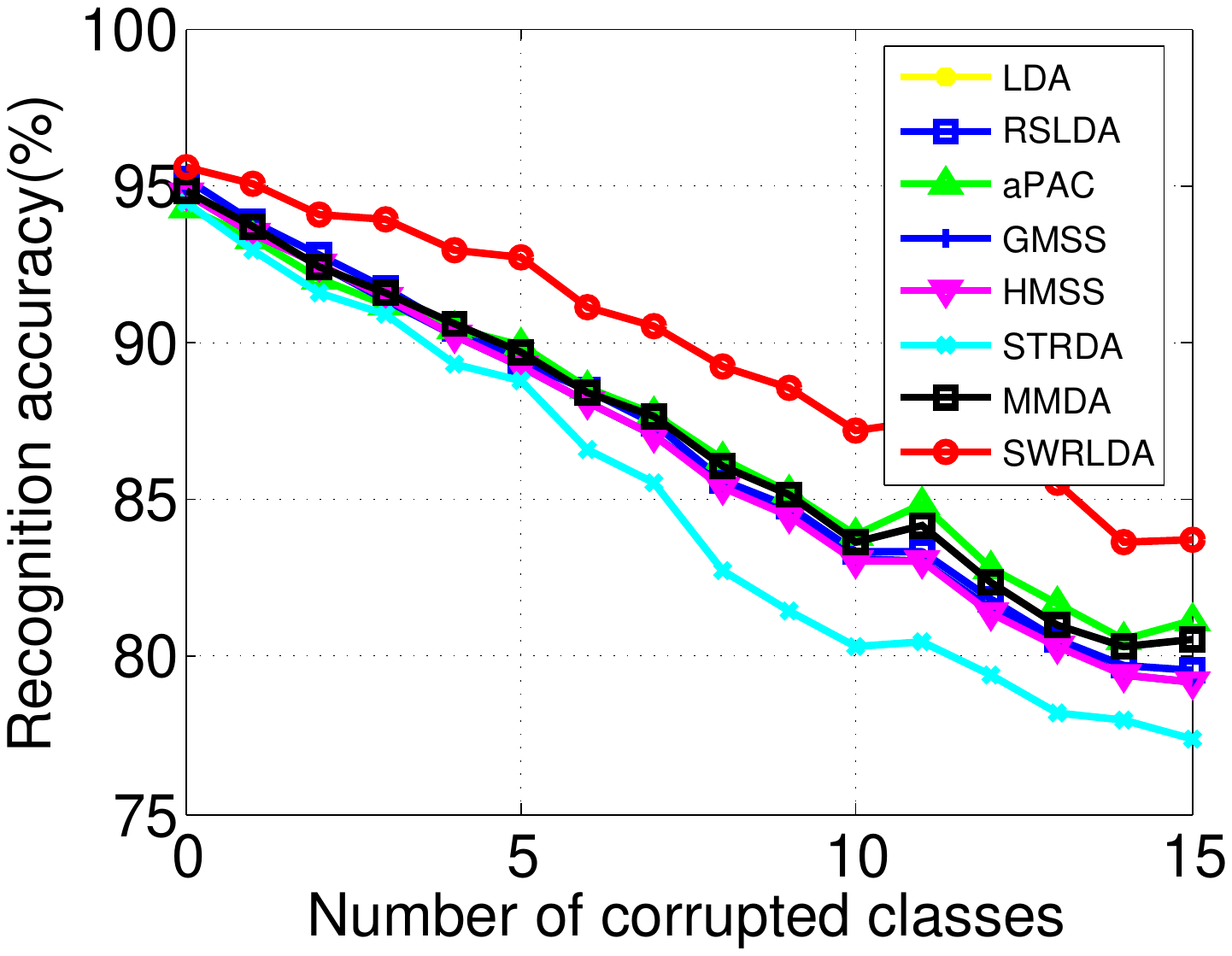}}
	\subfigure[]{\label{7e}		
		\includegraphics[width=0.23\linewidth]{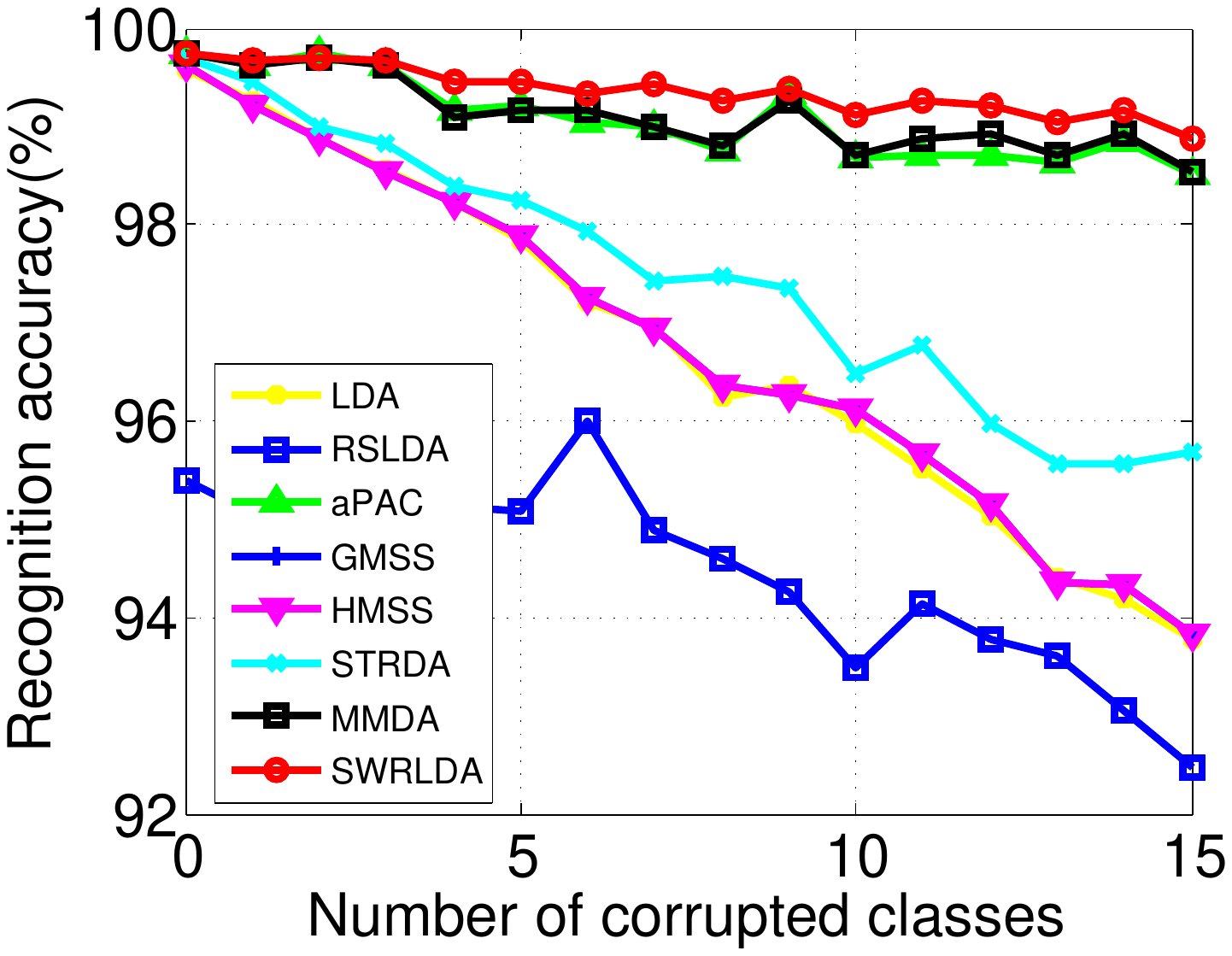}}
	\subfigure[]{\label{7f}				
		\includegraphics[width=0.23\linewidth]{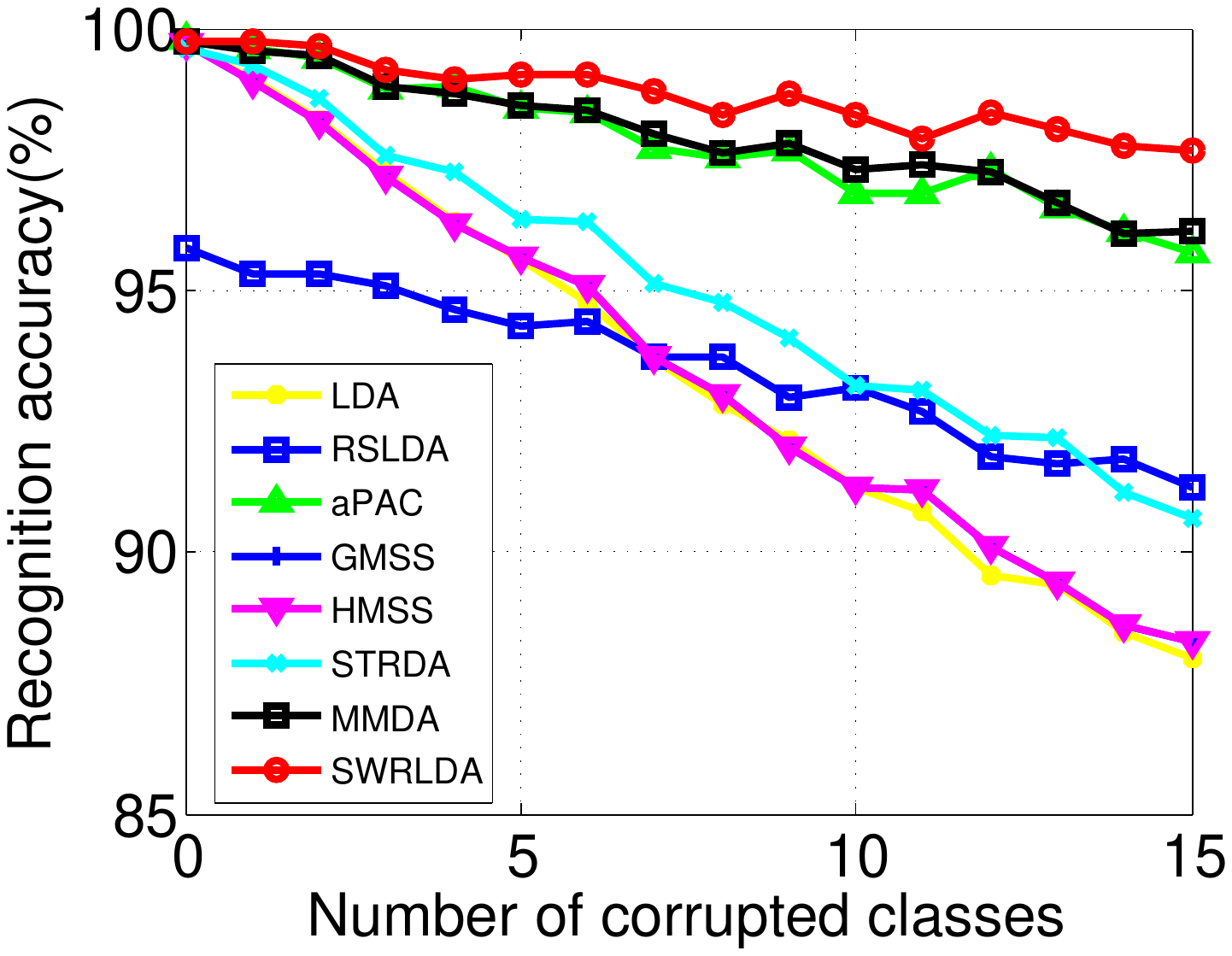}}
	\subfigure[]{\label{7g}		
		\includegraphics[width=0.23\linewidth]{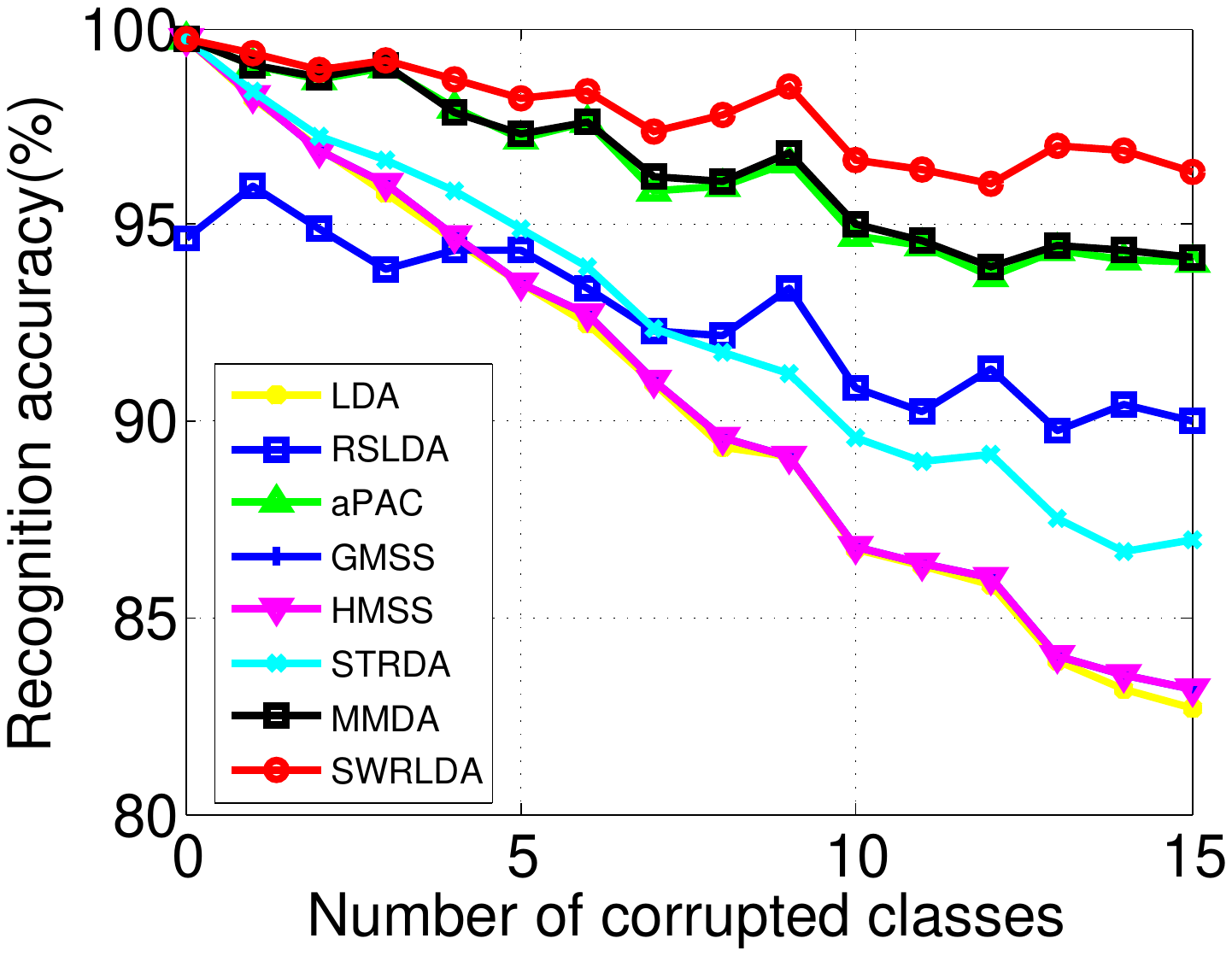}}		
	\subfigure[]{\label{7h}		
		\includegraphics[width=0.23\linewidth]{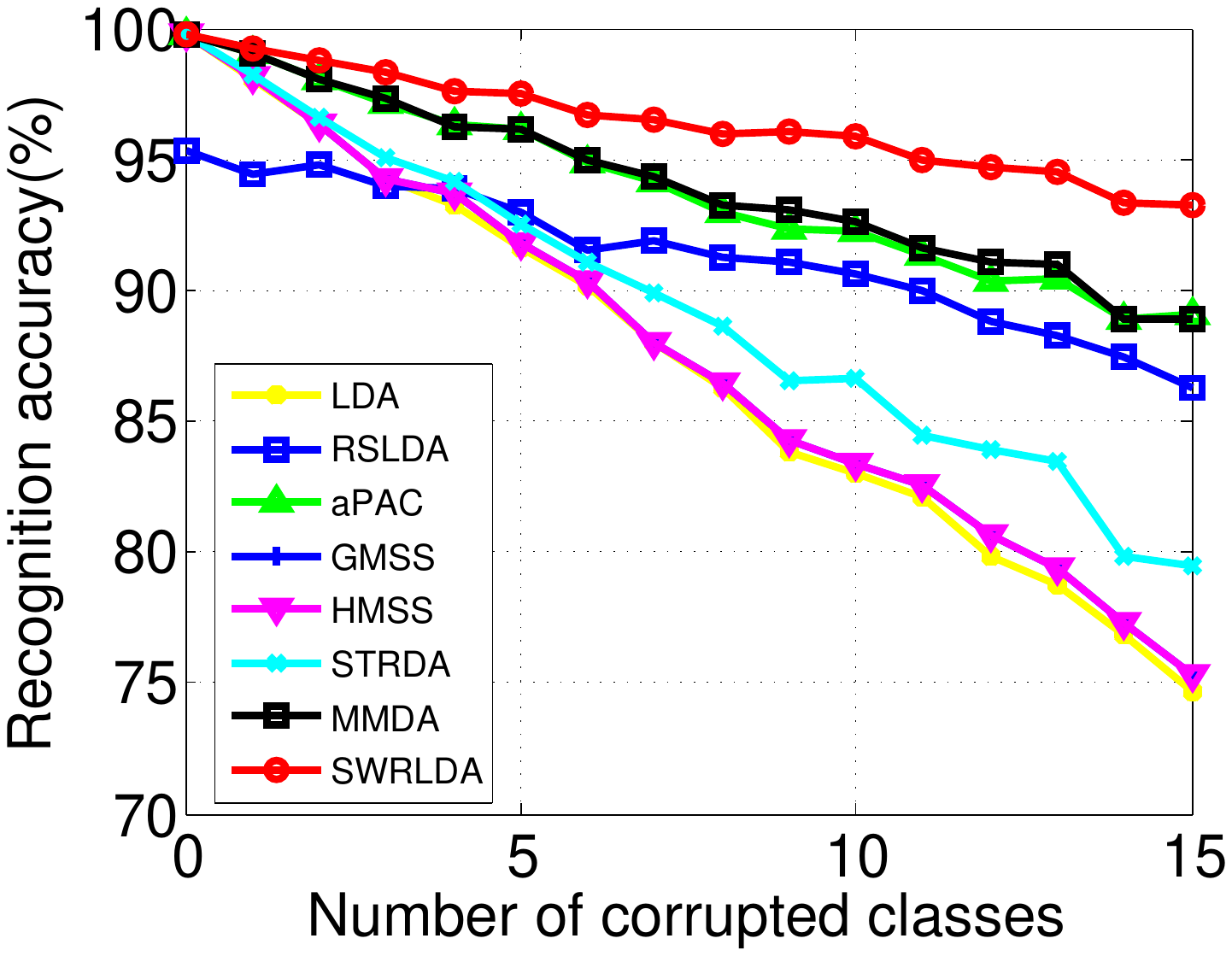}}
	\caption{Recognition accuracy with (a) 20\% (b) 40\% (c) 60\% (d) 80\% corrupted samples in each corrupted class over CMU PIE and (e) 20\% (f) 40\% (g) 60\% (h) 80\% corrupted samples in each corrupted class over YaleB.}	
	\label{PIE}	
\end{figure*}
To evaluate the robustness of the proposed SWRLDA, we further design a face recognition experiment over CMU PIE and YaleB with pixel corruption. Specifically, a certain proportion of samples are randomly selected from a class to be corrupted with ``salt and pepper'' noise and the corruption degree of each image is $30$\%, which can be regarded as an attempt to simulate an edge class to some extent. With $20$, $40$, $60$ and $80$ percentage of images corrupted in each class, the recognition accuracy with respect to varying number of corrupted classes ranging from $0$ to $15$ over CMU PIE and YaleB is demonstrated in Figure \ref{PIE}. Note that SWRLDA only achieves slightly better performance than the comparisons without corrupted class. As the number of corrupted classes increases, all the comparisons have varying degrees of performance degradation, while the superiority of SWRLDA becomes more obvious. The performance of competitors decreases dramatically while the performance of our method keeps more stable, which demonstrates the strong robustness of SWRLDA.

\subsubsection{Convergence Analysis}
In this part, we test the convergence speed of the proposed SWRLDA on all the employed datasets in Fig. \ref{converge}.
Specifically, we use $50$ times different random initializations of $\mathbf{W}$ to calculate objective function value in each iteration until converge. For each employed dataset, the mean and standard deviation of the objective function value in each iteration are illustrated in Fig. \ref{converge}.
Notably, the objective function value of our method converges in $4$ iterations on all the datasets, and thus the convergence speed is very fast. Moreover, the objective function value will converge to a stable value with different random initializations, which indicates that the solution obtained by our iterative algorithm is a global optimum on these data sets in practice.
\begin{figure*}[t] 	
	\centering 	
	\subfigure[COIL20]{\label{4a}		
		\includegraphics[width=0.3\linewidth]{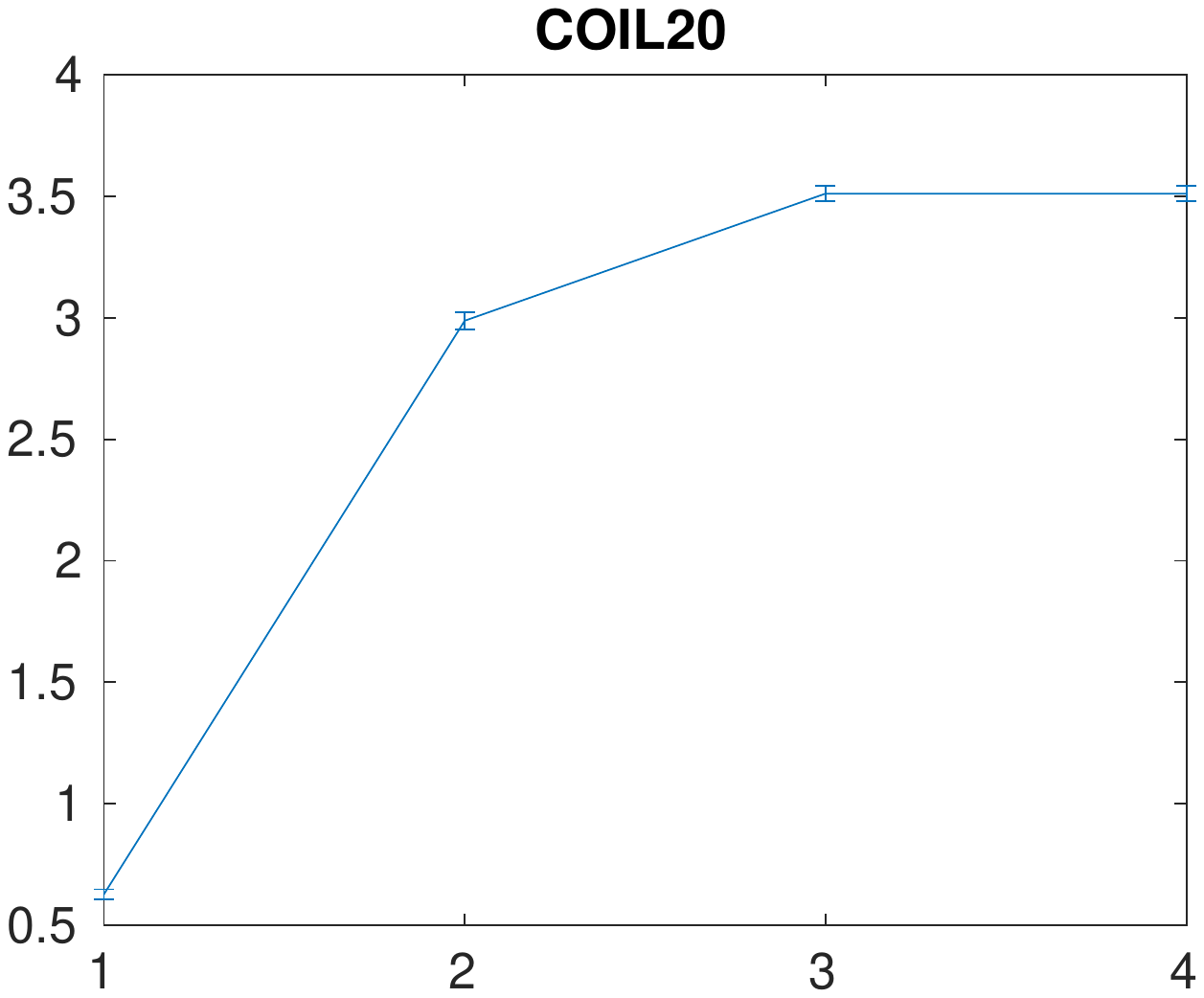}}
	\subfigure[COIL100]{\label{4b}				
		\includegraphics[width=0.3\linewidth]{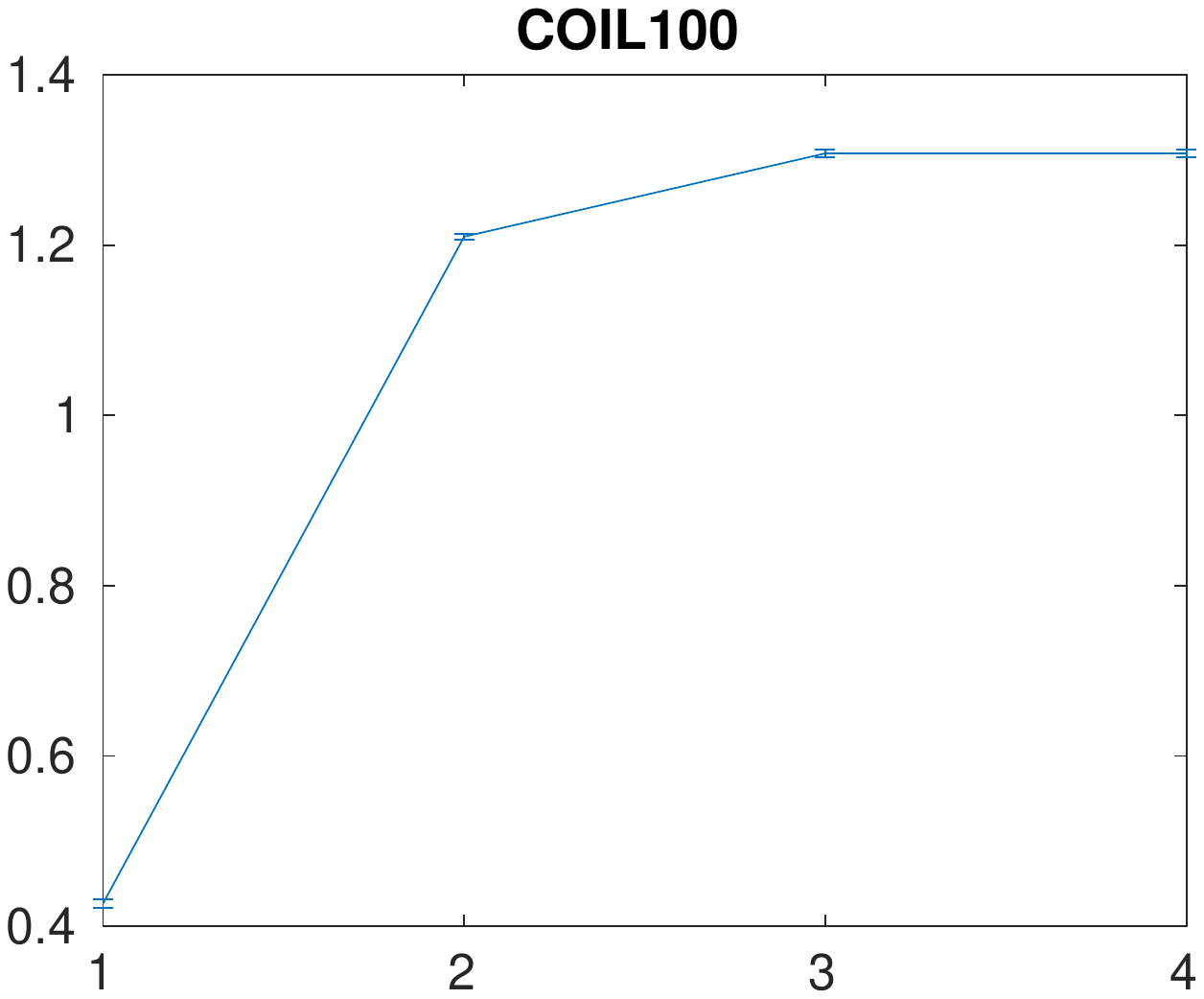}}
	\subfigure[Isolet]{\label{4c}		
		\includegraphics[width=0.3\linewidth]{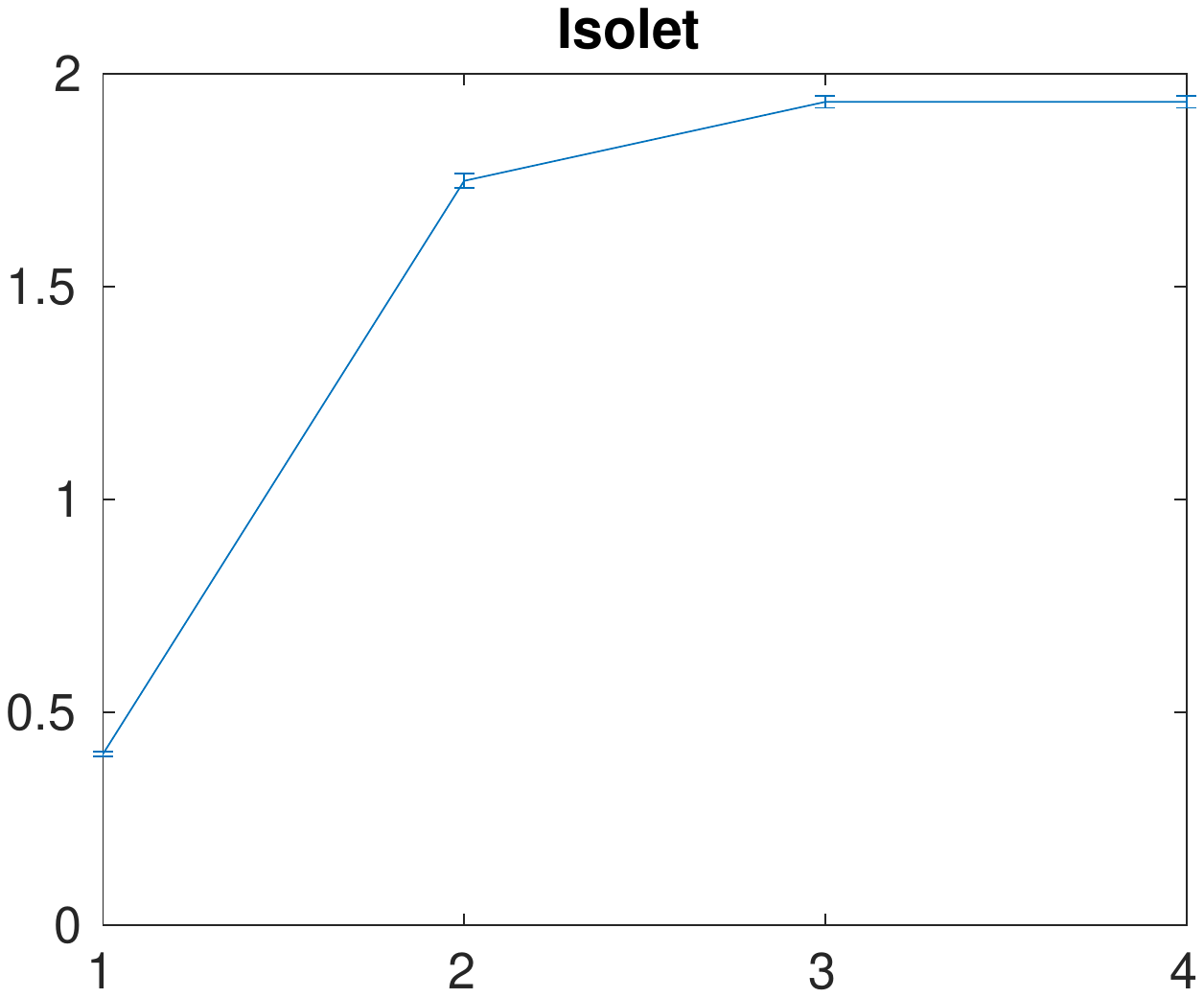}}
	\vfill
	\subfigure[20Newsgroups]{\label{4d}		
		\includegraphics[width=0.3\linewidth]{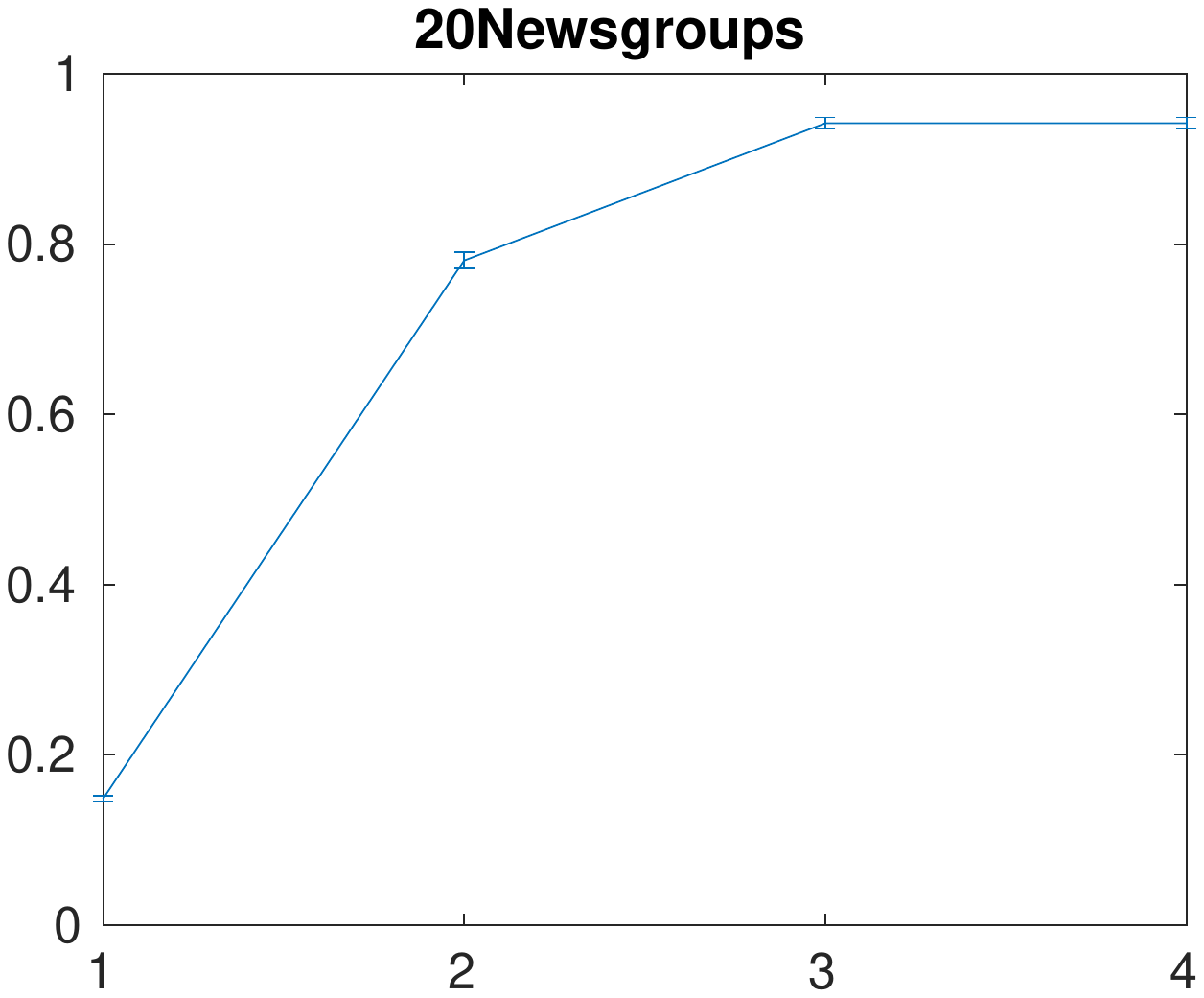}}
	\subfigure[CMU PIE]{\label{4e}				
		\includegraphics[width=0.3\linewidth]{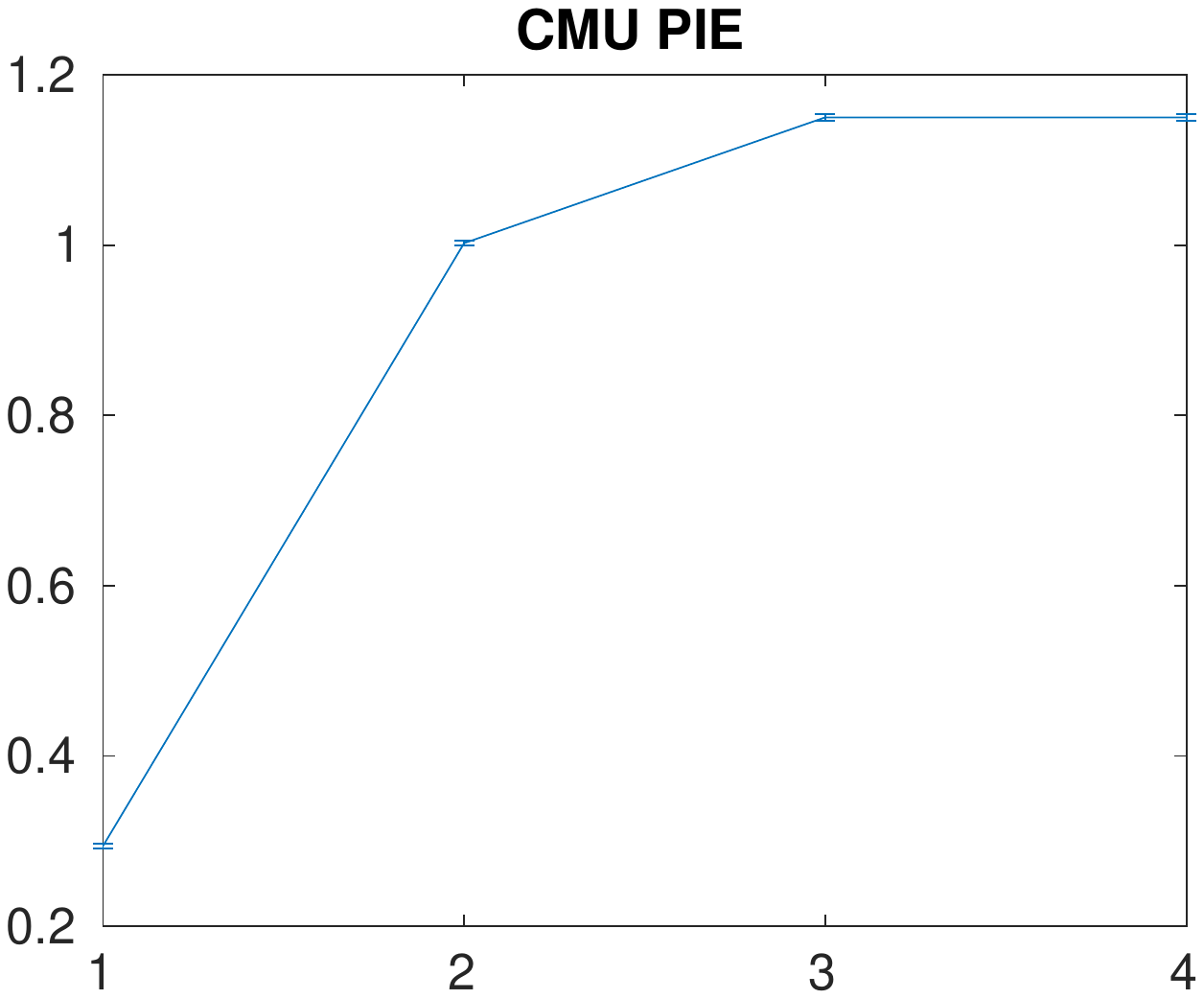}}
	\subfigure[YaleB]{\label{4f}		
		\includegraphics[width=0.3\linewidth]{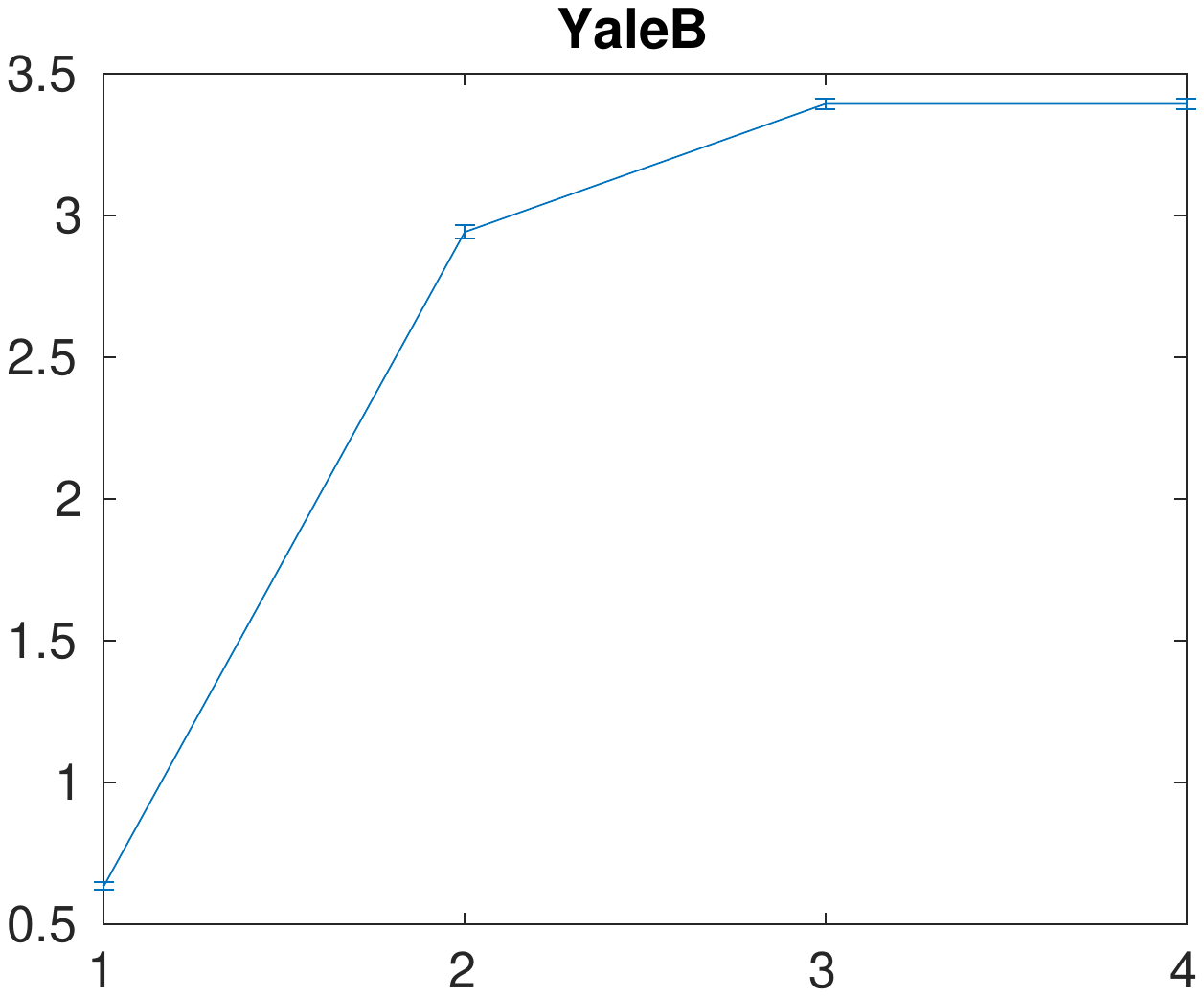}}	
	\caption{The standard deviation and mean of objective function value in the $50$ rounds of the experiments.}	
	\label{converge}	
\end{figure*}

Experimental results on both synthetic and real-world datasets demonstrate that the proposed method achieves comparable or even better performance than other methods. We can attribute this improvement to the combination of avoiding optimal mean strategy and self-weighted between-class distance criterion based on $\ell_{2,1}$-norm.
Benefiting from the efficient re-weighted optimization algorithm, SWRLDA enjoys a very fast convergence speed and is able to obtain the global optimal solution after optimization. 
Particularly, with a reduced computation complexity superior to most other methods, SWRLDA is more efficient for dealing with large-scale dimension reduction problems.

\section{Conclusion}
In this paper, we propose a novel formulation of self-weighted robust LDA for multi-class classification with edge classes, termed as SWRLDA. Considering the fact that the total mean of data is easily dominated by edge classes, the between-class scatter matrix is equivalently redefined as the difference between every two class centers to automatically avoid the calculation of optimal mean. Since the $\ell_2$-norm based distance criterion is prone to overemphasize the class pairs with large distances and neglects the small ones, a $\ell_{2,1}$-norm based between-class distance criterion is exploited in SWRLDA to further enhance its robustness to edge classes. By optimizing the objective function of SWRLDA with an effective re-weighted algorithm, self-adaptive weights are learned for each class pair to characterize its contribution to classification without tuning additional parameters. Extensive experiments on two synthetic and six real-world datasets demonstrate demonstrate the effectiveness and superiority of the proposed approach.	

\section*{Acknowledgments}
This work is supported by the program of China Scholarship Council, the National Key Research and Development Program of China (2016YFB1000903), the National Nature Science Foundation of China (No. 61872287, No. 61772411, No. 61702415 and No. 61532015), the Innovative Research Group of the National Natural Science Foundation of China (No. 61721002), the Innovation Research Team of Ministry of Education (IRT\underline{\hspace{0.5em}}17R86), the National Natural Science Foundation of China [grant no. 61922064], in part by the Zhejiang Provincial Natural Science Foundation [grant no. LR17F030001].

\bibliographystyle{unsrt}  
\bibliography{references}  

\end{document}